\documentclass[mnsc,nonblindrev]{informs3noheader} %

\OneAndAHalfSpacedXI 

\usepackage{amsmath, amssymb, mathtools, bm}
\usepackage{booktabs}
\usepackage{enumitem}
\usepackage{hyperref}
\usepackage{cleveref}
\usepackage{comment}
\usepackage{caption}
\usepackage{subcaption}
\usepackage{float}

\newcommand{\E}{\mathbb{E}}

\usepackage{bbm}

\newcommand*{\horzbar}{\rule[.5ex]{2.5ex}{0.5pt}}

\usepackage{xcolor}

\usepackage{natbib}
 \bibpunct[, ]{(}{)}{,}{a}{}{,}%
 \def\bibfont{\small}%

\TheoremsNumberedThrough     
\ECRepeatTheorems
\EquationsNumberedThrough    
\MANUSCRIPTNO{}

\begin{document}

\RUNAUTHOR{Bastani et al.}
\RUNTITLE{False Promises in Data-Driven Decisions}

\TITLE{Winner's Curse Drives False Promises in Data-Driven Decisions: A Case Study in Refugee Matching}

\ARTICLEAUTHORS{%
\AUTHOR{Hamsa Bastani}
\AFF{Wharton School, \EMAIL{hamsab@wharton.upenn.edu}} 
\AUTHOR{Osbert Bastani}
\AFF{University of Pennsylvania, \EMAIL{obastani@seas.upenn.edu}}
\AUTHOR{Bryce McLaughlin}
\AFF{Wharton School, \EMAIL{brycemcl@wharton.upenn.edu}}

} 

\ABSTRACT{A major challenge in data-driven decision-making is accurate policy evaluation---i.e., guaranteeing that a learned decision-making policy achieves the promised benefits. A popular strategy is \emph{model-based} policy evaluation, which estimates a model from data to infer counterfactual outcomes. This strategy is known to produce unwarrantedly optimistic estimates of the true benefit due to the \emph{winner's curse}. We searched the recent literature on data-driven decision-making, identifying a sample of 55 papers published in the \textit{Management Science} in the past decade; all but two relied on this flawed methodology. Several common justifications are provided: (1) the estimated models are accurate, stable, and well-calibrated, (2) the historical data uses random treatment assignment, (3) the model family is well-specified, and (4) the evaluation methodology uses sample splitting. Unfortunately, we show that no combination of these justifications avoids the winner's curse. First, we provide a theoretical analysis demonstrating that the winner's curse can cause large, spurious reported benefits even when all these justifications hold. Second, we perform a simulation study based on the recent and consequential data-driven refugee matching problem. We construct a synthetic refugee matching environment (calibrated to closely match the real setting) but designed so that no assignment policy can improve expected employment compared to random assignment. Model-based methods report large, stable gains of around 60\% even when the true effect is zero; these gains are on par with improvements of 22--75\% reported in the literature. Our results provide strong evidence against model-based evaluation.
}
\KEYWORDS{winner's curse, data-driven decision-making, policy optimization and evaluation}

\maketitle

\section{Introduction}
\label{sec:intro}

The past two decades have seen tremendous gains in the accuracy of statistical and predictive models, due to a confluence of data availability, computational resources, and algorithmic advances. A major application of such estimation is for data-driven decision-making---e.g., targeting interventions to patients, discounts to customers, or even matching refugees to cities---based on observed covariates. In practice, a decision-maker gathers a historical dataset, estimates a model of both outcomes and treatment effects, and then uses this model to learn a good decision-making policy. 

Naturally, one would want to know if this learned data-driven policy improves the decision-maker's objective (e.g., revenue or social welfare) over the existing policy. This requires \textit{policy evaluation}, which estimates the purported improvement of the data-driven policy over the existing policy. A variety of methods have been proposed for estimating policy improvement with varying assumptions and statistical guarantees. We posit that a key goal of policy evaluation is to guarantee that the true policy improvement lies within a reported confidence interval with high probability---i.e., a decision-maker can have confidence that the proposed policy has benefits consistent with what was promised. We say a policy evaluation methodology is \emph{valid} if it provides this guarantee.

While this sounds straightforward, the key challenge is that we do not observe counterfactual outcomes for decisions that differ from the ones in the historical data. As a consequence, we cannot directly estimate the efficacy of a decision-making policy on a held-out test set the way we can estimate the predictive accuracy of a prediction model.

When the historical dataset uses random treatment assignment (e.g., the data was collected via a randomized controlled trial (RCT)), then the gold standard policy evaluation methodology is inverse probability weighting (IPW)~\citep{horvitz1952generalization, rosenbaum_central_1983} on a held-out test set. Intuitively, IPW only evaluates decisions when the chosen decisions align with those in the historical data, altogether avoiding the need to estimate counterfactual outcomes. We call this a \textit{model-free} approach. These estimates can be combined with standard statistical methodologies to establish valid confidence intervals for policy improvement. However, there are several reasons why IPW may not be feasible. When the action space is large or the covariates are high-dimensional, IPW can have very high variance; yet, these are often the settings where data-driven decisions are most desirable. Alternatively, the treatment assignment may not be random, in which case IPW is invalid.\footnote{We are also assuming the treatment probabilities are known; they can alternatively be estimated from data, but this strategy relies on the assumption that there are no unobserved confounders.}

In these cases, practitioners often adopt a popular alternative where they use a model to predict counterfactual outcomes, and then use these counterfactual outcomes to estimate performance improvement. We call this a \textit{model-based} approach. The model can range from a simple linear model to a dynamic Markov decision process to a modern machine learning model such as a random forests. The key feature is that they estimate policy improvement by using their estimated model to impute unobserved outcomes under counterfactual treatment decisions.

It is well known that using the same estimated counterfactual model for both policy optimization and policy evaluation results in an issue known as the \emph{winner's curse}~\citep{harrison1984decision,smith2006optimizer,andrews_inference_2024,zrnic_flexible_2024}, where the estimated policy improvements are optimistically biased. Intuitively, the optimization procedure exploits estimation errors, choosing decisions whose predicted quality exceeds their true quality. If we use the same model to evaluate these decisions, these estimation errors go uncorrected, often predicting substantial performance improvements where none exist. Existing model-based evaluation methodologies do not correct for this unwarranted optimistic bias, making them invalid.

While the winner's curse is well-known, existing work has only shown it to hold under specific conditions. As a consequence, many papers provide seemingly plausible justifications for using model-based evaluation. To understand the nature of these justifications, we performed a survey of recent \textit{Management Science} papers on data-driven decision-making (detailed in Section~\ref{sec:survey}). Of the 55 papers that report on data-driven policy improvement, all but two use the model-based (rather than model-free) method. Surprisingly, a large number of papers provided no justification. When justifications were provided, they typically fell into one of the following categories:
\begin{itemize}
\item \textbf{J1: Accurate, stable, and calibrated:} The estimated model achieves high accuracy on a held-out test set, is stable across data subsamples (e.g., bootstrap), and is well-calibrated.
\item \textbf{J2: Random treatment assignment:} The historical data has known random assignment.
\item \textbf{J3: Well-specified:} The model family is sufficiently rich to include the true data-generating process (e.g., random forests).
\item \textbf{J4: Sample splitting:} The evaluation splits the historical data into two disjoint samples and trains a different model on each sample, one for policy learning and one for policy evaluation.
\end{itemize}
To see why these justifications are plausible, consider the following argument: (1) if either J1 and J2 hold or J3 holds, then the estimated counterfactual outcomes are unbiased or have small bias; (2) if, in addition, J4 holds, then the estimated policy improvement is independent from the learned policy; (3) therefore, the estimated policy improvement is accurate.

Unfortunately, we show both theoretically and through a simulation study that this argument is incorrect. First, we provide two stylized examples showing that the winner's curse can be arbitrarily large while simultaneously satisfying these justifications. Second, we perform a realistic simulation experiment in the context of the refugee matching problem~\citep{bansak2018improving,ahani2021placement} to show that model-based evaluation can report very large spurious effects even when the reported justifications are met. Our results show that the winner's curse is substantially more pervasive than previously understood. They provide strong evidence against the validity of model-based policy evaluation methods, suggesting that they should not be used.

We highlight the critical need for better policy evaluation methodologies that are valid while addressing the shortcomings of IPW. For instance, recent work has studied promising directions that integrate policy learning and evaluation to provide lower-variance policy improvement estimates while preserving validity---e.g., targeting statistically significant policy improvement instead of purely maximizing expected outcomes~\citep{bastani2025beating,chernozhukov2025policy}, reducing the dimensionality of the policy class to mitigate the winner's curse~\citep{banerjee2025selecting}, or rigorously integrating auxiliary datasets with real-world outcome data~\citep{mandyam_perry:_2025}.

\subsection{Our Contributions}

\paragraph{Theoretical analysis.}
Intuitively, the winner's curse does not arise if the estimated counterfactuals are unbiased and sample splitting is used. However, unbiased estimates are a very strong assumption. In practice, one of two possible shortcomings can arise:
\begin{itemize}
\item \textbf{Misspecification:} A simple parametric model (e.g., linear) is unlikely to match the true data-generating process, even if it achieves a highly accurate fit on the training set distribution. In this case, even with an infinitely large, high-quality dataset, the estimated model is biased. We provide a simple example showing that misspecification can lead to arbitrarily large, unsubstantiated estimated policy improvements even when justifications J1, J2, and J4 hold. This analysis is detailed in Section~\ref{sec:simple_illustration}.
\item \textbf{Regularized:} One way to avoid misspecification is to use modern non-parametric algorithms such as random forests or gradient boosted machines (GBMs). However, these models must be regularized to avoid overfitting. We show that even when the model family is well-specified, the bias induced by regularization in finite sample can cause the winner's curse. While this bias is small and stable on the historical data distribution from which the training data is sampled, it can become significant on the shifted data distribution induced by a different policy. We provide an explicit construction via a toy ridge regression model showing it can lead to arbitrarily large, unsubstantiated estimated policy improvements even when all the above justifications (J1, J2, J3, and J4) hold. This analysis is detailed in Section~\ref{sec:regularized}.
\end{itemize}
The literature specifically on the winner's curse typically focuses on the well-specified setting without sample splitting~\citep{harrison1984decision,smith2006optimizer}; furthermore, they generally rely on the estimated model being noisy (i.e., achieves low accuracy on a held-out test set) and unstable (i.e., parameter estimates can change drastically when the model is trained on new or bootstrapped samples). Our analysis shows that due to regularization, the winner's curse can occur under much more general conditions. Separately, there has been work showing that misspecification can result in bias~\citep{jiang2016doubly, kallus2019intrinsically}; however, they do not show that this bias can be systematically positive, or that it can arise even when the estimated model is highly accurate and stable.

\paragraph{Simulation study.}

Empirically, we perform a simulation experiment in the context of data-driven refugee matching~\citep{bansak2018improving,ahani2021placement}, where the goal is to match refugees to capacity-constrained cities to optimize overall employment outcomes. While model-based evaluations are widely used in the literature, we focus on refugee matching both due to widespread recent interest in this problem as well as the fact that it has been deployed across several countries to make placement decisions for thousands of refugees. Both~\citet{bansak2018improving} and~\citet{ahani2021placement} use the model-based method to evaluate proposed data-driven refugee matching algorithms, estimating 22--75\% gains in refugee employment outcomes from data-driven placement decisions. We design a simulated dataset that closely matches the setting described in~\cite{bansak2018improving}, but constructed so that no policy can outperform random assignment. Thus, any estimated performance gains are spurious by design. Applying the algorithm and model-based evaluation method from~\cite{bansak2018improving} to this dataset, we obtain (spurious) estimated gains in employment outcomes of around 60\%, which is on par with the estimates reported in their paper.
We additionally consider a more sophisticated variation of the model-based evaluation method using bootstrapping~\citep{ahani2021placement},
and find that it also continues to produce substantial, stable estimated improvements even when the ground truth is null.\footnote{To the best of our knowledge, all published estimates for this problem use some variation of the model-based method. Prior to posting this paper, we shared our critique of model-based evaluation methods in private correspondence with~\cite{bansak2018improving} and~\cite{ahani2021placement}; in response,~\citet{bansak2018improving} have started developing model-free estimates to rigorously establish performance gains, and have shared preliminary results that appear promising.}

\subsection{Related Literature}

Policy evaluation methods are broadly categorized into model-free and model-based approaches. Model-free evaluation, rooted in the foundational work of~\cite{horvitz1952generalization} and~\cite{rosenbaum_central_1983}, provides asymptotically unbiased estimates of policy performance. However, these methods exhibit substantial variance, particularly in settings with large action spaces~\citep{saito2022off}. The granularity required for data-driven decision-making undermines standard policy evaluation mechanisms: as the support of the historical data becomes sparse relative to the covariate/action space, inverse probability weighting (IPW) yields extreme weights and unstable estimates. While doubly robust estimation~\citep[notably the augmented IPW estimator, see][]{robins1994estimation, robins1995semiparametric} improves efficiency, the variance remains problematic in finite samples.

Given these limitations,
many studies instead rely on model-based evaluation, which is subject to the winner's curse. The winner's curse was originally studied in auction theory, where the winner typically overestimates an item's value~\citep{capen1971competitive}; the concept was formalized for decision analysis by~\cite{harrison1984decision} and~\cite{smith2006optimizer}. In this work, we rebut common justifications used in the literature to dismiss the winner's curse.

Recent literature addresses these challenges through three primary mechanisms. First, inference methods account for selection effects, either by constructing confidence intervals conditional on the selection event~\citep{andrews_inference_2024} or by applying corrections~\citep{zrnic_flexible_2024}. Similarly,~\cite{gupta2024debiasing} and~\cite{xu2025winner} propose debiasing in-sample performance using gradient-based and bootstrap-based corrections respectively. Second, variance reduction can be achieved by pooling treatments~\citep{banerjee2025selecting} or leveraging auxiliary data~\citep{mandyam_perry:_2025}. Third, the optimization objective itself can be modified; for example,~\cite{swaminathan2015batch} penalize variance to constrain the proposed policy to be close to the support of the existing policy, so that it can be evaluated downstream. Similar principles underpin ``pessimistic learning'' in offline reinforcement learning~\citep{kumar2020conservative}. Most recently, \cite{chernozhukov2025policy} and \cite{bastani2025beating} propose learning policies that target statistical significance on a held-out test set, with the latter explicitly characterizing the Pareto frontier of policies that trade off expected performance and statistical significance.

\section{A Survey of the Literature} \label{sec:survey}

To understand the extent to which model-based policy optimization is currently used in the literature---and consequently, the prevalence of the winner's curse---we surveyed recent papers published in \textit{Management Science}. We focused on this journal as a leading outlet for quantitative research at the intersection of data analysis (estimation) and decision-making (optimization).

We constructed a sample of recent papers using the EBSCO Business Complete database. We searched for papers published in the last ten years (2015--2024) that contained keywords related to both data-driven estimation and policy optimization. Specifically, we used the query \texttt{(data OR estim*) AND (optim* OR policy)} and restricted the search to peer-reviewed articles in \textit{Management Science}. This initial search yielded 875 results.
To efficiently screen this large corpus, we employed a two-stage classification procedure involving a Large Language Model (LLM) followed by manual verification. In the first stage, we fed the titles and abstracts of the 875 papers to the GPT-5.2 API with high reasoning. We prompted the model to identify papers that likely met three specific criteria: (1) one of the primary contributions involves estimating a model (e.g., demand, welfare) from data; (2) the estimated model is used to optimize a decision (e.g., pricing, routing, allocation); and (3) the abstract explicitly states an estimated improvement resulting from this optimized decision, suggesting that this performance gain is a headline contribution. To calibrate the model, we provided 39 manually labeled examples (both positive and negative) and instructed it to follow the provided reasoning. This automated screening process identified 111 papers as potentially fitting our criteria (classified as ``yes'' or ``unclear'').

In the second stage, we manually downloaded and reviewed the full text of these 111 papers. We retained only those fitting the ``estimate-then-optimize'' framework where evaluation was performed on historical data with unknown counterfactuals. We further excluded papers where the optimization was purely theoretical without empirical estimation, or where the estimation was not central to the policy construction (e.g., purely descriptive regression analysis followed by a simulation-based analysis).

\begin{table}
    \centering
    \caption{Survey of Estimate-Then-Optimize Papers in \textit{Management Science}}
    \label{tab:survey_results}
    \begin{tabular}{lc}
        \toprule
        \textbf{Search Criteria} & \textbf{Count} \\
        \midrule
        (1) Initial Search Results (Keyword match, last 10 years) & 875 \\
        (2) Potentially Relevant (AI Screened via GPT-5.2) & 111 \\
        (3) Confirmed Estimate-Then-Optimize (Manual Review) & 55 \\
        (4) Susceptible to Winner's Curse (Model-Based Evaluation) & 53 \\
        \bottomrule
    \end{tabular}
    \medskip
    \parbox{0.9\textwidth}{\small \textit{Notes:} Rows 1 and 2 represent a sample of papers identified only via their titles and abstracts. Row 3 represents papers confirmed via manual review of the full text to follow an estimate-then-optimize pipeline. Row 4 indicates the subset of Row 3 that rely on model-based evaluation.}
\end{table}

As summarized in Table~\ref{tab:survey_results}, this process yielded 55 papers. Of these, 53 (96\%) relied on model-based evaluation---specifically, they evaluated the performance of their proposed policy using the same data generating process or specific dataset used for estimation. As demonstrated in the rest of this paper, this methodology creates a direct channel for the winner's curse to inflate performance estimates. This survey suggests that the issue is not merely a theoretical curiosity but a pervasive feature of the current state of data-driven decision-making in the field.

\begin{table}[t]
\centering
\caption{Survey of model-based counterfactual practices in estimate-then-optimize papers}
\label{tab:survey_counterfactual_practices}
\begin{tabular}{l r}
\toprule
 & Count (\%) \\
\midrule
\textbf{Model-based evaluation} \\
Optim \& eval use same model structure & 53 (100\%) \\
Optim \& eval use exact same model & 35 (66\%) \\
\addlinespace
\textbf{Justifications for model-based counterfactuals} \\
Accuracy & 25 (47\%) \\
Stability & 19 (36\%) \\
Calibration & 7 (13\%) \\
Sample splitting & 4 (8\%) \\
\addlinespace
\textbf{Confidence Intervals for estimated gains} \\
Confidence interval reported & 9 (17\%) \\
\addlinespace
\textbf{Model class used for estimation} \\
Simple parametric statistical models & 35 (66\%) \\
Structural models & 13 (25\%) \\
Machine learning models & 5 (9\%) \\
\addlinespace
\textbf{Problem domain} \\
Revenue management & 24 (45\%) \\
Healthcare & 16 (30\%)\\
Service & 11 (21\%)\\
Finance/marketing & 2 (4\%) \\
\bottomrule
\end{tabular}
\vspace{0.5ex}
\begin{flushleft}
\footnotesize
Notes: “Simple parametric” includes linear/logit/MNL (both offline and online) as well as related classical econometric specifications; “structural” includes MDP/queueing/SIR; ``machine learning'' includes KNN/KDE/DQN and uncertainty sets.
\end{flushleft}
\end{table}

Table~\ref{tab:survey_counterfactual_practices} summarizes the specific characteristics of these 53 papers. We start with the mechanics of the evaluation. Thirty-five of the 53 papers (66\%) evaluate the performance of their optimized policy using the \textit{exact same} estimated model that was used to generate the policy. This is the most severe form of the winner's curse, as the optimizer is free to exploit every residual error in the model. The remaining papers generally use the same model structure but re-estimate parameters, which, as we show in Section~\ref{sec:regularized}, does not eliminate the bias.

How do these papers justify this approach? Most do not. Among those that do, the most commonly cited defense is predictive accuracy. Twenty-five papers (47\%) argue that because the underlying estimated model achieves high accuracy (e.g., low MSE on a held-out test set), the resulting policy evaluation is valid. Nineteen papers (36\%) appeal to stability (e.g., the estimated gains are stable across bootstrapped subsamples). Calibration of predictions is cited less frequently, appearing in only seven papers (13\%). Sample splitting---which ensures that the evaluation and optimization procedures do not re-use the same data---is rare, appearing in only four papers (8\%). As we demonstrate in Sections~\ref{sec:simple_illustration} \&~\ref{sec:regularized}, none of these justifications---accuracy, stability, calibration, or sample splitting---are sufficient to rule out spurious gains from the winner's curse.

The vast majority of papers do not address the uncertainty of the estimated gains at all. Only 9 papers (17\%) report confidence intervals for their policy improvements. None attempt to construct these intervals in a way that accounts for the optimization process itself. This lack of statistical inference is particularly troubling given the magnitude of the claimed improvements, which may rely on exploiting noisy predictions from tail events or rare subpopulations.

Two additional points are worth noting regarding the model classes and problem domains. First, the winner's curse is not limited to ``black box'' machine learning methods. Thirty-five of the papers (66\%) use simple parametric models (such as linear or logit regressions, either via offline or online learning), 13 (25\%) use structural models (such as MDPs or queueing), and only five (9\%) use blackbox machine learning models (such as KNNs or DQNs). This suggests that the issue is fundamental to the estimate-then-optimize paradigm rather than the adoption of blackbox machine learning methods. Second, the practice is widespread across domains. Revenue management (45\%) and healthcare (30\%) are the most represented fields, likely due to the natural fit of optimization techniques in these areas. In short, our survey suggests that model-based evaluation is the default standard in the literature, despite its inherent susceptibility to the winner's curse.

\section{Misspecification Causes the Winner's Curse} \label{sec:simple_illustration}

In this section, we provide a theoretical construction where misspecification drives the winner's curse even when justifications J1 (accurate, stable, and calibrated model), J2 (historical data uses random treatment assignment), and J4 (sample splitting) all hold. While our example is stylized, as we discuss in Section~\ref{subsec:connection_practice}, the key aspects of our example that drive the winner's curse---namely, the exploitation of estimation errors in regions of sparse data---correspond to natural behaviors of simple estimators (e.g., ordinary linear regression) in high-dimensional settings.

\subsection{Theoretical Analysis}

We consider a simple setting with a single real-valued treatment $t\in[0,1]$ and an outcome $y\in\mathbb{R}$, without covariates. We consider a noiseless true model $y=f^*(t)$, where $f^*$ is a piecewise linear function with two hyperparameters $t_0\in(0,1)$ and $y_{\text{max}}\in\mathbb{R}$:
\begin{align*}
f^*(t)=\begin{cases}
\frac{ty_{\text{max}}}{t_0}&\text{if }t\le t_0 \\
\frac{(1-t)y_{\text{max}}}{1-t_0}&\text{if }t>t_0.
\end{cases}
\end{align*}
This function is illustrated as the solid black line in Figure~\ref{fig:example}.

\begin{figure}
\centering
\includegraphics[width=0.7\textwidth]{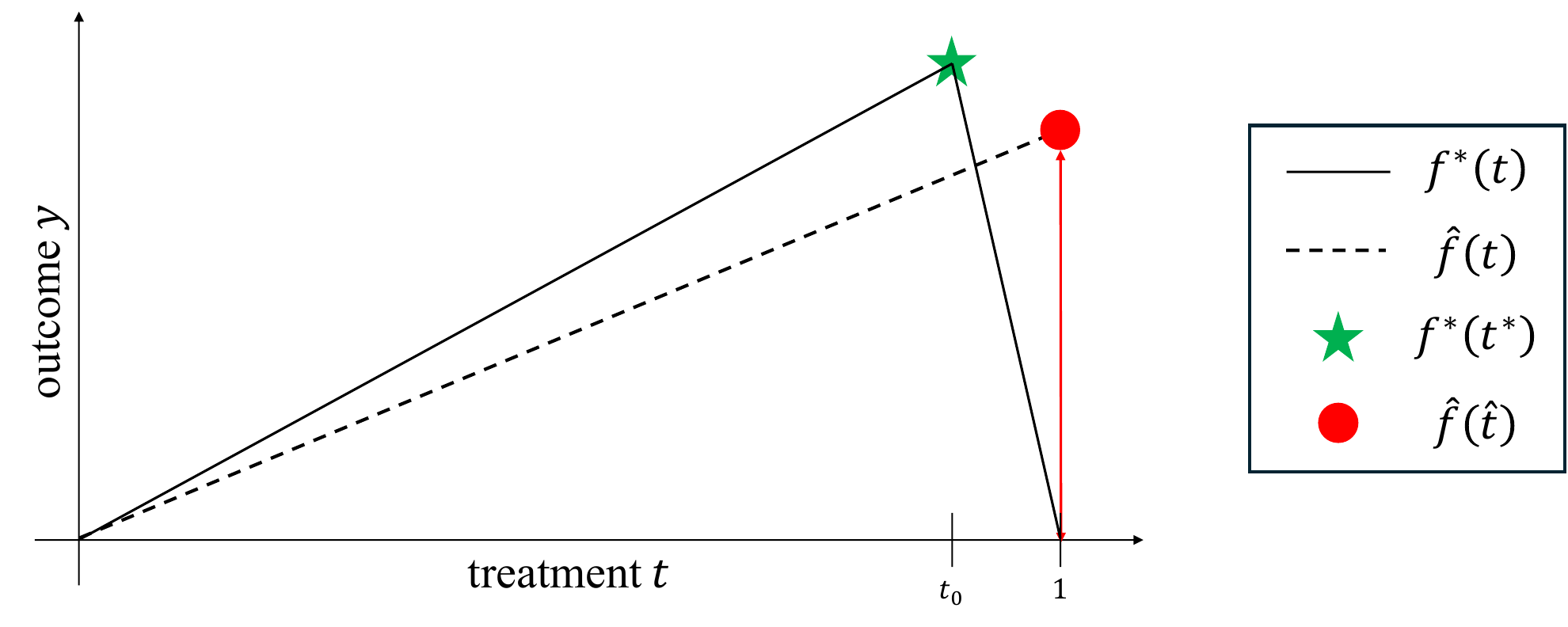}
\caption{Illustration of stylized example. The $x$-axis is the treatment $t$, the $y$-axis is the performance outcome $y$, the solid black line is $f^*(t)$, the dashed black line is $\hat{f}(t)$, the green star denotes $f^*(t^*)$ at $t^*=\operatorname*{\arg\max}_{t\in[0,1]}f^*(t)$, the red circle denotes $\hat{f}(\hat{t})$ at $\hat{t}=\operatorname*{\arg\max}_{t\in[0,1]}\hat{f}(t)$, and the red line denotes the optimistic bias $\hat{f}(\hat{t})-f^*(\hat{t})$ due to the winner's curse.}
\label{fig:example}
\end{figure}

To estimate this model, we consider historical data collected using a uniform distribution over treatments $t\sim\text{Uniform}([0,1])$. Specifically, a single example in our historical dataset is a pair $(t,y)$, where $y=f^*(t)$. For simplicity, we consider the infinite data limit, where we observe the exact distribution over $t$. We assume we are fitting a linear function $\beta t$, a standard model family that achieves high out-of-sample accuracy in this setting.

\begin{lemma} \label{lem:simple-1}
The ordinary least squares (OLS) estimate of $\beta\in\mathbb{R}$ is
\begin{align*}
\hat{\beta}=\frac{(1+t_0)y_{\text{max}}}{2}.
\end{align*}
\end{lemma}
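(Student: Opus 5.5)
Since the model $\beta t$ has no intercept and we are in the infinite-data limit with $t\sim\text{Uniform}([0,1])$, the OLS estimate minimizes the population risk $\E[(\beta t - f^*(t))^2]$. Setting the derivative in $\beta$ to zero gives the normal equation $\hat{\beta}\,\E[t^2]=\E[t f^*(t)]$. Hence
\begin{align*}
\hat{\beta}=\frac{\E[t f^*(t)]}{\E[t^2]}.
\end{align*}
The denominator is $\E[t^2]=\int_0^1 t^2\,dt=1/3$. The whole proof therefore reduces to computing the numerator $\E[t f^*(t)]=\int_0^1 t f^*(t)\,dt$.

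I will split this integral at the kink $t_0$, matching the two pieces of $f^*$.

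On $[0,t_0]$ the integrand is $t^2 y_{\text{max}}/t_0$, which integrates to $t_0^2 y_{\text{max}}/3$.

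On $(t_0,1]$ the integrand is $t(1-t)y_{\text{max}}/(1-t_0)$. Using $\int_{t_0}^1 t(1-t)\,dt = \tfrac16 - \tfrac{t_0^2}{2} + \tfrac{t_0^3}{3}$, I would then factor out $(1-t_0)$. The identity $1-3t_0^2+2t_0^3=(1-t_0)^2(1+2t_0)$ gives $\tfrac16 - \tfrac{t_0^2}{2} + \tfrac{t_0^3}{3} = \tfrac{(1-t_0)^2(1+2t_0)}{6}$. So the second piece equals $y_{\text{max}}(1-t_0)(1+2t_0)/6$.

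Adding the two pieces:
\begin{align*}
\E[t f^*(t)] = \frac{y_{\text{max}}}{6}\bigl(2t_0^2 + 1 + t_0 - 2t_0^2\bigr) = \frac{(1+t_0)\,y_{\text{max}}}{6}.
\end{align*}
Dividing by $1/3$ yields $\hat{\beta}=(1+t_0)y_{\text{max}}/2$, as claimed.

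There is no real obstacle. The only step needing care is the algebra on the second piece, namely factoring the cubic so that the $1/(1-t_0)$ cancels cleanly. As a sanity check, I would confirm the endpoints: as $t_0\to 1$ we get $f^*(t)=y_{\text{max}}t$ and $\hat{\beta}\to y_{\text{max}}$, as it should. I would also note explicitly that the model is fit without an intercept, since the OLS formula above depends on that.
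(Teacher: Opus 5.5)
Your proof is correct and follows the paper's argument: the no-intercept normal equation $\hat\beta=\E[t^2]^{-1}\E[tf^*(t)]$ with $\E[t^2]=1/3$, and $\E[tf^*(t)]$ computed by splitting the integral at $t_0$. The only cosmetic difference is in the second piece: the paper substitutes $s=1-t$, whereas you integrate directly and use the factorization $1-3t_0^2+2t_0^3=(1-t_0)^2(1+2t_0)$; both routes give $(1-t_0)(1+2t_0)y_{\text{max}}/6$.
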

\proof{Proof of Lemma~\ref{lem:simple-1}}
Note that the OLS estimator is $\hat\beta=\mathbb{E}[t^2]^{-1}\mathbb{E}[tf^*(t)]$. We have
\begin{align*}
\mathbb{E}[t^2]=\int_0^1t^2dt=\frac{1}{3},
\end{align*}
and
\begin{align*}
\mathbb{E}[tf^*(t)]
&=\int_0^{t_0}\frac{t^2y_{\text{max}}}{t_0}dt+\int_{t_0}^1\frac{t(1-t)y_{\text{max}}}{1-t_0}dt \\
&=\frac{t_0^2y_{\text{max}}}{3}+\int_0^{1-t_0}\frac{s(1-s)y_{\text{max}}}{1-t_0}ds \\
&=\frac{t_0^2y_{\text{max}}}{3}+\frac{(1-t_0)y_{\text{max}}}{2}-\frac{(1-t_0)^2y_{\text{max}}}{3} \\
&=\frac{(1+t_0)y_{\text{max}}}{6}.
\end{align*}
The claim follows. \Halmos

Thus, our model is $\hat{f}(t)=\hat\beta t$; this function is visualized as the dashed line in Figure~\ref{fig:example}. Our first result characterizes the winner's curse for this model.
\begin{proposition} \label{prop:simple-1}
The optimistic bias from the winner's curse is $\frac{(1+t_0)y_{\text{max}}}{2}$.
\end{proposition}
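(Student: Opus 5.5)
Following the caption of Figure~\ref{fig:example}, I take the optimistic bias to be $\hat f(\hat t)-f^*(\hat t)$, where $\hat t=\operatorname*{\arg\max}_{t\in[0,1]}\hat f(t)$ and $\hat f(t)=\hat\beta t$ with $\hat\beta$ given by Lemma~\ref{lem:simple-1}. The plan is to identify $\hat t$, evaluate $\hat f$ and $f^*$ there, and subtract.

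\emph{Locate the maximizer.} I will assume $y_{\text{max}}>0$, which is the case depicted in the figure. Since $t_0\in(0,1)$, Lemma~\ref{lem:simple-1} gives $\hat\beta=\frac{(1+t_0)y_{\text{max}}}{2}>0$. Hence $\hat f$ is strictly increasing on $[0,1]$, and the maximizer is the right endpoint, $\hat t=1$. The estimated value of the learned decision is therefore
\begin{align*}
\hat f(\hat t)=\hat\beta=\frac{(1+t_0)y_{\text{max}}}{2}.
\end{align*}

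\emph{Evaluate the truth and subtract.} Because $1>t_0$, the second branch of $f^*$ applies, so $f^*(1)=\frac{(1-1)y_{\text{max}}}{1-t_0}=0$. The bias is then
\begin{align*}
\hat f(\hat t)-f^*(\hat t)=\frac{(1+t_0)y_{\text{max}}}{2}-0=\frac{(1+t_0)y_{\text{max}}}{2},
\end{align*}
which is the claimed expression. I will also note that sample splitting (J4) does not change this, because in the infinite-data limit every split yields the same $\hat\beta$. Likewise, random assignment (J2) holds by construction.

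\emph{Main obstacle.} There is no computational difficulty; the only delicate point is the sign convention. If $y_{\text{max}}<0$, then $\hat\beta<0$, so $\hat t=0$ and $\hat f(0)=f^*(0)=0$, giving zero bias. The proposition as stated therefore implicitly requires $y_{\text{max}}>0$, and I will state this assumption explicitly at the start of the proof.
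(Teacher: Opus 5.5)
Your proof is correct and matches the paper's argument step for step: $\hat\beta>0$ forces $\hat t=1$, where $\hat f(1)=\hat\beta$ and $f^*(1)=0$ (the paper writes $f^*(\hat y)$, which is a typo for $f^*(\hat t)$). Stating $y_{\text{max}}>0$ explicitly is a fair refinement: the paper allows $y_{\text{max}}\in\mathbb{R}$ but its proof silently uses positivity, and for $y_{\text{max}}<0$ the bias is indeed zero, as you note.
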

\proof{Proof of Proposition~\ref{prop:simple-1}}
The best treatment according to our model is $\hat{t}=\operatorname*{\arg\max}\hat{f}(t)=1$. It has estimated outcome $\hat{y}=\hat{f}(\hat{t})=\frac{(1+t_0)y_{\text{max}}}{2}$ and true outcome $\tilde{y}=f^*(\hat{y})=0$, so the bias is $\hat{y}-\tilde{y}=\frac{(1+t_0)y_{\text{max}}}{2}$. \Halmos

For $t_0\ge\frac{1}{2}$, we have $\hat{y}-\tilde{y}\ge\frac{y_{\text{max}}}{2}$; thus, the winner's curse can be arbitrarily large. Note that the estimated performance is comparable to the true optimal performance (the optimal treatment is $t^*=\operatorname*{\arg\max}_{t\in[0,1]}f^*(t)=t_0$ and its true outcome is $y^*=f^*(t^*)=y_{\text{max}}$). Yet, the realized performance is zero—i.e., the estimated outcome is entirely illusory due to the winner's curse.

Crucially, this happens even though the model is highly accurate on the historical data
distribution (i.e., achieving arbitrarily low MSE).
\begin{proposition} \label{prop:simple-2}
Letting $t_0=1-\epsilon$, if $\epsilon\le\frac{1}{2}$, then the mean-squared error satisfies
\begin{align*}
\text{MSE}(\hat{f})\le2\epsilon y_{\text{max}}^2+\frac{\epsilon^2y_{\text{max}}^2}{3}.
\end{align*}
\end{proposition}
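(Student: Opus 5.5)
Using Lemma~\ref{lem:simple-1} with $t_0=1-\epsilon$, we have $\hat\beta=\frac{(2-\epsilon)y_{\text{max}}}{2}=(1-\frac{\epsilon}{2})y_{\text{max}}$. The MSE is $\mathbb{E}[(\hat f(t)-f^*(t))^2]$ with $t\sim\text{Uniform}([0,1])$. The plan is to split this integral at $t_0$ into the region $[0,t_0]$, where the model and the truth are both linear through the origin, and the region $(t_0,1]$, which has length $\epsilon$.

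\textbf{Region $[0,t_0]$.} Here $f^*(t)=\frac{t y_{\text{max}}}{1-\epsilon}$. The error is therefore $t\,(\hat\beta-\frac{y_{\text{max}}}{1-\epsilon})$, and the contribution is $(\hat\beta-\frac{y_{\text{max}}}{1-\epsilon})^2\,\frac{t_0^3}{3}$. Simplifying the slope gap gives
\begin{align*}
\frac{y_{\text{max}}}{1-\epsilon}-\Bigl(1-\frac{\epsilon}{2}\Bigr)y_{\text{max}}=\frac{y_{\text{max}}\,\epsilon(1+\epsilon)}{2(1-\epsilon)}.
\end{align*}
Multiplying by $t_0^3=(1-\epsilon)^3$, the contribution equals $\frac{\epsilon^2(1+\epsilon)^2(1-\epsilon)y_{\text{max}}^2}{12}$. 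Since $\epsilon\le\frac12$, we have $(1+\epsilon)^2(1-\epsilon)\le 4$, so this term is at most $\frac{\epsilon^2y_{\text{max}}^2}{3}$, which is exactly the second term of the bound.

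\textbf{Region $(t_0,1]$.} Both $\hat f(t)$ and $f^*(t)$ lie in $[0,y_{\text{max}}]$ here. For $\hat f$ this uses $\hat\beta\le y_{\text{max}}$ and $t\le1$, together with $y_{\text{max}}\ge0$, which is implicit in the setup. Hence $|\hat f-f^*|\le y_{\text{max}}$, and the contribution is at most $\epsilon y_{\text{max}}^2$. This is already below the $2\epsilon y_{\text{max}}^2$ allowed, so there is slack, and a crude bound suffices. An alternative route is $(a-b)^2\le 2a^2+2b^2$, which also gives roughly $2\epsilon y_{\text{max}}^2$ and is probably what the stated constant reflects.

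Adding the two contributions gives the claim. The only real obstacle is the algebra for the first-region slope gap and confirming that the cubic factor is bounded by $4$ on $\epsilon\le\frac12$. If the constant turns out tight there, I would instead bound the slope gap crudely by $\epsilon y_{\text{max}}(1+\epsilon)/(2(1-\epsilon))\le \frac{3}{2}\epsilon y_{\text{max}}$ and absorb any leftover into the $2\epsilon y_{\text{max}}^2$ term, using the slack from the second region.
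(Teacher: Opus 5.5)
Your decomposition matches the paper's: split at $t_0=1-\epsilon$, compute the $[0,t_0]$ piece exactly as a squared slope gap times $(1-\epsilon)^3/3$, and bound the length-$\epsilon$ piece separately. However, your slope-gap algebra is wrong, and the inequality built on it is false. Since $(2-\epsilon)(1-\epsilon)=2-3\epsilon+\epsilon^2$, not $2-\epsilon-\epsilon^2$,
\[
\frac{y_{\text{max}}}{1-\epsilon}-\Bigl(1-\frac{\epsilon}{2}\Bigr)y_{\text{max}}
=\epsilon\Bigl(\frac{1}{1-\epsilon}+\frac{1}{2}\Bigr)y_{\text{max}}
=\frac{\epsilon(3-\epsilon)}{2(1-\epsilon)}\,y_{\text{max}},
\]
which is exactly the factor in the paper's first term. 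The first region therefore contributes $\frac{\epsilon^2(3-\epsilon)^2(1-\epsilon)}{12}y_{\text{max}}^2$. For small $\epsilon$ this is about $\frac{3}{4}\epsilon^2y_{\text{max}}^2$; at $\epsilon=0.1$ it is roughly $0.0063\,y_{\text{max}}^2$, against $\epsilon^2y_{\text{max}}^2/3\approx 0.0033\,y_{\text{max}}^2$. So your claim that this piece is at most $\epsilon^2y_{\text{max}}^2/3$ fails for every small $\epsilon$, and the $\epsilon^2/3$ term of the statement cannot be matched term by term. Your fallback constant is also off: the correct uniform bound on the gap for $\epsilon\le\frac{1}{2}$ is $\frac{5}{2}\epsilon y_{\text{max}}$, not $\frac{3}{2}\epsilon y_{\text{max}}$.

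The repair is the absorption you list as a contingency, and it is required, not optional. The function $(3-\epsilon)^2(1-\epsilon)$ is decreasing on $[0,\frac{1}{2}]$ with value $9$ at $\epsilon=0$, so the first piece is at most $\frac{3}{4}\epsilon^2y_{\text{max}}^2$. Your second-region bound of $\epsilon y_{\text{max}}^2$ is fine, since on $(t_0,1]$ both $\hat f$ and $f^*$ lie between $0$ and $y_{\text{max}}$. No sign assumption on $y_{\text{max}}$ is needed, because the MSE is homogeneous of degree two in $y_{\text{max}}$. Hence
\[
\text{MSE}(\hat f)\le\epsilon y_{\text{max}}^2+\frac{3}{4}\epsilon^2y_{\text{max}}^2
=2\epsilon y_{\text{max}}^2+\frac{\epsilon^2y_{\text{max}}^2}{3}-\epsilon y_{\text{max}}^2\Bigl(1-\frac{5\epsilon}{12}\Bigr),
\]
and the subtracted term is nonnegative for $\epsilon\le\frac{1}{2}$. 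The paper's own final inequality likewise holds only for the sum, relying on this same slack in the second region.
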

\proof{Proof of Proposition~\ref{prop:simple-2}}
Note that
\begin{align*}
\text{MSE}(\hat{f})
&=\int_0^{1-\epsilon}\left(\frac{(2-\epsilon)ty_{\text{max}}}{2}-\frac{ty_{\text{max}}}{1-\epsilon}\right)^2dt+\int_{1-\epsilon}^1\left(\frac{(2-\epsilon)ty_{\text{max}}}{2}-\frac{(1-t)y_{\text{max}}}{\epsilon}\right)^2dt \\
&=\left(\frac{1}{1-\epsilon}+\frac{1}{2}\right)^2\cdot\frac{(1-\epsilon)^3\epsilon^2y_{\text{max}}^2}{3}+\int_0^{\epsilon}\left(\frac{(2-\epsilon)(1-s)y_{\text{max}}}{2}-\frac{sy_{\text{max}}}{\epsilon}\right)^2ds \\
&\le\frac{\epsilon^2y_{\text{max}}^2}{3}+2\epsilon y_{\text{max}}^2,
\end{align*}
as claimed. \Halmos

Intuitively, the discrepancy between high accuracy and large winner's curse occurs because the optimized treatment induces a shift in the data distribution---the estimated model performs significantly worse on this shifted distribution.

\subsection{Contextualizing Our Result}

Our model illustrates that even when justifications J1, J2, and J4 in Section~\ref{sec:intro} hold, the winner's curse can result in arbitrarily large, spurious policy improvement estimates. Here, we connect our example to these justifications:
\begin{itemize}
\item \textbf{J1: Accurate, stable, and calibrated:} Our model $\hat{f}$ is perfectly stable since we are in the limit of infinite data. Furthermore, as $t_0\to1$ (equivalently, $\epsilon\to 0$), it is highly accurate. A highly accurate model is by construction well-calibrated, since it can simply always express confidence in its prediction.
\item \textbf{J2: Random treatment assignment:} Our historical data is generated using random treatment assignment (i.e., uniformly random).
\item \textbf{J4: Sample splitting:} Because we take the limit of infinite data, we can retrain the model under independent data splits and the estimated models on each split will be identical. Thus, sample splitting does not affect the policy improvement estimate.
\end{itemize}

\subsection{Connection to Practice}
\label{subsec:connection_practice}

While our example is highly stylized, it proves that the justifications described in Section~\ref{sec:intro} (except the well-specified justification, J3) are insufficient to rule out the winner's curse from a theoretical perspective. To connect this stylized example to practical failure modes, consider the two key factors that drive the winner's curse:
\begin{enumerate}
\item Because the model family is misspecified, the estimated model produces biased counterfactual estimates even in the limit of infinite data.
\item The bias occurs in a small region of the input space (i.e., $t\in(t_0,1]$), allowing the model to achieve very low MSE on the observed data distribution.
\end{enumerate}
Then, the optimizer can exploit the high-error region of the input space to maximize the predicted outcome, leading to the winner's curse.

One notable feature of our example is that the outcome goes from $y_{\text{max}}$ to zero with a tiny change in treatment, which might be unlikely to happen in practice. However, this construction was only necessary since we were using a simple, one-dimensional treatment and no covariates. In realistic examples, the input space is typically high-dimensional, providing significantly more room for this kind of bias to occur. For instance, in the refugee setting, the goal is to assign refugees to cities to maximize employment rate (e.g., after 90 days). In this problem, each city is a discrete treatment $t\in\mathcal{T}=\{1,...,k\}$. This problem additionally involves targeting the treatment based on individual covariates $x\in\mathcal{X}\subseteq\mathbb{R}^d$, such as country of origin, languages spoken, etc. The outcome for a single individual is an indicator for employment $y\in\{0,1\}$. In this case, a policy has form $\pi:\mathcal{X}\to\mathcal{T}$. A standard way to learn such a policy called the \emph{plug-in approach} is to first estimate a model $f^{\beta}:\mathcal{X}\times\mathcal{T}\to[0,1]$ mapping covariate-treatment pairs to outcome probabilities. This function induces a policy
\begin{align*}
\pi^{\beta}(x)=\operatorname*{\arg\max}_{t\in\mathcal{T}}f^{\beta}(x,t),
\end{align*}
i.e., choose the best treatment according to our estimated model.

Consider using a low-capacity model family---e.g., linear models $f^{\beta}(x,t)=\beta^\top\phi(x,t)$ over a feature map $\phi(x,t)\in\mathbb{R}^m$. If the feature map is too simple, then even for the optimal linear model $\beta^*$, some covariate-treatment pairs will naturally be upwards biased---i.e., $f^{\beta^*}(x,t)>\mathbb{E}[y^*\mid x,t]$, where $y^*$ is the true employment outcome---and some will naturally be downwards biased. Even if this average bias is small (or zero), the optimizer can systematically exploit the upwards-biased covariate-treatment pairs to obtain spurious improvements. Furthermore, as the feature dimension $m$ becomes larger, we might expect the winner's curse to become larger since there are more opportunities for bias to arise.

\section{Regularization Causes the Winner's Curse}
\label{sec:regularized}

In Section~\ref{sec:simple_illustration}, we showed that misspecification can result in the winner's curse. Thus, one may hope that the winner's curse is mitigated when using a high-dimensional (e.g., LASSO) or non-parametric model (e.g., random forest); then, we may argue that it is likely that the true data generating process lies in the model family, satisfying J3.

Unfortunately, we show that the winner's curse can still arise when the estimation problem is well-specified as long as the model is \textit{regularized}. This case is substantially more delicate---indeed, it is true that in the well-specified setting (along with the other justifications), the winner's curse disappears asymptotically. However, in finite sample, these models must be regularized to avoid arbitrary overfitting, which creates bias in finite sample. Surprisingly, this bias can actually be \emph{stable}---i.e., the estimated model is biased in the same way with high probability over data sub-samples. Intuitively, regularization is typically applied in a systematic way, forcing ``high variability'' components of the model family to zero until a sufficient amount of data is available to estimate them accurately.

\subsection{Theoretical Analysis}

We show via a simple well-specified linear model that in fact, regularization bias is sufficient to result in the winner's curse. Specifically, consider a linear model with a binary treatment $t\in\{0,1\}$, a single continuous covariate $x\in\mathbb{R}$, and an interaction term $tx$. We assume that $t\sim\text{Bernoulli}(p)$ (for some $p\in[0,1]$) and $x\sim\mathcal{N}(0,1)$ are independent random variables. Also, assume that the noise distribution is $\epsilon\sim\mathcal{N}(0,1)$. We can interpret this setting in the context of our refugee matching case study by considering two locations $A,B$, with $t$ indicating whether to assign a refugee to location $A$ (if $t=0$) or $B$ (if $t=1$). Each individual is represented by a single covariate $x$.

To estimate a useful targeting model, we need to have interaction terms between $t$ and $x$; otherwise, a linear model will conclude that a single treatment is best for all individuals. Thus, we consider the following feature map (which we assume to be well-specified):
\begin{align*}
\phi(x,t)=\begin{bmatrix}t&x&tx\end{bmatrix}^\top,
\end{align*}
which includes a single interaction term $tx$. In this section, we distinguish between \emph{covariates} (i.e., the individual attribute $x$) and \emph{features} (i.e., the components of the feature map $\phi(x,t)$ encoding covariate-treatment pairs into a real-valued vector). Next, we assume the true parameters are
\begin{align*}
\beta^*=\begin{bmatrix}0&b&0\end{bmatrix}^\top.
\end{align*}
Note that under these parameters, there is no value to targeting---the outcome only depends on $x$, and not on the interaction $tx$.

Now, suppose that $p\approx1$; that is, in our historical data, refugees are overwhelmingly assigned to location $B$ (e.g., due to stringent capacity constraints in $A$). In this case, there are very few refugees assigned to location $A$. In the data, we observe that the outcome $y$ is strongly correlated with both $x$ and $tx$ (since $x$ and $tx$ are also strongly correlated), but we cannot distinguish which of these two features is causal; thus, we cannot identify their coefficients. Mathematically, this is because the true covariance matrix
\begin{align*}
\Sigma
=\mathbb{E}_{x,t}[\phi(x,t)\phi(x,t)^\top]
=\begin{bmatrix}
\mathbb{E}_{x,t}[t^2] & \mathbb{E}_{x,t}[tx] & \mathbb{E}_{x,t}[t^2x] \\
\mathbb{E}_{x,t}[tx] & \mathbb{E}_{x,t}[x^2] & \mathbb{E}_{x,t}[tx^2] \\
\mathbb{E}_{x,t}[t^2x] & \mathbb{E}_{x,t}[tx^2] & \mathbb{E}_{x,t}[t^2x^2]
\end{bmatrix}
=\begin{bmatrix}
p & 0 & 0 \\
0 & 1 & p \\
0 & p & p
\end{bmatrix},
\end{align*}
has $\lambda_{\text{min}}(\Sigma)\to0$ as $p\to1$, highlighting the strong correlation between $x$ and $tx$. As a consequence, any \textit{unbiased} estimate of $\beta^*$ (i.e., via OLS) will be highly unstable. Specifically, while we can accurately estimate $\beta^*_1$ from just a few samples, estimating $\beta_2^*,\beta_3^*$ requires many more samples due to insufficient variation in the training data. This is not a problem for prediction---we can still identify the sum $\beta_2^*+\beta_3^*=b$. If our goal is only to achieve good out-of-sample accuracy on the distribution of the historical data, then identifying $\beta_2^*+\beta_3^*$ is sufficient, since $t=1$ with high probability, so
\begin{align*}
\beta_2^*x+\beta_3^*xt=(\beta_2^*+\beta_3^*)x=(\beta_2^*+\beta_3^*)tx.
\end{align*}
We can obtain a stable estimate of $\beta_2^*+\beta_3^*$ by using regularization, specifically, ridge regression. Intuitively, regularization forces the ridge regression estimator to evenly distribute $b$ between $\beta_2^*$ and $\beta_3^*$; indeed, consider the parameter vector
\begin{align*}
\bar\beta^{(\lambda)}=\begin{bmatrix}0&b/2&b/2\end{bmatrix}^\top.
\end{align*}
We can show that the ridge regression estimator $\hat\beta^{(\lambda)}$ concentrates tightly to $\bar\beta^{(\lambda)}$, implying that it is stable.
Furthermore, $\bar\beta^{(\lambda)}$ satisfies the desired relation $\bar\beta^{(\lambda)}_2+\bar\beta^{(\lambda)}_3=b$; thus, this estimator has high accuracy. Specifically, define the \emph{excess mean-squared error (MSE)} to be the generalization error relative to the true model (typically estimated on a held-out test set):
\begin{align*}
\mathcal{E}(\beta)=\mathbb{E}_{x,t}[(f^{\beta}(x,t)-f^{\beta^*}(x,t))^2],
\end{align*}
where $\mathbb{E}_{x,t}$ is the expectation with respect to the training covariates $X\sim\mathcal{N}(0,1)^n$ and training treatments $T\sim\text{Bernoulli}(p)^n$. Then, $\mathcal{E}(\bar\beta^{(\lambda)})$ can be small (e.g., on the order of $1/\sqrt{n}$) even when $\|\bar\beta^{(\lambda)}-\beta^*\|_2$ is large (e.g., on the order of $1$).

However, our goal is policy learning---thus, it is not enough to achieve good accuracy on the observed data distribution. Instead, we need to obtain good accuracy on the eventual data distribution induced by using an alternative policy $\pi:\mathcal{X}\to\mathcal{T}$ to assign treatments. Unfortunately, this requires that we identify $\beta_2^*$ and $\beta_3^*$ separately; the winner's curse arises from this disconnect. In particular, note that $\pi^{\bar\beta^{(\lambda)}}(x)=\mathbbm{1}(x\ge0)$. Because the interaction term $tx$ has estimated coefficient $\bar\beta^{(\lambda)}_3=b/2$, the policy believes it is beneficial for units with $x\ge0$ to receive treatment $t=1$; conversely, for $x\le0$, it believes it is beneficial for units to receive treatment $t=0$.
\begin{definition}
\rm
Given $\beta,\beta'\in\mathcal{B}$, the \emph{policy improvement} of $\beta$ when evaluated using $\beta'$ is
\begin{align*}
\mathcal{P}(\beta;\beta')=\mathbb{E}_x[f^{\beta'}(x,\pi^{\beta}(x))],
\end{align*}
and the \emph{optimism bias} is 
\begin{align*}
\mathcal{R}(\beta;\beta')=\mathcal{P}(\beta;\beta')-\mathcal{P}(\beta;\beta^*).
\end{align*}
\end{definition}
This optimistic bias for a unit with $x\ge0$ is $bx/2$, and the overall optimistic bias of $\pi^{\bar\beta^{(\lambda)}}$ is
\begin{align*}
\mathcal{R}(\bar\beta^{(\lambda)};\bar\beta^{(\lambda)})
\approx\mathbb{E}_x\left[\frac{bx}{2}+\frac{x\mathbbm{1}(bx\ge0)}{2}\right]-\mathbb{E}_x\left[\frac{bx}{2}+\frac{btx}{2}\right]
=\frac{b}{2}\cdot\mathbb{E}_x[x\mathbbm{1}(x\ge0)]
=\frac{b}{2\sqrt{2\pi}}.
\end{align*}
Furthermore, note that stability of $\hat\beta^{(\lambda)}$ implies stability of the optimism bias (formalized in Proposition~\ref{prop:biasstability}); thus, using sample splitting or a bootstrapped estimate would reliably produce the same optimism bias. We formalize these results in the following proposition.
\begin{proposition}
\label{prop:regularized}
Given $\alpha\ge32$ and $\delta\in\mathbb{R}_{>0}$, let $\lambda=\alpha\sqrt{18\log(8/\delta)/n}$ and $p=1-\Delta/2$. Assume that
$n^{1/4}\ge4\sqrt{\log(8/\delta)}/b$. Then, with probability at least $1-\delta$, all of the following hold:
\begin{itemize}
\item \textbf{Accuracy:} The MSE of our estimator satisfies
\begin{align*}
\mathcal{E}(\hat\beta^{(\lambda)})
\le6b^2\alpha\sqrt{\frac{18\log(8/\delta)}{n}}+\frac{72\log(8/\delta)}{n}.
\end{align*}
\item \textbf{Stability:} Our estimated model satisfies
\begin{align*}
\|\hat\beta^{(\lambda)}-\hat\beta^{(\lambda)\prime}\|_2&\le\frac{8b}{\alpha}+4\sqrt{\frac{\log(8/\delta)}{\alpha\sqrt{n}}},
\end{align*}
where $\hat\beta^{(\lambda)}$ and $\hat\beta^{(\lambda)\prime}$ are trained on i.i.d. datasets. Our estimated policy improvement satisfies
\begin{align*}
|\mathcal{P}(\hat\beta^{(\lambda)};\hat\beta^{(\lambda)\prime})-\mathcal{P}(\hat\beta^{(\lambda)\prime\prime};\hat\beta^{(\lambda)\prime\prime\prime})|
\le R
\qquad\text{where}\qquad
R=\frac{24b}{\alpha}+12\sqrt{\frac{\log(8/\delta)}{\alpha\sqrt{n}}}+\frac{128}{\alpha^2}+\frac{32\log(8/\delta)}{\alpha b^2\sqrt{n}},
\end{align*}
where $\hat\beta^{(\lambda)\prime\prime}$ and $\bar\beta^{(\lambda)\prime\prime\prime}$ are similarly trained on i.i.d. datasets.
\item \textbf{Winner's curse:} The optimism bias satisfies
\begin{align*}
\mathcal{R}(\hat\beta^{(\lambda)};\hat\beta^{(\lambda)\prime})
\ge\frac{b}{2\sqrt{2\pi}}-R.
\end{align*}
\end{itemize}
\end{proposition}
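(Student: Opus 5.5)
The plan is to reduce everything to one high-probability event: the ridge estimate $\hat\beta^{(\lambda)}=(\hat\Sigma+\lambda I)^{-1}\hat{C}$ stays close to $\bar\beta^{(\lambda)}=[0,b/2,b/2]^\top$. Here $\hat\Sigma=\frac1n\Phi^\top\Phi$ and $\hat{C}=\frac1n\Phi^\top Y$. We write $Y=\Phi\beta^*+\varepsilon$, which gives
\begin{align*}
\hat\beta^{(\lambda)}=(\hat\Sigma+\lambda I)^{-1}\hat\Sigma\beta^*+(\hat\Sigma+\lambda I)^{-1}\tfrac1n\Phi^\top\varepsilon .
\end{align*}

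\textbf{Concentration.} We will use sub-Gaussian and sub-exponential concentration (the features involve products $x^2$, $tx^2$) plus a union bound over four events, which explains the $\log(8/\delta)$. With probability $1-\delta$ we get $\|\hat\Sigma-\Sigma\|\le\sqrt{18\log(8/\delta)/n}=\lambda/\alpha$ and $\|\frac1n\Phi^\top\varepsilon\|_2\lesssim\sqrt{\log(8/\delta)/n}$.

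\textbf{Population ridge solution.} Next we analyze $(\Sigma+\lambda I)^{-1}\Sigma\beta^*$ directly. Write $p=1-\Delta/2$. The lower $2\times2$ block of $\Sigma$ has eigenvector near $(1,1)/\sqrt2$ with eigenvalue $\approx 1+p\approx 2$, and eigenvector near $(1,-1)/\sqrt2$ with eigenvalue $\approx 1-p$, which is tiny. The vector $\beta^*=(b/2)(1,1)+(b/2)(1,-1)$ then splits into two parts. The $(1,1)$ component is kept up to a factor $2/(2+\lambda)$. The $(1,-1)$ component is shrunk by $(1-p)/(1-p+\lambda)\approx 0$, provided $\Delta$ is small relative to $\lambda$; I would take $\Delta\lesssim\lambda/\alpha$ or simply small. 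So the population ridge solution lies within $O(b\lambda+b\Delta/\lambda)$ of $\bar\beta^{(\lambda)}$.

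\textbf{Perturbation.} Since $\lambda_{\min}(\hat\Sigma+\lambda I)\ge\lambda$, the perturbation bound $\|(A+\lambda I)^{-1}-(B+\lambda I)^{-1}\|\le\|A-B\|/\lambda^2$ applies, and the $\hat\Sigma\beta^*$ versus $\Sigma\beta^*$ difference is controlled similarly. The deterministic error is then $O(b\|\hat\Sigma-\Sigma\|/\lambda)=O(b/\alpha)$, and the noise error is $O(\sqrt{\log(8/\delta)/n}/\lambda)$, which after tuning becomes the $\sqrt{\log/(\alpha\sqrt n)}$ term. This gives $\|\hat\beta^{(\lambda)}-\bar\beta^{(\lambda)}\|_2\le 4b/\alpha+2\sqrt{\log(8/\delta)/(\alpha\sqrt n)}$. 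Stability of the parameters follows from the triangle inequality over two independent datasets, each on its own event (union bound).

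\textbf{Accuracy.} We write $\mathcal{E}(\beta)=(\beta-\beta^*)^\top\Sigma(\beta-\beta^*)$. The large direction $\bar\beta^{(\lambda)}-\beta^*\propto(0,-1,1)$ hits only the small eigenvalue, of order $\Delta$. Combined with the ridge shrinkage bias $b^2\lambda$, this gives the $6b^2\lambda$ term, and the noise contributes the $72\log/n$ term. A cleaner route is the standard ridge oracle bound $\mathcal{E}\lesssim\lambda\|\beta^*\|^2+$ variance, and I would try that first.

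\textbf{Policy values and winner's curse.} We compute $\mathcal{P}(\beta;\beta')$ in closed form. The policy is $\pi^\beta(x)=\mathbbm 1(\beta_1+\beta_3x\ge0)$, with threshold $c=-\beta_1/\beta_3$, and
\begin{align*}
\mathcal{P}(\beta;\beta')=\mathbb{E}[(\beta'_1+\beta'_3x)\mathbbm 1(x\ge c)]=\beta'_1\bar\Phi(c)+\beta'_3\varphi(c).
\end{align*}
This is Lipschitz in $\beta'$ with constant at most $1+1/\sqrt{2\pi}$. In $\beta$ it moves only through $c$, and $|c|\le|\beta_1|/|\beta_3|\le 2|\beta_1|/b$ once $\beta_3\ge b/4$, which is where the assumption $n^{1/4}\ge 4\sqrt{\log}/b$ enters. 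The map $c\mapsto\beta'_1\bar\Phi(c)+\beta'_3\varphi(c)$ has derivative $-(\beta'_1+\beta'_3c)\varphi(c)$, which vanishes near the truth's optimum. Its second-order change is therefore of order $c^2\beta'_3+|\beta'_1||c|$, and these produce the $1/\alpha^2$ and $\log/(\alpha b^2\sqrt n)$ terms of $R$.

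Comparing each $\mathcal{P}(\hat\beta;\hat\beta')$ to $\mathcal{P}(\bar\beta;\bar\beta)=b/(2\sqrt{2\pi})$ and using the triangle inequality gives the $R$ bound. For the winner's curse we note that $\mathcal{P}(\beta;\beta^*)=0$ for every $\beta$, because $\beta^*_1=\beta^*_3=0$. Hence $\mathcal{R}(\hat\beta;\hat\beta')=\mathcal{P}(\hat\beta;\hat\beta')\ge b/(2\sqrt{2\pi})-R$.

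The main obstacle is the ridge perturbation step. We need constants tight enough to show the $(1,-1)$ component is killed, which requires $\lambda\gg 1-p$ together with the sample covariance error being at most $\lambda/\alpha$, while keeping the fragile threshold $c$ controlled. I would fix $\Delta$ small (e.g.\ $\Delta\le\lambda/\alpha$) and verify the explicit $3\times3$ inverse, which is block-diagonal.
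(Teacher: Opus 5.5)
Your overall architecture matches the paper's and is sound: concentrate $\hat\beta^{(\lambda)}$ around a deterministic center by a resolvent perturbation, use the ridge oracle bound for accuracy (this is exactly the paper's general accuracy proposition), write $\mathcal{P}(\beta;\beta')=\beta'_1\bar\Phi(c)+\beta'_3\varphi(c)$, and finish with $\mathcal{P}(\cdot;\beta^*)=0$. The noise step, however, fails as written.

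You bound the noise by $\|\hat\Sigma_\lambda^{-1}\|_2\,\|\tfrac1n\Phi^\top\varepsilon\|_2\lesssim\lambda^{-1}\sqrt{\log(8/\delta)/n}$. With $\lambda=\alpha\sqrt{18\log(8/\delta)/n}$ this equals $1/(\sqrt{18}\,\alpha)$; it does not become $\sqrt{\log(8/\delta)/(\alpha\sqrt n)}$ after tuning. An $O(1/\alpha)$ term independent of $n$ and $b$ is not covered by $8b/\alpha+4\sqrt{\log(8/\delta)/(\alpha\sqrt n)}$. To see this, fix $\alpha$ and take $b=4\sqrt{\log(8/\delta)}\,n^{-1/4}$ (allowed by the hypothesis); as $n\to\infty$ the stated bound tends to zero while yours does not.

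The loss comes from decoupling the two factors. The amplification $1/\lambda$ occurs only along the near-null direction $(0,1,-1)$ of $\hat\Sigma$, which is exactly where $\Phi^\top\varepsilon$ is small. You need to bound them jointly. Conditional on $\Phi$, $\tfrac1n\hat\Sigma_\lambda^{-1}\Phi^\top\varepsilon\sim\mathcal{N}\bigl(0,\tfrac{\sigma^2}{n}\hat\Sigma_\lambda^{-1}\hat\Sigma\hat\Sigma_\lambda^{-1}\bigr)$, and $\|\hat\Sigma_\lambda^{-1}\hat\Sigma\hat\Sigma_\lambda^{-1}\|_2\le\max_{s\ge0}s/(s+\lambda)^2=1/(4\lambda)$. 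A $\chi^2_m$ tail then gives $O\bigl(\sqrt{m\sigma^2\log(8/\delta)/(n\lambda)}\bigr)=O\bigl(\sqrt{\log(8/\delta)/(\alpha\sqrt n)}\bigr)$. This is what the paper's SVD/Laurent--Massart lemma (event $E'_\zeta$) supplies. The same sharper bound must be used wherever $\|\hat\beta^{(\lambda)}-\bar\beta^{(\lambda)}\|_2$ feeds into $R$.

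Second, the center. You note that the population ridge solution is $O(b\lambda+b/\alpha)$ from $(0,b/2,b/2)$, and then you drop the $b\lambda$. The estimator concentrates around $(I-\lambda\Sigma_\lambda^{-1})\beta^*$, which is within $O(b/\alpha)$ of $(0,\tfrac{b}{2+\lambda},\tfrac{b}{2+\lambda})$. The paper reaches this more cheaply than your eigen-analysis. It uses the $p=1$ matrix $\bar\Sigma$, whose ridge solution is explicit, and folds $\Gamma=\bar\Sigma-\Sigma$, with $\|\Gamma\|_2\le2(1-p)=\Delta$, into the covariance perturbation. That is why $p=1-\Delta/2$ with $\Delta=\sqrt{18\log(8/\delta)/n}=\lambda/\alpha$.

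For stability the choice of center is immaterial. For the winner's curse, however, the value at the center is $\tfrac{b}{(2+\lambda)\sqrt{2\pi}}$. This falls short of $\tfrac{b}{2\sqrt{2\pi}}$ by $\tfrac{b\lambda}{2(2+\lambda)\sqrt{2\pi}}$, which $R$ does not control for large $\alpha$: with $\alpha\asymp\sqrt n$, $\lambda$ stays of order one while $R\to0$. The paper's proof makes the same slip. It writes $\lambda$ as $1/n$ and asserts $\tfrac{b}{(2+1/n)\sqrt{2\pi}}\ge\tfrac{b}{2\sqrt{2\pi}}$. So the third bullet really needs $\tfrac{b}{(2+\lambda)\sqrt{2\pi}}$ in place of $\tfrac{b}{2\sqrt{2\pi}}$, or an extra $O(b\lambda)$ in $R$.

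Finally, your second-order threshold expansion produces terms of size $\|\hat\beta^{(\lambda)}-\bar\beta^{(\lambda)}\|_2^2/b$. These are $b/\alpha^2$ and $\log(8/\delta)/(\alpha b\sqrt n)$, not the $1/\alpha^2$ and $\log(8/\delta)/(\alpha b^2\sqrt n)$ you claim; the paper's version drops the factor $\bar\beta_3\approx b/2$. This only affects constants. Indeed $b/\alpha^2\le b/(32\alpha)$, and the hypothesis $n^{1/4}\ge4\sqrt{\log(8/\delta)}/b$ is equivalent to $\log(8/\delta)/(\alpha b\sqrt n)\le\tfrac{1}{4\sqrt\alpha}\sqrt{\log(8/\delta)/(\alpha\sqrt n)}$.
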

We give a proof in Appendix~\ref{sec:prop:regularized:proof}. If we take $\alpha=n^{1/4}$, then for any choice of $b$, as $n$ becomes sufficiently large, then the MSE goes to zero and the estimator is perfectly stable, whereas the optimism bias converges to $b/(2\sqrt{2\pi})$. Thus, accuracy and stability cannot definitively rule out the winner's curse.

\subsection{Contextualizing Our Result}

Our model illustrates that even when all of the justifications in Section~\ref{sec:intro} hold, the winner's curse can result in arbitrarily large, spurious policy improvement estimates. Here, we connect our example to these justifications:
\begin{itemize}
\item \textbf{J1: Accurate, stable, and calibrated:} Our model $\hat\beta^{(\lambda)}$ is accurate (since $\mathcal{E}(\hat\beta^{(\lambda)})$ is small for sufficiently large $n$), stable (since it concentrates for sufficiently large $n$ and taking $\alpha=n^{1/4}$), and well-calibrated (since it can always express valid confidence in its prediction).
\item \textbf{J2: Random treatment assignment:} Our historical data is generated using random treatment assignment (i.e., randomly assign to location $A$ with probability $1-p$ and to location $B$ with probability $p$, where $p$ is known).
\item \textbf{J3: Well-specified model family:} The model family contains the true model $\beta^*$.
\item \textbf{J4: Sample splitting:} Our result uses sample splitting for policy evaluation.
\end{itemize}
Yet, by taking $b$ large, we can obtain arbitrarily large unwarranted optimism in our policy improvement estimates.

\subsection{Connection to Practice}

While our example is stylized, it proves that all of the justifications described in Section~\ref{sec:intro} are insufficient to rule out the winner's curse from a theoretical perspective. In this case, the winner's curse is driven by a very similar mechanism as in Section~\ref{sec:simple_illustration}; the key difference is that the bias arises from regularization:
\begin{enumerate}
\item Because the model family is regularized, the estimated model produces stable but biased counterfactual estimates.
\item The bias occurs in a low-probability region of the input space (i.e., $t=0$), allowing the model to achieve very low MSE on the historical data distribution.
\end{enumerate}
As before, the optimizer can exploit the high-error region of the input space to maximize the predicted outcome, leading to the winner's curse.

In realistic scenarios, low-probability regions of the input space can include not just rare treatments (as in our stylized example), but also individual covariates that rarely occur (e.g., in our refugee example, a rare country of origin). Our setting is very low-dimensional and used a single, very rare treatment. In practice, the feature dimension $m$ is usually much larger, so it is actually much more likely that there are features $\phi(x,t)_i$ along which variation in the historical data is low, thereby requiring regularization to avoid instability and ensure accuracy. For these features, the optimizer can exploit biases that arise from regularization to achieve spurious improvements.

Beyond linear models, model families such as random forests and gradient boosted machines (GBMs) also perform implicit regularization to avoid overfitting the data. For instance, random forests average over a large number of decision trees, thereby regularizing estimates to zero when uncertain. Alternatively, GBMs rely on fitting low-capacity base models such as shallow decision trees, thereby inheriting the biases of the base model family. Thus, these kinds of models may also exhibit the winner's curse; our simulation study in Section~\ref{sec:refugee} illustrates this possibility

\section{Case Study on Refugee Matching}
\label{sec:refugee}

We now provide empirical evidence that the justifications described in Section~\ref{sec:intro} do not guarantee accurate policy evaluation using a synthetic environment modeled after the refugee matching problem. Resettlement agencies attempt to settle incoming refugees in locations that maximize their welfare, often measured in the short-run by employment outcomes. To improve their matching of refugees to locations, \cite{bansak2018improving} and \cite{ahani2021placement} have proposed optimizing future location assignments using employment prediction models trained on historical outcomes. Both papers claim this procedure leads to large gains in employment, using model-based evaluation and justifications J1, J2, and J3. We design a simulation environment calibrated to 
the U.S.-based setting studied by~\cite{bansak2018improving}, but generate employment outcomes in such a way that no assignment policy can impact the expected employment rate. We train multiple prediction models, use them to estimate optimal policies, then evaluate these policies using both model-based methods (biased by the winner's curse) and a model-free method (IPW, which is unbiased, but high variance). Despite being built off accurate, calibrated, and stable prediction models (J1), being trained on random historical assignments (J2), and (for some models) being well-specified (J3), all model-based methods falsely report improved employment rates due to the winner's curse.

\subsection{Background}
\label{sec:sim_background}

The refugee matching problem provides a structure to analyze and improve assignments of incoming refugees to locations for resettlement. Resettlement agencies assign refugees to locations based on a small set of visible features and a variety of constraints. The authors argue that these constraints introduce random variation to the assignment process that enable cross-location comparisons of refugees with similar features. The goal of refugee matching is to identify feasible assignment policies that improve short-term refugee employment outcomes.

Upon entering the United States, incoming refugees are assigned to a resettlement agency that manages their arrival. Each resettlement agency sets up affiliate networks in various locations around the country to help integrate arriving refuges and provide them various services. Using a limited set of features for each refugee (such as age, nationality, and education), agencies must assign refugees to these locations.

Refugee assignments must satisfy various constraints. Some refugees have prior ties in the United States that dictate their placement. Other refugees require specific services, such as english language education, that only exist in some location's affiliate networks. Moreover, each location has limited resources, creating capacity constraints which may vary over time. These constraints complicate the assignment process, but provide the variation required to estimate the heterogeneous impact of locations on refugees.

Resettlement agencies want to assign refugees in a welfare-maximizing way, subject to their various constraints. In the short-run, few welfare-relevant outcomes are easily trackable, so many agencies record employment outcomes after 90-days as a proxy for a successful refugee-location match. The goal is thus to use past observations of refugee-location matches and their employment outcomes to design an assignment policy that leads to a higher refugee employment rate.

\subsection{Optimizing Refugee Matches}
\label{sec:sim_optim}

Following \cite{bansak2018improving} and \cite{ahani2021placement}, we formulate a simple offline version of the refugee matching problem and describe an estimate-then-optimize method for proposing refugee assignments. Each incoming refugee consists of (known) covariates and (unknown) counterfactual employment outcomes for each location. We use the covariates to estimate employment outcomes for each location, then use an integer program to assign refugees to locations based on these estimates. Additional details are provided in Appendix~\ref{sec:add_deets}.

The assignment problem consists of matching $N$ unrestricted refugees to $L$ locations in order to maximize the number of employed refugees. Each refugee $i \in [N]$ is associated with a set of potential outcomes $Y_i(t) \in \{0,1\} \ \forall \ t \in [L]$ which describe whether the refugee would find employment at location $t$.\footnote{To match the assumption by the original authors, we also assume the standard Stable Unit Value Treatment Assumption which says one refugees potential outcomes are not impacted by other refugees' location assignments.} A matching $\pi$ assigns refugees to locations where $\pi_{it}\in\{0,1\}$ records whether refugee $i$ is assigned to location $t$. Each location $t$ can support a maximum of $c_t$ refugees. The optimal matching can then be described by the following integer program.
\begin{align*}
    \underset{\pi}{\text{maximize}} &\sum_{i = 1}^N \sum_{t=1}^L Y_{i}(t) \pi_{it}\\
    \text{s.t. }& \sum_{t = 1}^L \pi_{it} = 1 \ \forall \ i \in [N]\\
    & \sum_{i = 1}^N \pi_{it} \leq c_t \ \forall \ t \in [L]\\
    &\pi_{it} \in \{0,1\} \ \forall \ i \in [N], t \in [L].
\end{align*}

However, the counterfactual outcomes under alternative locations for each refugee are unknown. Thus, the authors use a predictive model to estimate these counterfactuals based on refugee covariates $X_i$ (such as age, nationality, and education). Specifically we have access to observations of prior refugee-location matches where for refugee $i$ we observe their covariates $X_i$, the location they were assigned to $T_i \in [L]$, and their observed employment outcome $Y_{i}(T_i)$.\footnote{We similarly adopt the assumption of unconfoundedness: conditional on the covariates $X_i$ the assignment $T_i$ is independent of the potential outcomes. Later we will assume the exact policy used to generate these matches is known and also assigns positive treatment propensity to each location.} These prior observations may have had assignment restrictions, which should be captured by the covariates $X_i$. We can use these observations to build a prediction model $\hat{\beta}$ where $\hat{\beta}_t(X_i)$ estimates $\E[Y_i(t)|X_i]$ given refugee covariates $X_i$ and a location $t$. We then propose using the assignment which maximizes the expected number of employments according to the model $\hat{\beta}$, which can be described by the following integer program.
\begin{align*}
    \underset{\pi}{\text{maximize}} &\sum_{i = 1}^N \sum_{t=1}^L \hat{\beta}_{t}(X_i) \pi_{it}\\
    \text{s.t. }& \sum_{t = 1}^L \pi_{it} = 1 \ \forall \ i \in [N]\\
    & \sum_{i = 1}^N \pi_{it} \leq c_t \ \forall \ t \in [L]\\
    &\pi_{it} \in \{0,1\} \ \forall \ i \in [N], t \in [L].
\end{align*}
We refer to the assignment given by this program as $\pi^{\hat{\beta}}$.

\subsection{Evaluating Refugee Matches}
\label{sec:sim_eval}

\cite{bansak2018improving} and \cite{ahani2021placement} both evaluate their proposed policies using model-based methods---specifically, they use the \textit{same} model for both policy estimation and policy optimization, which is prone to the winner's curse as discussed earlier. In this setting, under their assumptions of unconfoundedness and random assignment, a model-free approach (such as IPW), is guaranteed to be valid; however it is prone to high variance due to the size of the action space.

The model-based method of policy evaluation addresses the issue of missing counterfactual outcomes by imputing them. Specifically, it  uses the model $\hat{\beta}$ (which optimized the proposed assignment), effectively reporting $\sum_{i = 1}^N \sum_{t=1}^L \hat{\beta}_{t}(X_i) \pi^{\hat{\beta}}_{it}$. \cite{ahani2021placement} do the same, but also justify the evaluation using a set of $B$ alternative models $\{\hat{\beta}^{(b)}\} _{b=1}^B$ where model $b$ is built off an independently resampled bootstrap of the original training samples. While straightforward, these model-based approaches may produced biased evaluations if the prediction errors of the model used to propose the assignment $\pi^{\hat{\beta}}$ are correlated with the prediction errors of the model which evaluates the assignment. This is trivially true when $\hat{\beta}$ is reused for evaluation, but may occur even (i) if the evaluation models are built off bootstrapped samples of the training data or (ii) if the evaluation models are built off a completely independent sample. This can occur due the specification or regularization the prediction models must apply to provide stable insights when training samples are limited. 

IPW is a model-free method of policy evaluation which addresses the issue of missing counterfactual outcomes by only evaluating the proposed policy $\pi^{\hat{\beta}}$ when the employment outcome is available. That is we assume each refugee $i \in [N]$ was actually assigned to location $T_i$ resulting in employment outcome $Y_i(T_i)$ where $T_i$ was generated independently from a known distribution where $\Pr(T_i = t) = p_{it}>0$ for all locations $t\in[L]$. IPW then rescales the outcome data based on how likely each assignment was to occur in the data according to $p$ and estimates the number of employed refugees using the cases the proposed assignment aligns with the observed assignment $\pi^{\hat{\beta}}_{iT_i} = 1$. Thus the expected number of employments under $\pi^{\hat{\beta}}$ according to IPW is
$$\sum_{i:\pi^{\hat{\beta}}_{iT_i}=1} \frac{Y_i(T_i)}{p_{iT_i}}.$$
This estimate of the proposed assignment's impact is unbiased, but can suffer from excessive variance if $p_{iT_i}$ is small for some refugees. In our environment there are 43 locations, leading to a volatile IPW estimator.

\subsection{Synthetic Environment}
\label{sec:sim_env}

We develop a synthetic environment to explore the potential impact of the winner's curse bias on model-based evaluation methods in the refugee matching problem. We tune our setting to mimic the US-based setting of \citep{bansak2018improving} and J2: we generate refugee covariates according the marginal distributions reported by their supplemental material, then assign the same number of historical locations randomly in their observed proportions. We generate counterfactual employment outcomes in a way that depends on both refugee covariates and locations, but not their interaction, preventing any assignment policy from improving the expected employment rate.

The synthetic environment generates observations of prior refugee-location matches suitable for the policy optimization and evaluation methods described above. That is for each refugee $i$, our environment generates covariates $X_i$, a location $T_i$, and an employment outcome $Y_i(T_i)$. First, the environment assigns the covariates used by \cite{bansak2018improving} (age, gender, education, english-speaking, case restriction, country of origin, arrival year, and arrival month) independently at random according to the marginal distributions reported in their Supplemental Material. Next the location $T_i$ is drawn independently at random from a set of 43 possible locations according to the empirical distribution $p$ that \cite{bansak2018improving} reports. We report the covariate and location distributions in Appendix~\ref{apx:marginals}. Finally the employment outcome is drawn independently according to $Y_i(T_i)\sim \mathrm{Ber}(f(X_i,T_i))$ for our causal model $f$, which dictates the employment probability for a refugee with covariates $X_i$ placed in location $T_i$.

We generate employment outcomes using a causal model that induces the same expected employment rate for all feasible refugee-location matchings when all locations are at capacity. Specifically we determine the probability a refugee attains employment as
$$f(X_i,T_i) =\frac{1}{2}f_X(X_i) +\frac{1}{2}f_L(T_i),$$
where $f_X\in[0,1]$ represents the impact of the refugee's covariates on employment and $f_L \in[0,1]$ represents the impact of the location on employment. Under a causal model of this form no feasible matching can impact the expected employment rate if all locations are at capacity. We may rearrange the probability of employment across refugees through different assignments, but the lack of covariate-refugee interaction effects prevents us from improving the employment rate via better matchings. The refugee effect, $f_X$, is determined by first using a logit regression with random coefficients\footnote{We manually assign the covariate indicating case restriction=free to have a large positive impact. We do this as the test set of~\cite{bansak2018improving} only includes free case placements and achieves $34\%$ employment while their training set includes both types of cases and achieves $23\%$ employment. To replicate this difference we gave free case placement a positive impact on employment probability.} over refugee covariates and locations to assign an employment outcome to each refugee, then training a random forest regression using the refugee covariates to determine the underlying refugee effect. The location effect, $f_L$, for each of the 43 locations is drawn independently from a Beta distribution with $\alpha = 1$ and $\beta = 2$.\footnote{We chose these parameters to replicate the employment differences~\cite{bansak2018improving} observes across locations as faithfully as possible. Employment rates by location are reported in Appendix~\ref{apx:emp_loc}.}

\subsection{Simulation Details}
\label{sec:sim_details}

We test the estimate-then-optimize refugee matching method using three prediction model classes, then apply the evaluation methods described in Section~\ref{sec:sim_eval} to the proposed policies. In each case we train the estimation model on 33,000 refugees and evaluate the performance on 1,000 refugees with no prior placement restrictions (as done in \cite{bansak2018improving}). We perform each simulation 250 times to obtain histograms of performance estimates for each approach.

We compare the model-based and model-free methods described above for refugee assignments built using a variety of prediction model classes. First, we examine a LASSO-constrained logit regression over refugee covariates, locations, and their interactions (as done by~\cite{ahani2021placement}). This model class is very stable and generally performs well out of sample, but is improperly specified based on our causal model $f$. Second, we examine a family of Honest Random Forests (i.e. causal machine learning~\citep{wager2018estimation}); one for each location. The models are known from their ability to produce non-parametric unbiased estimates, but reduce the prediction power of the data in the process. Finally we examine a family of gradient-boosted classification trees (referred to as GBM); one for each location (as done by~\cite{bansak2018improving}). These models are very powerful, but not very stable and are prone to overfitting. 

For each model class our simulation:
\begin{enumerate}
    \item Generates a training dataset of $33,000$ refugees.
    \item Trains a prediction model $\hat{\beta}$ of the given model class on the training dataset.
    \item Generates a testing data set of $1,000$ refugees (all with case restriction = free).
    \item Optimizes the assignment of the refugees in the testing dataset according to $\hat{\beta}$, setting location capacities according to the observed number of matches in the testing dataset.
    \item Records evaluations of the proposed assignment $\pi^{\hat{\beta}}$ using $\hat{\beta}$ in the model-based method as well as by applying IPW to the testing dataset. 
    \item Generates 250 bootstrapped versions of the training dataset via resampling, trains a prediction model $\hat{\beta}^{(b)}$ of the given model class for each one, and records evaluation of the proposed assignment $\pi^{\hat{\beta}}$ using $\hat{\beta}^{(b)}$ in the model-based method. Additionally, we simulate 250 IPW evaluations on the same policy by shuffling the location assignments and regenerating employment outcomes. These results are presented in Figure~\ref{fig:boot_eval}.
    \item Repeats steps 3. to 5. until 250 estimate-then-optimize assignments have been evaluated. These results are presented in Figure~\ref{fig:dir_eval}.
\end{enumerate}

In addition to histograms which report estimates of the policy's performance over the 250 runs, we report ROC and calibration curves on a testing dataset for each prediction model in Appendix~\ref{apx:cal_ROC} to address J1. We generate these curves using SKLearn methods as we do with the logit models and gradient boosted classifiers. We generate Honest Random Forests using the EconML library. We implement the integer optimization problem in CVXPY using the SCIP solver. All simulations are performed in Python.

\subsection{Simulation Results}
\label{sec:sim_results}

Our simulations reveal that the model-based methods exhibit large biases from the winner's curse for all three prediction model classes, and that those biases are not eliminated when instead evaluated by models trained on bootstrapped versions of the training dataset. These simulations do not prove that the policies learned by~\cite{bansak2018improving},~\cite{ahani2021placement} are ineffective; rather, they show that this form of policy evaluation has a very high FPR---consistently promising non-existent gains even when none are present---underscoring the need for valid policy evaluation methodologies in practice.

\begin{figure}
     \centering
     \begin{subfigure}[b]{0.31\textwidth}
         \centering
         \includegraphics[width=\textwidth]{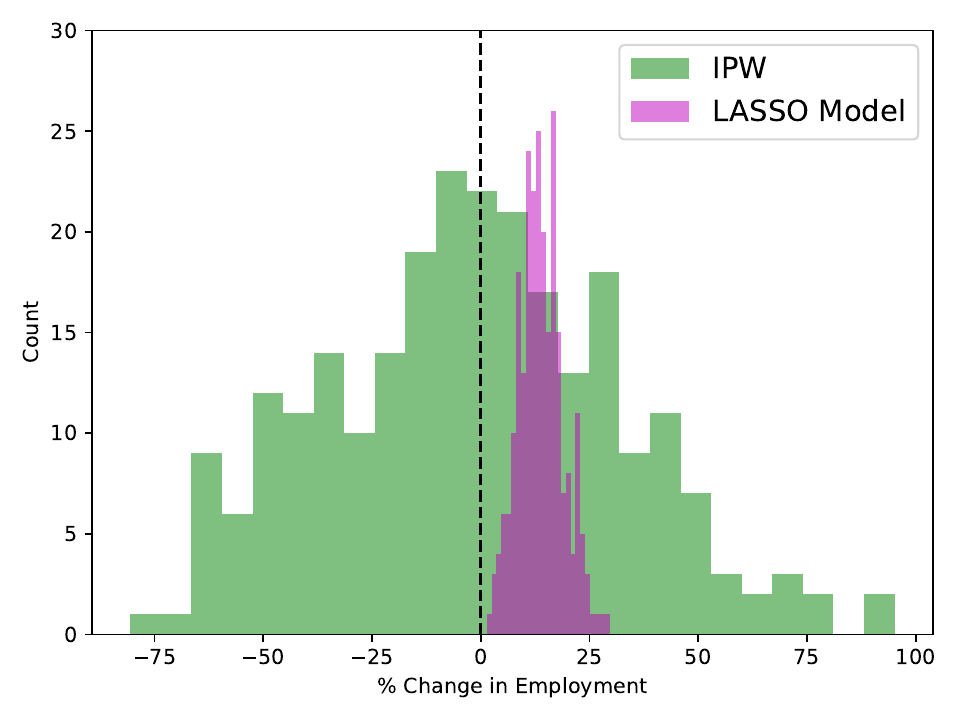}
         \caption{LASSO (Bias = 13\%)}
         \label{fig:dir_lasso}
     \end{subfigure}
     \hfill
     \begin{subfigure}[b]{0.31\textwidth}
         \centering
         \includegraphics[width=\textwidth]{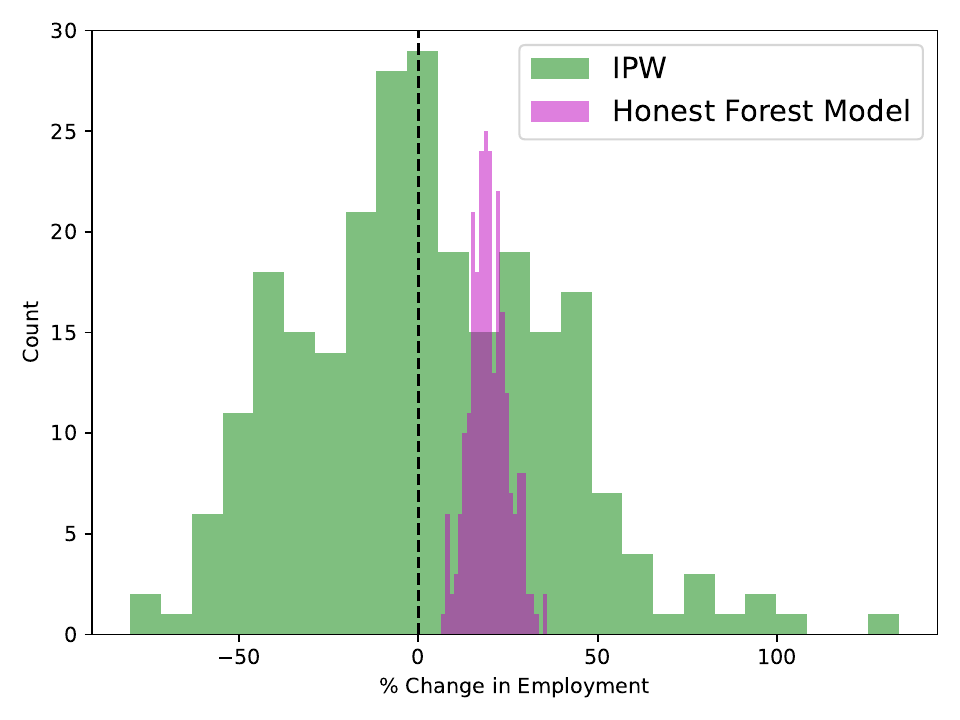}
         \caption{Honest RF (Bias = 20\%)}
         \label{fig:dir_hf}
     \end{subfigure}
     \hfill
     \begin{subfigure}[b]{0.31\textwidth}
         \centering
         \includegraphics[width=\textwidth]{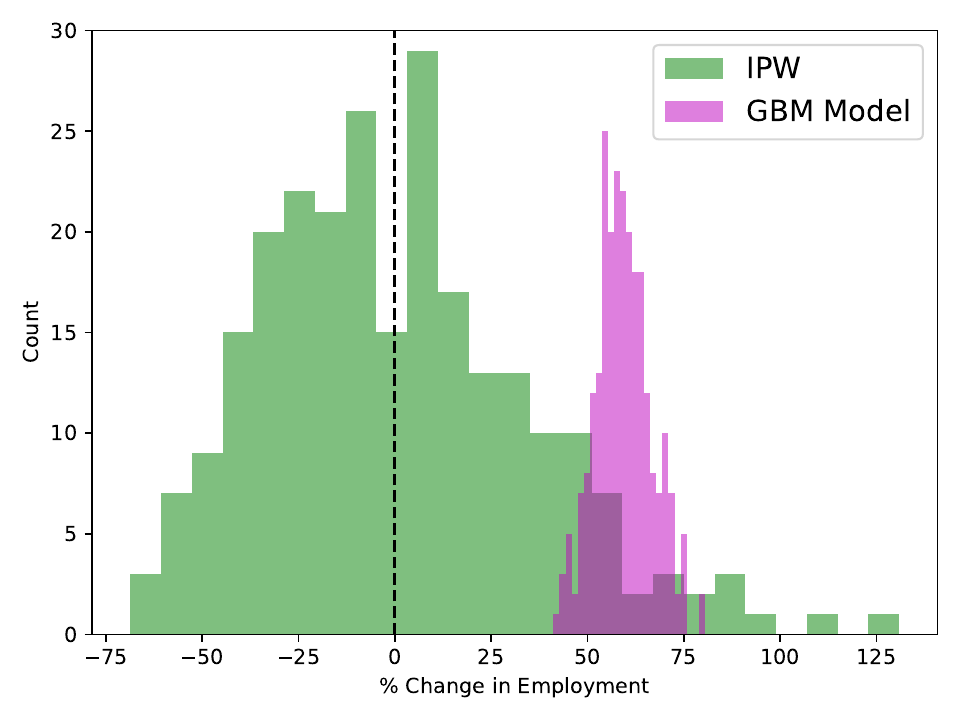}
         \caption{GBM (Bias = 59\%)}
         \label{fig:dir_gbm}
     \end{subfigure}
        \caption{Histograms comparing 250 evaluations according to the prediction model used to optimize the policy with IPW estimates. All prediction models exhibit bias's due to the winner's curse while IPW estimation suffers from excessive variance. All evaluations are reported in percent change in employment rate relative to the observed employment rate in the testing dataset.}
        \label{fig:dir_eval}
\end{figure}

Figure~\ref{fig:dir_eval} presents histograms comparing evaluations of the optimized policy's impact using both the prediction model informing the policy's construction and IPW estimation. Both LASSO and Honest Random Forests present a moderate winner's curse bias, while GBM exhibits an extreme bias (likely due to its lack of stability). On the other hand, while IPW estimation remains unbiased regardless of the prediction model used to optimize the policy, its variance, driven by both small sample size and large action space, precludes effective evaluation of the policy's impact. We caution readers from using these simulations to conclude that LASSO and Honest Random Forest provided less biased evaluation, as alternative parameterizations of our simulation lead to larger biases from LASSO and Honest Random Forest (compared to GBM). Instead, these simulations highlight the need to use provably valid policy optimization and evaluation methodologies that can limit variance in the presence of large action spaces and small sample sizes~\citep[see, e.g.,][]{bastani2025beating}.

\begin{figure}
     \centering
     \begin{subfigure}[b]{0.31\textwidth}
         \centering
         \includegraphics[width=\textwidth]{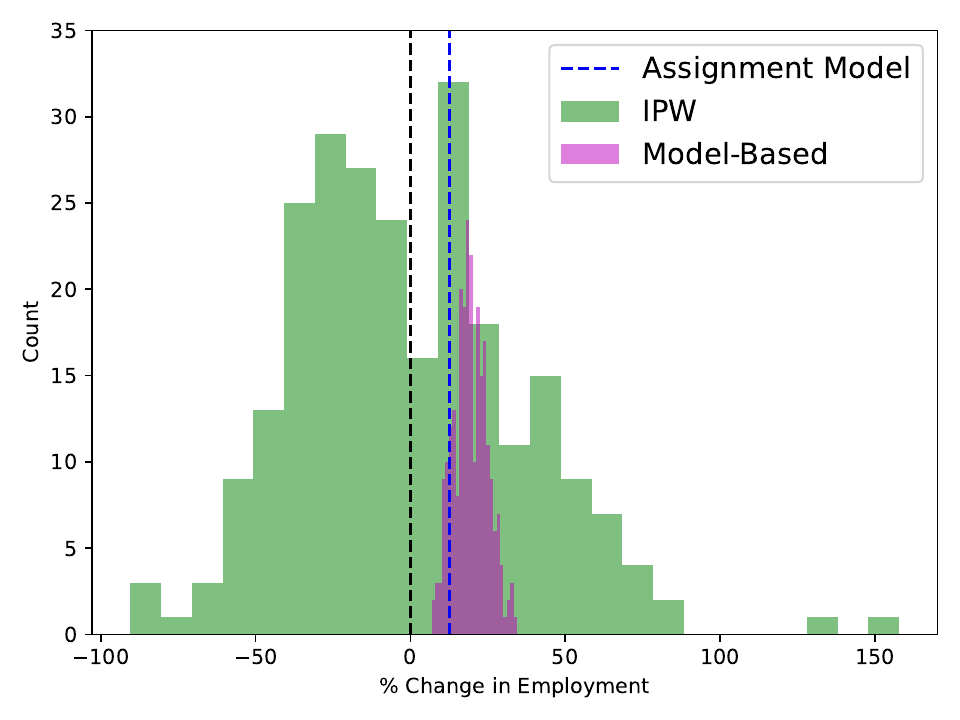}
         \caption{LASSO (Bias = 19\%)}
         \label{fig:boot_lasso}
     \end{subfigure}
     \hfill
     \begin{subfigure}[b]{0.31\textwidth}
         \centering
         \includegraphics[width=\textwidth]{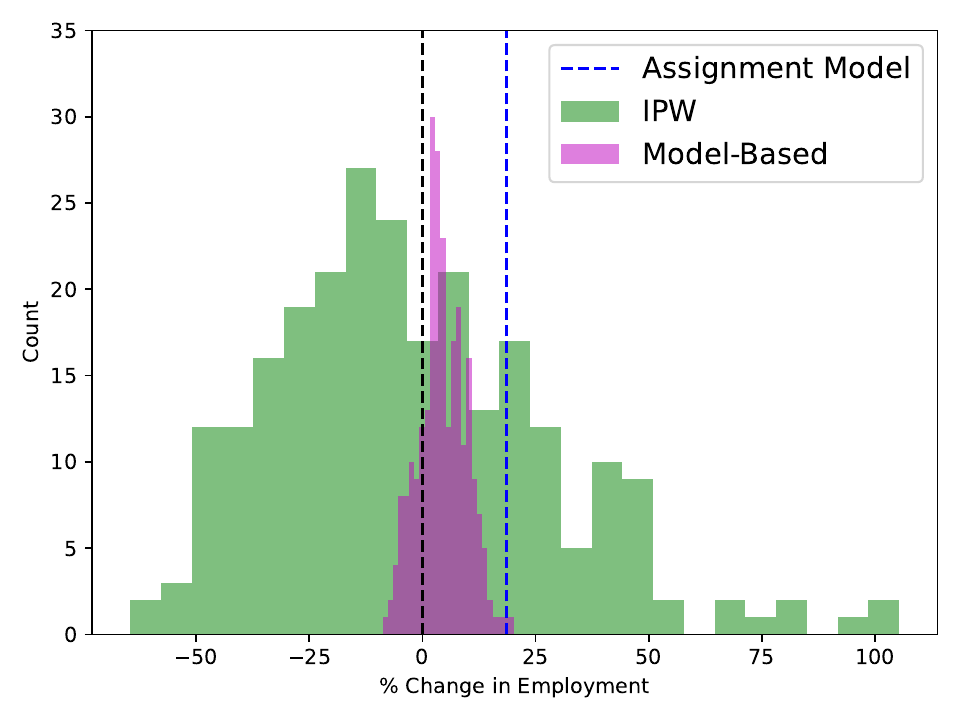}
         \caption{Honest RF (Bias = 4\%)}
         \label{boot:dir_hf}
     \end{subfigure}
     \hfill
     \begin{subfigure}[b]{0.31\textwidth}
         \centering
         \includegraphics[width=\textwidth]{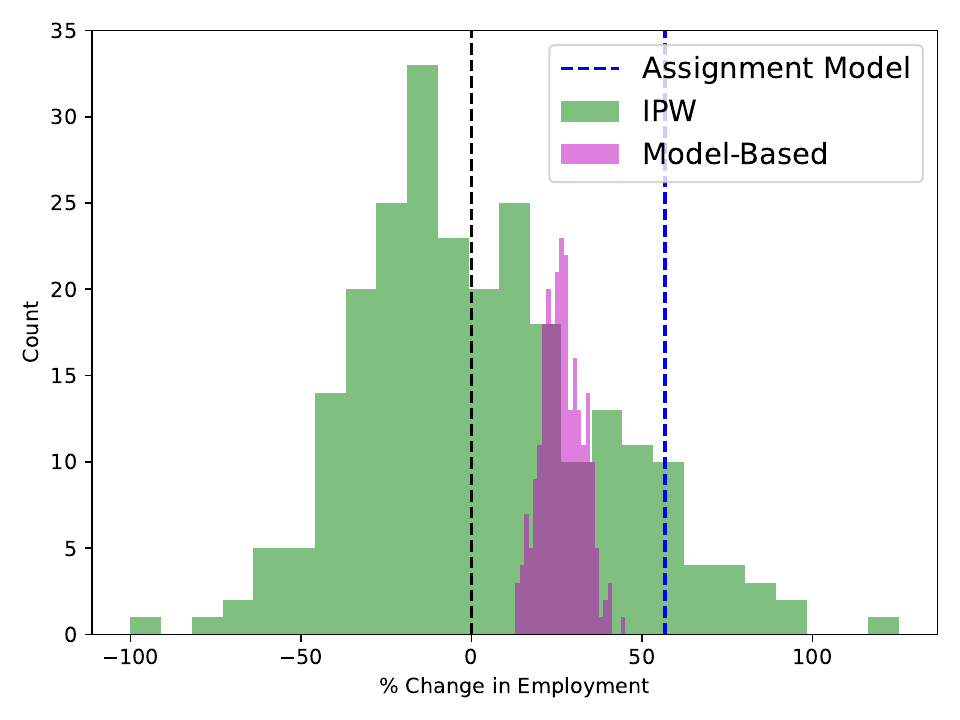}
         \caption{GBM (Bias = 27\%)}
         \label{fig:boot_gbm}
     \end{subfigure}
        \caption{Histograms comparing 250 evaluations according to prediction models built off bootstrapped samples of the training dataset with IPW estimates and direct evaluation by the prediction model which optimized assignments. The stability of the LASSO estimator leads the bootstrapped samples to exhibit a larger winner's curse than the original prediction model while GBM exhibits a reduced, but still quite large, winner's curse. Honest Random Forests were able to eliminate most of their winner's curse bias in this simulation, although in alternate setups we have observed sizable inaccuracies (both positive and negative) indicating the bootstrapping approach cannot generate trustworthy confidence intervals. All evaluations are reported in percent change in employment rate relative to the observed employment rate in the testing dataset.}
        \label{fig:boot_eval}
\end{figure}

Figure~\ref{fig:boot_eval} presents similar histograms, only now we compare evaluation of the optimized policy using models trained on bootstrapped versions of the training dataset, as performed by~\cite{ahani2021placement}. For LASSO, we find that the bootstrapped models perpetuate as much bias from the winner's curse as evaluating using the original model used for policy optimization. We believe this follows from misspecification of the stable LASSO model: it is consistently reproducing the predictions of the original model in its bootstraps due to the complicated impact of the refugees' covariates on employment outcomes. Meanwhile Honest Random Forest and GBM have no obvious misspecification and consequently are able to reduce, but not eliminate, the bias from their evaluations. Instead these techniques reproduce some of the winner's curse bias that the assignment model suffers due to correlation between the training dataset and its bootstraps. As such we would expect this bias to decrease as the power of the training set increases. However, it is unclear how the size of this bias relates to the size of the training dataset, making it very difficult to use the bootstrapping approach for evaluation.\footnote{We once again caution readers from using these simulations to conclude that Honest Random Forest is the ``best'' prediction model to use for bootstrapped policy evaluation as other simulation setups using Honest Random Forest models have retained large biases.}

Overall our simulations highlight that in order to confidently assess complex data-driven assignment policies, we need better policy evaluation methods. Without improved methods, the only way to confidently evaluate these policies is to retain large testing datasets to counteract the variance of IPW estimation, which is often economically infeasible.

\section{Discussion and Broader Implications} \label{sec:discussion}

Our results demonstrate that the winner’s curse is not merely a theoretical curiosity, nor is it confined to obviously misspecified models. Rather, it is a systematic failure mode of model-based policy evaluation in estimate–then–optimize pipelines. In our refugee-matching case study, state-of-the-art model-based methods produced large and stable---but entirely spurious---estimated gains, despite standard checks such as cross-validation and calibration. This is because standard diagnostics validate \textit{average} predictive performance, whereas policy optimization is designed to exploit precisely those parts of the input space where residual errors remain. To clarify why these spurious gains can survive familiar best practices, it is useful to contrast model-based and model-free evaluation across three data regimes.

The fundamental dilemma in evaluating data-driven policies is a bias–variance trade-off. Model-free methods such as inverse probability weighting (IPW) are asymptotically unbiased but suffer from prohibitively high variance when propensity scores are small or the action space is large. Model-based methods instead impute counterfactual outcomes, reducing variance by smoothing over noise. The problem, however, is that model-based estimates are biased in a systematically optimistic direction. The optimizer preferentially selects actions and subpopulations that appear best under the model---i.e., “wins” are disproportionately driven by favorable estimation error.

In “small data” settings, estimated models are noisy and unstable, discouraging any kind of policy evaluation. At the other extreme, in “large data” regimes with sufficient coverage, model-free estimators become precise, making it possible to accurately evaluate policy improvements.

The most problematic regime---and the one where the winner’s curse is most acute---is “medium data.” Here, the data are sufficient to estimate models that appear accurate, stable, and well-calibrated on average, yet insufficient to support low-variance model-free policy evaluation. In this regime, standard diagnostics create a false sense of security: a practitioner may observe high AUC, stability, and calibration, and conclude the model is a reliable proxy for reality. But policy optimization is effectively an adversarial search process that identifies and exploits residual model errors, especially in regions that are rare under the observed data distribution. Even a model with strong aggregate performance may overestimate outcomes for a small subset of cases; the optimizer will systematically select precisely those cases. 

Crucially, because model-free estimates remain high-variance in medium data settings---particularly with large or combinatorial action spaces---they often cannot deliver a decisive rebuttal to a precise (but biased) model-based estimate. The result is a predictable pattern: the optimistic model-based gain looks stable and persuasive, while unbiased checks are too noisy to serve as an effective counterweight. The refugee matching case study exemplifies this regime: roughly 30,000 observations are sufficient to support seemingly reasonable predictive models, yet the sparsity of the assignment space ensures that model-free evaluation is too noisy to robustly validate the optimizer’s purported gains.

Because the medium-data regime is pervasive, the winner’s curse is a systemic threat. Many high-stakes decision problems possess enough data to estimate reasonable statistical models, but not enough to rigorously validate the tails of the distribution where optimized policies concentrate. Thus, we argue against using model-based imputation for policy evaluation, even when the estimated models appear strong by conventional metrics. Effectively doing so requires rigorously integrating model-based data with real-world outcomes~\citep{mandyam_perry:_2025}.

Rather, emerging research addresses the challenge of the high variance of model-free evaluation by rethinking the learning process itself; for instance,~\cite{chernozhukov2025policy,bastani2025beating} propose algorithms that seek statistically guaranteed improvements rather than simple expected value maximization, while~\cite{banerjee2025selecting} demonstrate that constraining the complexity of the policy class can effectively curb the winner's curse.

\section*{Acknowledgments}
The authors are grateful to Gad Allon, Jackie Baek, Kirk Bansak, Mohsen Bayati, Ron Berman, Rob Bray, Gerard Cachon, Vishal Gupta, Dean Knox, Elisabeth Paulson, Neha Sharma, Yannis Stamotopoulos, Alexander Teytelboym, Stefan Wager, and others for helpful feedback. This research was supported by generous funding from the Wharton AI \& Analytics Initiative.

\bibliographystyle{plainnat}  
\bibliography{refs}     
\ECSwitch
\ECHead{Appendix}

\section{Proofs for Section~\ref{sec:regularized}}
\label{sec:prop:regularized:proof}

We begin by proving a number of general results on accuracy and stability of ridge regression; our proof of Proposition~\ref{prop:regularized} relies on the general theory to establish accuracy and stability. Our proof that the winner's curse happens is specific to the problem construction established in Section~\ref{sec:regularized}.

\subsection{General Problem Formulation}

We begin by formalizing the problem of policy evaluation where the counterfactual outcomes are estimated using a ridge regression model. We consider a covariate space $\mathcal{X}\subseteq\mathbb{R}^d$, a compact space of treatments $\mathcal{T}$, a parameter space $\beta\in\mathcal{B}\subseteq\mathbb{R}^m$, a feature space $\mathcal{Z}=\mathbb{R}^m$, and a feature map $\phi:\mathcal{X}\times\mathcal{T}\to\mathcal{Z}$. The \emph{decision objective} is $f^{\beta}(x,t)=\beta^\top\phi(x,t)$, and the \emph{$\beta$-optimal policy} is $\pi^{\beta}(x)=\operatorname*{\arg\max}_{t\in\mathcal{T}}f^{\beta}(x,t)$. We consider a probability measure $\mathbb{P}_x$ (capturing the covariate distribution) and $\mathbb{P}_t$ (capturing the treatment distribution in the historical data), defining the product measure $\mathbb{P}_{x,t}=\mathbb{P}_x\times\mathbb{P}_t$. We assume $\phi$ is measurable and bounded in expectation: $\mathbb{E}_x\left[\max_{t\in\mathcal{T}}\|\phi(x,t)\|_2\right]\le\phi_{\text{max}}$, and we assume that $\phi(x,t)$ is $\eta$-subgaussian. We let $\beta^*\in\mathcal{B}$ denote the parameters for the true decision objective. We evaluate parameter estimates $\beta\in\mathcal{B}$ using two metrics: (1) global prediction accuracy, and (2) the optimistic bias of the resulting policy $\pi^{\beta}$.
\begin{definition}
\rm
Given $\beta\in\mathcal{B}$, the \emph{excess mean-squared error (MSE)} is the generalization error relative to the true model (typically estimated on a held-out test set):
\begin{align*}
\mathcal{E}(\beta)=\mathbb{E}_{x,t}[(f^{\beta}(x,t)-f^{\beta^*}(x,t))^2].
\end{align*}
\end{definition}
\begin{definition}
\rm
Given $\beta,\beta'\in\mathcal{B}$, the \emph{optimism bias} of $\beta$ when evaluated using $\beta'$ is
\begin{align*}
\mathcal{R}(\beta;\beta')
=\mathcal{P}(\beta;\beta')-\mathcal{P}(\beta;\beta^*)
\qquad\text{where}\qquad
\mathcal{P}(\beta;\beta')=\mathbb{E}_x[f^{\beta'}(x,\pi^{\beta}(x))].
\end{align*}
\end{definition}
To estimate $\beta^*$, we consider a historical dataset $W=\{(z_i,y_i)\}_{i=1}^n$, where $y_i=f^{\beta^*}(x_i,t_i)+\epsilon_i$ with i.i.d. noise $\epsilon_i \sim \mathcal{N}(0,\sigma^2)$; we let $\mathbb{P}_{x,t,\epsilon}=\mathbb{P}_{x,t}\times\mathbb{P}_{\epsilon}$ denote the distribution of a single historical example. We employ the classical ridge regression estimator, which serves as a proxy for any stable, regularized machine learning method:
\begin{align*}
\hat\beta^{(\lambda)}=\operatorname*{\arg\min}_{\beta\in\mathbb{R}^m}\left\{\frac{1}{n}\|Z\beta-Y\|_2^2+\lambda\|\beta\|^2\right\},
\end{align*}
where $\lambda\in\mathbb{R}_{>0}$ is the regularization hyperparameter, and
\begin{align*}
Z=\begin{bmatrix}
\horzbar & z_1^\top & \horzbar \\
\vdots & \vdots & \vdots \\
\horzbar & z_n^\top & \horzbar
\end{bmatrix},
\qquad\qquad
Y=\begin{bmatrix}
y_1 \\
\vdots \\
y_n
\end{bmatrix}.
\end{align*}
Let $\Sigma=\mathbb{E}_{x,t}[zz^\top]$ be the true covariance matrix and $\hat\Sigma=n^{-1}Z^\top Z$ be the empirical covariance matrix, and let $\Sigma_{\lambda}=\Sigma+\lambda I$ and $\hat\Sigma_{\lambda}=\hat\Sigma+\lambda I$, where $I\in\mathbb{R}^{m\times m}$ is the identity matrix. Then,
\begin{align*}
\hat\beta^{(\lambda)}
&=(n\hat\Sigma_{\lambda})^{-1}Z^\top Y
=(I-\lambda\hat\Sigma_{\lambda}^{-1})\beta^*+\frac{1}{n}\hat\Sigma_{\lambda}^{-1}Z^\top E,
\end{align*}
where $E=\begin{bmatrix}\epsilon_1&\cdots\epsilon_n\end{bmatrix}^\top$ is the vector of noise terms. Given $\Delta\in\mathbb{R}_{>0}$, define the event
\begin{align*}
E_{\Delta}=\mathbbm{1}(\|\hat\Delta\|_2\le\Delta)
\qquad\text{where}\qquad
\hat\Delta=\hat\Sigma-\Sigma.
\end{align*}
In addition, given $\zeta\in\mathbb{R}_{>0}$ satisfying $\zeta\in\mathbb{R}_{>0}$, define the event
\begin{align*}
E_{\zeta}'=\mathbbm{1}\left(\|\hat\Sigma_{\lambda}^{-1}Z^\top E\|_2^2\le\frac{5nm\sigma^2\zeta}{\lambda}\wedge\|\hat\Sigma_{\lambda}^{-1/2}Z^\top E\|_2^2\le5nm\sigma^2\zeta\right).
\end{align*}
We let $\mathbb{P}_E=(\mathbb{P}_{\epsilon})^n$ denote the measure of $E$, and $\mathbb{P}_Z=(\phi_*\mathbb{P}_{x,t})^n$ of $Z$ (where $f_*\beta$ denotes the pushforward measure).

\subsection{Results for the General Case}

First, we prove that $E_{\Delta}$ holds with high probability as $n$ grows large.
\begin{lemma}
\label{lem:eventprobability1}
We have
\begin{align*}
\mathbb{P}_Z[E_{\Delta}]\ge1-2m^2e^{-n\Delta^2/(2m^2\eta)^2}.
\end{align*}
\end{lemma}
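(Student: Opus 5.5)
The plan is to bound the operator norm by controlling each entry of $\hat\Delta$ separately and then take a union bound over the $m^2$ entries. Write $z_i=\phi(x_i,t_i)$, so that the $(j,k)$ entry is
\begin{align*}
\hat\Delta_{jk}=\frac{1}{n}\sum_{i=1}^n\left(z_{ij}z_{ik}-\mathbb{E}[z_{j}z_{k}]\right),
\end{align*}
which is an average of $n$ i.i.d.\ mean-zero terms. Since $\|\hat\Delta\|_2\le\|\hat\Delta\|_F\le m\max_{j,k}|\hat\Delta_{jk}|$, it suffices to show that with the stated probability every entry satisfies $|\hat\Delta_{jk}|\le\Delta/m$. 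Then $\|\hat\Delta\|_2\le\Delta$, i.e., $E_\Delta$ holds.

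The key step is a scalar concentration bound for a single entry, of the form
\begin{align*}
\mathbb{P}_Z\left[|\hat\Delta_{jk}|>s\right]\le2e^{-ns^2/(4\eta^2)}.
\end{align*}
Setting $s=\Delta/m$ turns this into $2e^{-n\Delta^2/(2m\eta)^2}$. Since $(2m\eta)^2\le(2m^2\eta)^2$ for $m\ge1$, this is at most $2e^{-n\Delta^2/(2m^2\eta)^2}$. Union bounding over the $m^2$ pairs $(j,k)$ then gives the claimed failure probability $2m^2e^{-n\Delta^2/(2m^2\eta)^2}$. The extra factor of $m$ in the exponent's denominator is slack, which gives room if the constant in the scalar bound comes out worse than hoped.

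The main obstacle is the scalar tail bound itself, because $z_{ij}z_{ik}$ is a product of subgaussians, hence only sub-exponential. A general Bernstein inequality would give a mixed tail $\exp(-cn\min(s^2/\eta^2,s/\eta))$, not a pure Gaussian tail. I would handle this in one of two ways.

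\begin{enumerate}
\item Read the assumption ``$\phi(x,t)$ is $\eta$-subgaussian'' as meaning the centered products $z_jz_k-\mathbb{E}[z_jz_k]$ are $\eta$-subgaussian; in the paper's example this holds because the relevant entries are bounded or Gaussian-like. Hoeffding's inequality for averages of subgaussians then gives $2e^{-ns^2/(2\eta^2)}$ directly, with room to spare.
\item Otherwise, use Bernstein's inequality in the regime $s\le\eta$, where the quadratic term dominates. This is the regime that matters for the small $\Delta$ used later, where $\Delta\propto\sqrt{\log(1/\delta)/n}$. The generous constant $(2m^2\eta)^2$ absorbs the resulting Bernstein constants.
\end{enumerate}

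I would state whichever interpretation of $\eta$-subgaussianity is needed explicitly, then finish with the union bound as above.
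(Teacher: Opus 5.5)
Your proposal is the paper's proof: the paper bounds each entry $|\hat\Delta_{jk}|\le\Delta/m$ via Hoeffding and union-bounds over the $m^2$ entries, leaving the step $\|\hat\Delta\|_2\le\|\hat\Delta\|_F\le m\max_{j,k}|\hat\Delta_{jk}|$ implicit, and it silently treats the entries of $zz^\top$ as subgaussian, which is exactly your option~1. One correction: your aside that option~1 holds in the paper's example is false, because for $\phi=(t,x,tx)$ the entries $x^2$ and $tx^2$ of $zz^\top$ are only sub-exponential, so there the Gaussian-type tail can be justified only through your Bernstein route, i.e., for small $\Delta$ and up to constants (for large $\Delta$ the stated bound genuinely fails, since $\|\hat\Delta\|_2\ge|\hat\Delta_{22}|$ has a $\chi^2$-type tail).
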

\begin{proof}{Proof.}
For every $i,j\in[m]$, by Hoeffding's inequality, $\mathbb{P}_Z[|\hat\Delta_{ij}|\ge\Delta/m]\le2e^{-n\Delta^2/(2m^2\eta^2)}$, so the claim follows by a union bound. \qed
\end{proof}
Next, we prove a high-probability bound for $E_{\zeta}'$.
\begin{lemma}
\label{lem:eventprobability2}
For $\zeta\ge1$, we have
\begin{align*}
\mathbb{P}_W[E_{\zeta}']\ge1-e^{-\zeta}.
\end{align*}
\end{lemma}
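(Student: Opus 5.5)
Both inequalities in $E_{\zeta}'$ are statements about Gaussian quadratic forms, so the plan is to condition on $Z$ and reduce each to a chi-square-type tail bound. Fix $Z$ (any realization), so that the only randomness is $E\sim\mathcal{N}(0,\sigma^2 I_n)$, and write $A=\hat\Sigma_{\lambda}^{-1/2}Z^\top$. Then $\hat\Sigma_{\lambda}^{-1/2}Z^\top E=AE$ is a Gaussian vector in $\mathbb{R}^m$ with covariance $\sigma^2AA^\top=\sigma^2\hat\Sigma_{\lambda}^{-1/2}Z^\top Z\hat\Sigma_{\lambda}^{-1/2}=n\sigma^2\hat\Sigma_{\lambda}^{-1/2}\hat\Sigma\hat\Sigma_{\lambda}^{-1/2}$. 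Because $\hat\Sigma\preceq\hat\Sigma_{\lambda}$, this covariance is dominated by $n\sigma^2 I_m$.

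Hence $\|AE\|_2^2$ is stochastically dominated by $n\sigma^2\chi^2_m$. More precisely, it equals $n\sigma^2\sum_{j}\mu_j g_j^2$ with $\mu_j\in[0,1]$ and $g_j$ i.i.d. standard normal.

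Next, I would derive the first inequality from the second rather than treating it separately. We have $\hat\Sigma_{\lambda}^{-1}Z^\top E=\hat\Sigma_{\lambda}^{-1/2}(AE)$ and $\|\hat\Sigma_{\lambda}^{-1/2}\|_2^2=1/\lambda_{\min}(\hat\Sigma_{\lambda})\le 1/\lambda$. It follows that $\|\hat\Sigma_{\lambda}^{-1}Z^\top E\|_2^2\le\lambda^{-1}\|AE\|_2^2$. So on the event $\|AE\|_2^2\le 5nm\sigma^2\zeta$, both conditions in $E_{\zeta}'$ hold, and it suffices to bound the probability of this single event.

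The remaining step, and the only place requiring care with constants, is the chi-square tail bound. I would use the Laurent--Massart inequality for weighted sums $\sum_j\mu_j(g_j^2-1)$ with $\mu_j\in[0,1]$: for all $x>0$,
\begin{align*}
\mathbb{P}\Big[\textstyle\sum_j\mu_j g_j^2\ge \sum_j\mu_j+2\sqrt{x\sum_j\mu_j^2}+2x\max_j\mu_j\Big]\le e^{-x}.
\end{align*}
Taking $x=\zeta$ and using $\sum_j\mu_j\le m$, $\sum_j\mu_j^2\le m$, and $\max_j\mu_j\le 1$, the threshold is at most $m+2\sqrt{m\zeta}+2\zeta$. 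For $\zeta\ge1$ and $m\ge1$, the bounds $\sqrt{m\zeta}\le m\zeta$ and $\zeta\le m\zeta$ give a total of at most $m\zeta+2m\zeta+2m\zeta=5m\zeta$. This is exactly where the constant $5$ and the hypothesis $\zeta\ge1$ come from.

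Therefore $\mathbb{P}_E[\|AE\|_2^2> 5nm\sigma^2\zeta\mid Z]\le e^{-\zeta}$ for every $Z$. Integrating over $\mathbb{P}_Z$ then yields $\mathbb{P}_W[E_{\zeta}']\ge 1-e^{-\zeta}$.

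If I prefer to avoid the weighted version of Laurent--Massart, I would instead couple $AE$ to a vector with covariance exactly $n\sigma^2 I_m$. For instance, I would add an independent Gaussian with covariance $n\sigma^2 I_m-\sigma^2AA^\top$ and apply the plain $\chi^2_m$ bound, using that the norm of a centered Gaussian is stochastically monotone under this kind of covariance domination.
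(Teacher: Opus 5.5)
Your proposal is correct and is essentially the paper's argument. Conditionally on $Z$, both of you write the two quadratic forms as sums $\sum_j \mu_j g_j^2$ with weights $\mu_j=\lambda_j^2/(\lambda_j^2+\lambda)\in[0,1]$, the paper via an SVD of $Z/\sqrt{n}$ and you via the covariance. Both extract the extra $1/\lambda$ factor for the first form, the paper weight-by-weight and you via $\|\hat\Sigma_{\lambda}^{-1/2}\|_2^2\le 1/\lambda$, and both finish with Laurent--Massart and $m+2\sqrt{m\zeta}+2\zeta\le 5m\zeta$. The paper just uses the pointwise bound $\sum_j\mu_j g_j^2\le\sum_j g_j^2$ and the plain $\chi^2_m$ tail, so your weighted Laurent--Massart step and the coupling alternative are valid but unnecessary.
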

\begin{proof}{Proof.}
Let $Z/\sqrt{n}=U\hat\Lambda V^\top$ be the singular value decomposition of $Z/\sqrt{n}$, where
\begin{align*}
\hat\Lambda=\begin{bmatrix}
\tilde\Lambda \\
0_{(n-d)\times d}
\end{bmatrix}
\qquad\text{where}\qquad
\tilde\Lambda=\begin{bmatrix}
\lambda_1 & 0 & \cdots & 0 \\
0 & \lambda_2 & \cdots & 0 \\
\vdots & \vdots & \ddots & \vdots \\
0 & 0 & \cdots & \lambda_m
\end{bmatrix}
\end{align*}
for some $\lambda_1,...,\lambda_m\in\mathbb{R}_{>0}$, and $U\in\mathbb{R}^{n\times n}$ and $V\in\mathbb{R}^{m\times m}$ are orthogonal. Then, we have
\begin{align*}
\hat\Sigma_{\lambda}
=\frac{1}{n}Z^\top Z+\lambda I
=V\hat\Lambda^2V^\top+\lambda I
=V(\tilde\Lambda+\lambda I)V^\top.
\end{align*}
Thus,
\begin{align*}
\hat\Sigma_{\lambda}^{-1}Z^\top E
=V(\tilde\Lambda+\lambda I)^{-1}\hat\Lambda UE.
\end{align*}
Let $\tilde{E}=\begin{bmatrix}\tilde\epsilon_1&\cdots\tilde\epsilon_m\end{bmatrix}$ with $\tilde{\epsilon}_i=(UE)_i$ for $i\in[m]$; note that since $U$ is orthogonal, $UE\sim\mathcal{N}(0,\sigma^2I_n)$ has the same distribution as $E$, so $\tilde{E}\sim\mathcal{N}(0,\sigma^2I_m)$. Furthermore, we have
\begin{align*}
\hat\Sigma_{\lambda}^{-1}Z^\top E=\sqrt{n}V\tilde\Lambda'\tilde{E}
\qquad\text{where}\qquad
\tilde\Lambda'=\begin{bmatrix}
\frac{\lambda_1}{\lambda_1^2+\lambda} & 0 & \cdots & 0 \\
0 & \frac{\lambda_2}{\lambda_2^2+\lambda} & \cdots & 0 \\
\vdots & \vdots & \ddots & \vdots \\
0 & 0 & \cdots & \frac{\lambda_m}{\lambda_m^2+\lambda}
\end{bmatrix}.
\end{align*}
Now, note that
\begin{align*}
\|\hat\Sigma_{\lambda}^{-1}Z^\top E\|_2^2
=n\sum_{i=1}^m\frac{\lambda_i^2}{(\lambda_i^2+\lambda)^2}\cdot\tilde{\epsilon}_i^2
=n\sum_{i=1}^m\left(\frac{\lambda_i^2+\lambda}{(\lambda_i^2+\lambda)^2}-\frac{\lambda}{(\lambda_i^2+\lambda)^2}\right)\cdot\tilde{\epsilon}_i^2
\le\frac{n}{\lambda}\sum_{i=1}^m\tilde\epsilon_i^2=\frac{n\sigma^2}{\lambda}\sum_{i=1}^m\frac{\tilde\epsilon_i^2}{\sigma^2}.
\end{align*}
Similarly, we have
\begin{align*}
\hat\Sigma_{\lambda}^{-1/2}Z^\top E
=V(\tilde\Lambda+\lambda I)^{-1/2}\hat\Lambda UE
=\sqrt{n}V\tilde\Lambda''\tilde{E}
\qquad\text{where}\qquad
\tilde\Lambda''=\begin{bmatrix}
\frac{\lambda_1}{\sqrt{\lambda_1^2+\lambda}} & 0 & \cdots & 0 \\
0 & \frac{\lambda_2}{\sqrt{\lambda_2^2+\lambda}} & \cdots & 0 \\
\vdots & \vdots & \ddots & \vdots \\
0 & 0 & \cdots & \frac{\lambda_m}{\sqrt{\lambda_m^2+\lambda}}
\end{bmatrix}.
\end{align*}
Now, note that
\begin{align*}
\|\hat\Sigma_{\lambda}^{-1/2}Z^\top E\|_2^2
=n\sum_{i=1}^m\frac{\lambda_i^2}{\lambda_i^2+\lambda}\cdot\tilde{\epsilon}_i^2
\le n\sum_{i=1}^m\tilde\epsilon_i^2=n\sigma^2\sum_{i=1}^m\frac{\tilde\epsilon_i^2}{\sigma^2}.
\end{align*}
Finally, by the Laurent-Massart inequality, we have
\begin{align*}
\mathbb{P}_{\tilde{E}}\left[\sum_{i=1}^m\frac{\tilde\epsilon_i^2}{\sigma^2}\le m+2m\sqrt{\zeta}+2\zeta\right]\ge1-e^{-\zeta},
\end{align*}
so the claim follows. \qed
\end{proof}
Next, we prove a standard formula for the excess MSE of any parameter vector $\beta$.
\begin{lemma}
\label{lem:excessmse}
We have $\mathcal{E}(\beta)=\|\Sigma^{1/2}(\beta-\beta^*)\|_2^2$.
\end{lemma}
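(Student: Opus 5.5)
The plan is a direct computation. By definition, $\mathcal{E}(\beta)=\mathbb{E}_{x,t}[(f^{\beta}(x,t)-f^{\beta^*}(x,t))^2]$ with $f^{\beta}(x,t)=\beta^\top\phi(x,t)$. First I will use linearity of the decision objective in the parameters to write the difference as a single inner product,
\begin{align*}
f^{\beta}(x,t)-f^{\beta^*}(x,t)=(\beta-\beta^*)^\top z,
\end{align*}
where $z=\phi(x,t)$.

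Next I will square this, write the scalar square as a quadratic form $(\beta-\beta^*)^\top zz^\top(\beta-\beta^*)$, and exchange expectation with the fixed vector $\beta-\beta^*$. This gives
\begin{align*}
\mathcal{E}(\beta)=(\beta-\beta^*)^\top\,\mathbb{E}_{x,t}[zz^\top]\,(\beta-\beta^*)=(\beta-\beta^*)^\top\Sigma(\beta-\beta^*).
\end{align*}

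Then I will note that $\Sigma$ is symmetric positive semidefinite, since it is a second-moment matrix. It therefore has a symmetric square root $\Sigma^{1/2}$ with $\Sigma=\Sigma^{1/2}\Sigma^{1/2}$. Substituting this factorization turns the quadratic form into $\|\Sigma^{1/2}(\beta-\beta^*)\|_2^2$, which is the claim.

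The only step needing care is moving the expectation inside the quadratic form. This requires $\Sigma$ to be finite, which I will justify from the assumption that $\phi(x,t)$ is $\eta$-subgaussian, since this gives finite second moments. Otherwise the argument is routine linear algebra, and I expect no real obstacle.
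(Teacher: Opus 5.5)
Your proposal is correct and matches the paper's argument. Both reduce $\mathcal{E}(\beta)$ to $\mathbb{E}_{x,t}[((\beta-\beta^*)^\top z)^2]$, pull the expectation onto $zz^\top$ to obtain $\Sigma$ (the paper via the trace identity $\text{tr}((\beta-\beta^*)(\beta-\beta^*)^\top zz^\top)$, you via the quadratic form), and finish with $\Sigma^{1/2}$. The only difference is that the paper first writes $\mathcal{E}(\beta)$ as the noisy risk difference $\mathbb{E}[(\beta^\top z-y)^2]-\mathbb{E}[(\beta^{*\top}z-y)^2]$ and cancels the noise terms, a detour you rightly skip because the stated definition is already noise-free.
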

\begin{proof}{Proof.}
Note that
\begin{align*}
\mathcal{E}(\beta)&=\mathbb{E}_{x,t,\epsilon}[(\beta^\top z-y)^2]-\mathbb{E}[(\beta^{*\top}z-y)^2] \\
&=\mathbb{E}_{x,t,\epsilon}[(\beta^\top z-\beta^{*\top}z-\epsilon)^2]-\mathbb{E}_{x,t,\epsilon}[\epsilon^2] \\
&=\mathbb{E}_{x,t,\epsilon}[((\beta-\beta^*)^\top z)^2] \\
&=\mathbb{E}_{x,t,\epsilon}[\text{tr}((\beta-\beta^*)(\beta-\beta^*)^\top zz^\top)] \\
&=\text{tr}((\beta-\beta^*)(\beta-\beta^*)^\top\Sigma) \\
&=\|\Sigma^{1/2}(\beta-\beta^*)\|_2^2,
\end{align*}
as claimed. \qed
\end{proof}
Our next result establishes a straightforward bound on the sensitivity of the estimated policy improvement $\mathcal{P}(\beta;\beta')$ to $\beta'$ (note, however that $\mathcal{P}$ can be arbitrarily sensitive to $\beta$).
\begin{lemma}
\label{lem:performancestability}
Given $\beta,\beta_1,\beta_2\in\mathcal{B}$, if $\|\beta_1-\beta_2\|_2\le\epsilon$, then $|\mathcal{P}(\beta;\beta_1)-\mathcal{P}(\beta;\beta_2)|\le\epsilon\phi_{\text{max}}$.
\end{lemma}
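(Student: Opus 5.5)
The plan is to write the difference of policy values as a single expectation and then apply Cauchy--Schwarz together with the boundedness assumption on $\phi$. Since the policy $\pi^{\beta}$ is the same in both terms, by definition
\begin{align*}
\mathcal{P}(\beta;\beta_1)-\mathcal{P}(\beta;\beta_2)
=\mathbb{E}_x\left[(\beta_1-\beta_2)^\top\phi(x,\pi^{\beta}(x))\right].
\end{align*}
This step uses only linearity of $f^{\beta'}$ in $\beta'$, namely $f^{\beta_1}(x,t)-f^{\beta_2}(x,t)=(\beta_1-\beta_2)^\top\phi(x,t)$, evaluated at $t=\pi^{\beta}(x)$. Measurability of $x\mapsto\phi(x,\pi^{\beta}(x))$ is needed for the expectation to make sense. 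It is implicit in the definition of $\mathcal{P}$, and if needed it can be secured by a measurable selection of the argmax.

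Next, we move the absolute value inside the expectation (Jensen) and apply Cauchy--Schwarz pointwise:
\begin{align*}
|\mathcal{P}(\beta;\beta_1)-\mathcal{P}(\beta;\beta_2)|
\le\mathbb{E}_x\left[\|\beta_1-\beta_2\|_2\,\|\phi(x,\pi^{\beta}(x))\|_2\right]
\le\epsilon\,\mathbb{E}_x\left[\|\phi(x,\pi^{\beta}(x))\|_2\right].
\end{align*}

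Finally, since $\pi^{\beta}(x)\in\mathcal{T}$, we have $\|\phi(x,\pi^{\beta}(x))\|_2\le\max_{t\in\mathcal{T}}\|\phi(x,t)\|_2$ pointwise. Monotonicity of expectation and the standing assumption $\mathbb{E}_x[\max_{t\in\mathcal{T}}\|\phi(x,t)\|_2]\le\phi_{\text{max}}$ then give the bound $\epsilon\phi_{\text{max}}$.

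There is no real obstacle here; the only delicate point is integrability. We must bound by the maximum over $t$ precisely because $\pi^{\beta}$ depends on $x$ in an arbitrary way. The same observation explains the paper's remark that $\mathcal{P}$ can be arbitrarily sensitive to its first argument: changing $\beta$ changes the argmax discontinuously, whereas the second argument enters linearly.
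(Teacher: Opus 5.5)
Your proof is correct and follows essentially the same route as the paper: write the difference as $\mathbb{E}_x[(\beta_1-\beta_2)^\top\phi(x,\pi^{\beta}(x))]$, apply Cauchy--Schwarz, and bound $\mathbb{E}_x[\|\phi(x,\pi^{\beta}(x))\|_2]$ by $\phi_{\text{max}}$. You also spell out the pointwise step $\|\phi(x,\pi^{\beta}(x))\|_2\le\max_{t\in\mathcal{T}}\|\phi(x,t)\|_2$ and the measurability caveat, both of which the paper leaves implicit.
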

\begin{proof}{Proof.}
Note that
\begin{align*}
|\mathcal{P}(\beta;\beta_1)-\mathcal{P}(\beta;\beta_2)|
&=|\mathbb{E}_x[(\beta_1-\beta_2)^\top\phi(x,\pi^{\beta}(x))]| \\
&\le\|\beta_1-\beta_2\|_2\cdot\mathbb{E}_x[\|\phi(x,\pi^{\beta}(x))\|_2] \\
&\le\epsilon\phi_{\text{max}},
\end{align*}
as claimed. \qed
\end{proof}
Another simple result bounds the maximum eigenvalue of $\hat\Sigma_{\lambda}^{-1}$.
\begin{lemma}
\label{lem:sigmahatinv}
We have $\|\hat\Sigma_{\lambda}^{-1}\|_2\le\lambda^{-1}$.
\end{lemma}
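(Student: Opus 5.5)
Since $\hat\Sigma=n^{-1}Z^\top Z$ is a Gram matrix, it is symmetric positive semidefinite, and we use only that. Hence $\hat\Sigma_{\lambda}=\hat\Sigma+\lambda I$ is symmetric with every eigenvalue at least $\lambda>0$. Concretely, diagonalize $\hat\Sigma=V D V^\top$ with $V$ orthogonal and $D=\operatorname{diag}(d_1,\dots,d_m)$, $d_i\ge0$. Then
\begin{align*}
\hat\Sigma_{\lambda}=V(D+\lambda I)V^\top,
\qquad
\hat\Sigma_{\lambda}^{-1}=V(D+\lambda I)^{-1}V^\top,
\end{align*}
where the inverse exists because each diagonal entry $d_i+\lambda\ge\lambda>0$.

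The spectral norm of a symmetric matrix equals its largest eigenvalue in absolute value. The eigenvalues of $\hat\Sigma_{\lambda}^{-1}$ are $(d_i+\lambda)^{-1}\in(0,\lambda^{-1}]$, so $\|\hat\Sigma_{\lambda}^{-1}\|_2=\max_i (d_i+\lambda)^{-1}\le\lambda^{-1}$.

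Alternatively, for any unit vector $v$ we have $v^\top\hat\Sigma_{\lambda}v=n^{-1}\|Zv\|_2^2+\lambda\ge\lambda$. This gives $\lambda_{\min}(\hat\Sigma_{\lambda})\ge\lambda$, and the bound follows by taking reciprocals.

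There is no real obstacle here. The only point to state explicitly is that positive semidefiniteness of $\hat\Sigma$ is what allows the shift by $\lambda I$ to bound every eigenvalue below by $\lambda$.
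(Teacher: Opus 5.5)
Your proof is correct and is essentially the paper's argument. Both write $\|\hat\Sigma_{\lambda}^{-1}\|_2 = 1/\lambda_{\min}(\hat\Sigma+\lambda I) = 1/(\lambda+\lambda_{\min}(\hat\Sigma))$ and use positive semidefiniteness of $\hat\Sigma$ to get the bound $\lambda^{-1}$; your diagonalization and Rayleigh-quotient versions just spell out that eigenvalue-shift step in more detail.
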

\begin{proof}{Proof.}
Note that
\begin{align*}
\|\hat\Sigma_{\lambda}^{-1}\|_2
=\|(\hat\Sigma+\lambda I)^{-1}\|_2
=\frac{1}{\lambda_{\text{min}}(\hat\Sigma+\lambda I)}
=\frac{1}{\lambda+\lambda_{\text{min}}(\hat\Sigma)}
\le\frac{1}{\lambda},
\end{align*}
where the last step follows since $\hat\Sigma$ is positive semi-definite. \qed
\end{proof}
Next, we turn to analyzing the quantity $\hat\Sigma_{\lambda}^{-1}\Sigma$; this quantity is a key part of our analysis of the excess MSE of $\hat\beta^{(\lambda)}$.
\begin{lemma}
\label{lem:sigmahatinvsigma}
On event $E_{\Delta}$ with $\Delta\le\lambda$, we have $\|\hat\Sigma_{\lambda}^{-1}\Sigma\|_2\le1$.
\end{lemma}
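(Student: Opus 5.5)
The plan is to rewrite $\Sigma$ in terms of $\hat\Sigma_{\lambda}$ and a positive semidefinite remainder, and then bound the non-symmetric product directly. Set $\hat\Delta=\hat\Sigma-\Sigma$ and $M=\lambda I+\hat\Delta$. Then
\begin{align*}
\hat\Sigma_{\lambda}=\Sigma+M,
\qquad
\hat\Sigma_{\lambda}^{-1}\Sigma=(\Sigma+M)^{-1}\Sigma=I-\hat\Sigma_{\lambda}^{-1}M .
\end{align*}
On $E_{\Delta}$ with $\Delta\le\lambda$ we have $\|\hat\Delta\|_2\le\lambda$, so $0\preceq M\preceq 2\lambda I$. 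This also gives $\Sigma\preceq\hat\Sigma_{\lambda}$, and Lemma~\ref{lem:sigmahatinv} gives $\|\hat\Sigma_{\lambda}^{-1}\|_2\le\lambda^{-1}$.

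The first step is a similarity argument. The matrix $\hat\Sigma_{\lambda}^{-1}\Sigma$ is similar to $\hat\Sigma_{\lambda}^{-1/2}\Sigma\hat\Sigma_{\lambda}^{-1/2}$. Since $\Sigma\preceq\hat\Sigma_{\lambda}$, the latter has eigenvalues in $[0,1]$, so the spectral radius of $\hat\Sigma_{\lambda}^{-1}\Sigma$ is at most $1$. The second step is to convert this into an operator-norm bound. I would try to show that the right singular vectors of $(\Sigma+M)^{-1}\Sigma$ are controlled using $v^\top\Sigma v\le v^\top(\Sigma+M)v$.

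The second step is the main obstacle, and I expect it to fail as stated. The spectral radius does not bound the operator norm of a non-normal matrix, and the constraint $0\preceq M\preceq2\lambda I$ does not prevent this. Take $\Sigma=\mathrm{diag}(1,0)$ and $M=\epsilon\begin{bmatrix}1&1\\1&1\end{bmatrix}$ with $\lambda=\epsilon$. This $M$ is of the form $\lambda I+\hat\Delta$ with $\|\hat\Delta\|_2=\epsilon$. Then $\hat\Sigma_{\lambda}^{-1}\Sigma e_1=(1,-1)^\top$, which has norm $\sqrt2$.

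So I would only aim for a constant-factor bound. Writing $\hat\Sigma_{\lambda}^{-1}M=\hat\Sigma_{\lambda}^{-1/2}\cdot\hat\Sigma_{\lambda}^{-1/2}M^{1/2}\cdot M^{1/2}$, the middle factor has norm at most $1$ because $M\preceq\hat\Sigma_{\lambda}$. The outer factors have norms at most $\lambda^{-1/2}$ and $(2\lambda)^{1/2}$. This gives
\begin{align*}
\|\hat\Sigma_{\lambda}^{-1}\Sigma\|_2\le 1+\sqrt2 .
\end{align*}
Downstream uses in Proposition~\ref{prop:regularized} would then absorb this constant, or use the symmetric form $\|\Sigma^{1/2}\hat\Sigma_{\lambda}^{-1/2}\|_2\le1$, which is valid.
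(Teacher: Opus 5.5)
Your diagnosis is right, and it locates exactly where the paper's own proof fails. The paper writes $\|(I+\Sigma^{-1}(\hat\Delta+\lambda I))^{-1}\|_2=1/\lambda_{\min}(I+\Sigma^{-1}(\hat\Delta+\lambda I))$ and calls $\Sigma^{-1}(\hat\Delta+\lambda I)$ positive semi-definite. That matrix is not symmetric, so the argument only controls eigenvalues. In other words, it proves your first step (spectral radius at most $1$) and nothing more. Your $1+\sqrt2$ bound is valid. Your symmetric remark is also the right downstream repair: Proposition~\ref{prop:accuracy} only needs $\|\Sigma^{1/2}\hat\Sigma_{\lambda}^{-1}\|_2\le\|\Sigma^{1/2}\hat\Sigma_{\lambda}^{-1/2}\|_2\,\|\hat\Sigma_{\lambda}^{-1/2}\|_2\le\lambda^{-1/2}$.

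The gap is your counterexample, which is not admissible. With $\Sigma=\mathrm{diag}(1,0)$ and $\hat\Delta=M-\lambda I=\epsilon\begin{bmatrix}0&1\\1&0\end{bmatrix}$ you get $\hat\Sigma=\begin{bmatrix}1&\epsilon\\\epsilon&0\end{bmatrix}$. Its determinant is $-\epsilon^2$, so it cannot equal $n^{-1}Z^\top Z$. The inconsistency shows in your own write-up: $\det\hat\Sigma_{\lambda}=\epsilon$ and $\lambda_{\max}(\hat\Sigma_{\lambda})\ge1+\epsilon$ give $\lambda_{\min}(\hat\Sigma_{\lambda})\le\epsilon/(1+\epsilon)<\lambda$. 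That contradicts Lemma~\ref{lem:sigmahatinv}, which you invoke and which your $1+\sqrt2$ bound relies on. Checking $M\succeq0$ and $\|\hat\Delta\|_2\le\lambda$ is not enough; you must also keep $\hat\Sigma\succeq0$.

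Here is a valid example. Take $\hat\Sigma=vv^\top$ with $v=(1,c)^\top$, i.e.\ every row of $Z$ equals $v^\top$. Then $\hat\Delta=\begin{bmatrix}0&c\\c&c^2\end{bmatrix}$; set $\lambda=\Delta=\|\hat\Delta\|_2=\tfrac12\bigl(c^2+c\sqrt{c^2+4}\bigr)$. This gives $\det\hat\Sigma_{\lambda}=\lambda(1+c^2+\lambda)$ and
\begin{align*}
\|\hat\Sigma_{\lambda}^{-1}\Sigma e_1\|_2^2=\frac{(c^2+\lambda)^2+c^2}{\lambda^2(1+c^2+\lambda)^2}\longrightarrow2\qquad(c\to0).
\end{align*}
At $c=1/10$ this is about $1.69$, so the norm is about $1.30>1$.

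The paper's argument inverts $\Sigma$, so you should also make $\Sigma$ nonsingular. Replace $\Sigma$ by $\mathrm{diag}(1,\eta)$ and add $\eta$ to the $(2,2)$ entry of $\hat\Sigma$. Then $\hat\Delta$ is unchanged, $\hat\Sigma$ stays positive semi-definite, and by continuity the violation persists for small $\eta>0$. For instance, at $c=1/10$, $\eta=1/100$ the norm is still about $1.25$.

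With this repair your conclusion stands: the lemma is false as an operator-norm statement. Only your constant-factor bound or the symmetric bound $\|\Sigma^{1/2}\hat\Sigma_{\lambda}^{-1/2}\|_2\le1$ should be carried forward.
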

\begin{proof}{Proof.}
We have
\begin{align*}
\|\hat\Sigma_{\lambda}^{-1}\Sigma\|_2
&=\|(\Sigma+\hat\Delta+\lambda I)^{-1}\Sigma\|_2 \\
&=\|(I+\Sigma^{-1}(\hat\Delta+\lambda I))^{-1}\|_2 \\
&=\frac{1}{\lambda_{\text{min}}(I+\Sigma^{-1}(\hat\Delta+\lambda I))} \\
&=\frac{1}{1+\lambda_{\text{min}}(\Sigma^{-1}(\hat\Delta+\lambda I))} \\
&\le1,
\end{align*}
where we have used the fact that $\Sigma^{-1}(\hat\Delta+\lambda I)$ is positive semi-definite on event $E_{\Delta}$. \qed
\end{proof}
Now, we state and prove our first main result, which bounds the accuracy of $\hat\beta^{(\lambda)}$ on events $E_{\Delta}$ and $E_{\zeta}'$. Intuitively, the first term in the bound is the bias term (which becomes small as $\lambda\to0$), and the second term is the variance term (which becomes small as $n\to\infty$). We generally think of $\lambda$ as scaling as $1/\sqrt{n}$.
\begin{proposition}[Accuracy]
\label{prop:accuracy}
On event $E_{\Delta}$ with $\Delta\le\lambda$ and $E_{\zeta}'$ with $\zeta\ge1$, we have
\begin{align*}
\mathcal{E}(\hat\beta^{(\lambda)})\le2m\lambda\cdot\|\beta^*\|_2^2+\frac{8m^2\sigma^2\zeta}{n}.
\end{align*}
\end{proposition}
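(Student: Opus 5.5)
The plan is to start from the closed form of the ridge estimator derived above,
\begin{align*}
\hat\beta^{(\lambda)}-\beta^*=-\lambda\hat\Sigma_{\lambda}^{-1}\beta^*+\frac{1}{n}\hat\Sigma_{\lambda}^{-1}Z^\top E,
\end{align*}
and combine it with Lemma~\ref{lem:excessmse}, which gives $\mathcal{E}(\hat\beta^{(\lambda)})=\|\Sigma^{1/2}(\hat\beta^{(\lambda)}-\beta^*)\|_2^2$. Using $\|a+b\|_2^2\le2\|a\|_2^2+2\|b\|_2^2$ splits the excess MSE into a bias term $2\lambda^2\|\Sigma^{1/2}\hat\Sigma_{\lambda}^{-1}\beta^*\|_2^2$ and a variance term $\frac{2}{n^2}\|\Sigma^{1/2}\hat\Sigma_{\lambda}^{-1}Z^\top E\|_2^2$. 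I would then bound these two terms separately.

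For the bias term, write $\|\Sigma^{1/2}\hat\Sigma_{\lambda}^{-1}\beta^*\|_2^2=\beta^{*\top}\hat\Sigma_{\lambda}^{-1}\Sigma\hat\Sigma_{\lambda}^{-1}\beta^*$. This is at most $\|\hat\Sigma_{\lambda}^{-1}\Sigma\|_2\,\|\hat\Sigma_{\lambda}^{-1}\|_2\,\|\beta^*\|_2^2$. On $E_\Delta$ with $\Delta\le\lambda$, Lemma~\ref{lem:sigmahatinvsigma} bounds the first factor by $1$, and Lemma~\ref{lem:sigmahatinv} bounds the second by $\lambda^{-1}$. The bias term is therefore at most $2\lambda\|\beta^*\|_2^2$, which is within the claimed $2m\lambda\|\beta^*\|_2^2$ since $m\ge1$.

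For the variance term, set $w=\hat\Sigma_{\lambda}^{-1/2}Z^\top E$, so the term equals $\frac{2}{n^2}w^\top\hat\Sigma_{\lambda}^{-1/2}\Sigma\hat\Sigma_{\lambda}^{-1/2}w$. The key observation is that on $E_\Delta$ with $\Delta\le\lambda$ we have $\hat\Sigma_{\lambda}=\Sigma+(\hat\Delta+\lambda I)\succeq\Sigma$, because $\hat\Delta+\lambda I\succeq(\lambda-\Delta)I\succeq0$. Hence $\|\hat\Sigma_{\lambda}^{-1/2}\Sigma\hat\Sigma_{\lambda}^{-1/2}\|_2\le1$; this is the symmetric version of Lemma~\ref{lem:sigmahatinvsigma}, proved by the same eigenvalue argument. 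The variance term is then at most $\frac{2}{n^2}\|w\|_2^2$, and the second clause of $E_\zeta'$ gives $\|w\|_2^2\le5nm\sigma^2\zeta$, so the variance term is $O(m\sigma^2\zeta/n)$.

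The step I expect to need care is the constant bookkeeping in this variance bound, not any conceptual issue. The direct route gives $10m\sigma^2\zeta/n$. This is at most $8m^2\sigma^2\zeta/n$ once $m\ge2$, which covers the application in Proposition~\ref{prop:regularized} where $m=3$. For $m=1$ I would revisit where the factor $5$ in $E_\zeta'$ comes from: the Laurent--Massart bound $m+2m\sqrt\zeta+2\zeta\le5m\zeta$ is loose, and it can be tightened, or one can avoid the factor-$2$ split by expanding the cross term directly. If that still falls short, I would state the bound with the slightly larger constant, which does not affect how the result is used later.
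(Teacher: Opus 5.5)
Your skeleton matches the paper's: the closed form for $\hat\beta^{(\lambda)}-\beta^*$, Lemma~\ref{lem:excessmse}, and a bias/variance split. Your variance step, however, is different and sharper. The paper writes $\Sigma=\hat\Sigma_{\lambda}-(\hat\Delta+\lambda I)$ and bounds the correction term in absolute value with a trace estimate that uses both clauses of $E_{\zeta}'$. It arrives at $\sqrt{\mathcal{E}(\hat\beta^{(\lambda)})}\le\sqrt{m\lambda}\,\|\beta^*\|_2+4m\sigma\sqrt{\zeta/n}$. Squaring this with $(a+b)^2\le2a^2+2b^2$ gives $32m^2\sigma^2\zeta/n$, so the paper's own chain does not actually produce the constant $8$. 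You instead use that the correction has a sign ($\Sigma\preceq\hat\Sigma_{\lambda}$ on $E_{\Delta}$), so you need only the second clause of $E_{\zeta}'$. This gives $\mathcal{E}(\hat\beta^{(\lambda)})\le2\lambda\|\beta^*\|_2^2+10m\sigma^2\zeta/n$, which implies the stated bound for every $m\ge2$, in particular for $m=3$ in Proposition~\ref{prop:regularized}.

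The open case $m=1$ cannot be closed by either of your first two remedies, because the inequality is false there. The proposition is a deterministic statement on $E_{\Delta}\cap E_{\zeta}'$, and the factor $5$ is built into the definition of $E_{\zeta}'$. Sharpening Laurent--Massart would therefore change the hypothesis rather than prove the statement. Expanding the cross term cannot help either, because your bound is attained. Take $m=1$, $\Delta=\lambda$, $\Sigma=\lambda+\eta$, a sample with $\hat\Sigma=\eta>0$, and $\lambda(\beta^*)^2=5\sigma^2\zeta/n$. Choose noise with $\frac{1}{n}Z^\top E=-\mathrm{sign}(\beta^*)\sqrt{5\sigma^2\zeta(\lambda+\eta)/n}$; the second clause of $E_{\zeta}'$ then holds with equality and the first holds automatically. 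Then
\begin{align*}
\mathcal{E}(\hat\beta^{(\lambda)})=\frac{\Sigma\,\bigl(\frac{1}{n}Z^\top E-\lambda\beta^*\bigr)^2}{(\hat\Sigma+\lambda)^2}=\frac{5\sigma^2\zeta}{n}\Bigl(1+\sqrt{\tfrac{\lambda}{\lambda+\eta}}\Bigr)^2\longrightarrow\frac{20\sigma^2\zeta}{n}\qquad(\eta\downarrow0),
\end{align*}
while the claimed right-hand side is $2\lambda(\beta^*)^2+8\sigma^2\zeta/n=18\sigma^2\zeta/n$. So your fallback is the correct resolution: the proposition holds for $m\ge2$, and for general $m$ the valid statement is your $2\lambda\|\beta^*\|_2^2+10m\sigma^2\zeta/n$.

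One smaller repair concerns your bias step, which cites Lemma~\ref{lem:sigmahatinvsigma}. The bound $\|\hat\Sigma_{\lambda}^{-1}\Sigma\|_2\le1$ is not true in general for $m\ge2$. Its proof treats $1/\lambda_{\min}(A)$ as $\|A^{-1}\|_2$ for the non-symmetric matrix $A=I+\Sigma^{-1}(\hat\Delta+\lambda I)$. For example, $\Sigma=\mathrm{diag}(1,\epsilon)$ and $\hat\Sigma=\bigl[\begin{smallmatrix}1&\lambda/2\\ \lambda/2&\lambda^2/4\end{smallmatrix}\bigr]$ give a norm near $\sqrt{5}/2$ for small $\epsilon,\lambda$. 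Use the symmetric fact you already established instead: $\|\Sigma^{1/2}\hat\Sigma_{\lambda}^{-1}\beta^*\|_2\le\|\Sigma^{1/2}\hat\Sigma_{\lambda}^{-1/2}\|_2\,\|\hat\Sigma_{\lambda}^{-1/2}\|_2\,\|\beta^*\|_2\le\lambda^{-1/2}\|\beta^*\|_2$. This yields the same bias term $2\lambda\|\beta^*\|_2^2$.
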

\begin{proof}{Proof.}
By Lemma~\ref{lem:excessmse}, we have $\mathcal{E}(\hat\beta^{(\lambda)})=\|\Sigma^{1/2}(\hat\beta^{(\lambda)}-\beta^*)\|_2^2$. Note that
\begin{align*}
\|\Sigma^{1/2}(\hat\beta^{(\lambda)}-\beta^*)\|_2
&=\frac{1}{n}\|\Sigma^{1/2}\hat\Sigma_{\lambda}^{-1}(-n\lambda\beta^*+Z^\top E)\|_2 \\
&\le\lambda\|\Sigma^{1/2}\Sigma_{\lambda}^{-1}\|_2\cdot\|\beta^*\|_2+\frac{1}{n}\|\Sigma^{1/2}\hat\Sigma_{\lambda}^{-1}Z^\top E\|_2 \\
&\le\lambda \|\beta^*\|_2\cdot\sqrt{m\|\Sigma_{\lambda}^{-1}\Sigma\|_2\cdot\|\Sigma_{\lambda}^{-1}\|_2}+\frac{1}{n}\|(\hat\Sigma_{\lambda}-\hat\Delta-\lambda I)^{1/2}\hat\Sigma_{\lambda}^{-1}Z^\top E\|_2 \\
&\le \sqrt{m\lambda}\cdot\|\beta^*\|_2+\frac{1}{n}\sqrt{\text{tr}\left(E^\top Z\hat\Sigma_{\lambda}^{-1}(\hat\Sigma_{\lambda}-\hat\Delta-\lambda I)\hat\Sigma_{\lambda}^{-1}Z^\top E\right)} \\
&\le \sqrt{m\lambda}\cdot\|\beta^*\|_2+\frac{1}{n}\sqrt{\|\hat\Sigma_{\lambda}^{-1/2}Z^\top E\|_2^2+m\|\hat\Delta+\lambda I\|_2\cdot\|\hat\Sigma_{\lambda}^{-1}Z^\top E\|_2^2} \\
&\le\sqrt{m\lambda}\cdot\|\beta^*\|_2+\sqrt{\frac{5m\sigma^2\zeta}{n}\left(1+\frac{m\Delta+\lambda}{\lambda}\right)} \\
&\le\sqrt{m\lambda}\cdot\|\beta^*\|_2+4m\sigma\sqrt{\frac{\zeta}{n}},
\end{align*}
where the second-to-last step follows by Lemmas~\ref{lem:sigmahatinv} \&~\ref{lem:sigmahatinvsigma} and events $E_{\Delta}$ and $E_{\zeta}'$. The claim follows from the inequality $(a+b)^2\le2(a^2+b^2)$. \qed
\end{proof}
Our second major result provides a stability guarantee for $\hat\beta^{(\lambda)}$; in particular, it says $\hat\beta^{(\lambda)}$ concentrates around a deterministic quantity $\bar\beta^{(\lambda)}$ on events $E_{\Delta}$ and $E_{\zeta}'$ (which, by Lemmas~\ref{lem:eventprobability1} \&~\ref{lem:eventprobability2}, holds with high probability over $W$). This result straightforwardly implies that all $\hat\beta^{(\lambda)}$ are pairwise close together. One thing to note in this result is that both $\Delta$ and $\lambda$ scale as $1/\sqrt{n}$; thus, to obtain a bound that goes to zero, we need $\lambda/\Delta$ to be sufficiently large.
\begin{proposition}[Stability]
\label{prop:betahatstability}
Letting $\bar\beta^{(\lambda)}
=(I-\lambda\Sigma_{\lambda}^{-1})\beta^*$, on event $E_{\Delta}$ with $\Delta\le\lambda/4$ and event $E_{\zeta}'$ with $\zeta\in\mathbb{R}_{>0}$, we have
\begin{align*}
\|\hat\beta^{(\lambda)}-\bar\beta^{(\lambda)}\|_2^2\le\frac{4\Delta^2\|\beta^*\|_2^2}{\lambda^2}
+\frac{5m\sigma^2\zeta}{n\lambda}.
\end{align*}
\end{proposition}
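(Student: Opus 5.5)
We decompose $\hat\beta^{(\lambda)}-\bar\beta^{(\lambda)}$ using the closed form $\hat\beta^{(\lambda)}=(I-\lambda\hat\Sigma_{\lambda}^{-1})\beta^*+\frac{1}{n}\hat\Sigma_{\lambda}^{-1}Z^\top E$. Subtracting $\bar\beta^{(\lambda)}=(I-\lambda\Sigma_{\lambda}^{-1})\beta^*$ gives a bias-fluctuation term and a noise term:
\begin{align*}
\hat\beta^{(\lambda)}-\bar\beta^{(\lambda)}=\lambda(\Sigma_{\lambda}^{-1}-\hat\Sigma_{\lambda}^{-1})\beta^*+\frac{1}{n}\hat\Sigma_{\lambda}^{-1}Z^\top E.
\end{align*}
By $(a+b)^2\le2a^2+2b^2$, it suffices to bound the squared norm of each term by half the corresponding piece of the target. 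This uses the factor $4$ in the first term and the slack in the second, since $E_\zeta'$ gives $5nm\sigma^2\zeta/\lambda$ directly.

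\textbf{Noise term.} On $E_{\zeta}'$ we have $\|\hat\Sigma_{\lambda}^{-1}Z^\top E\|_2^2\le 5nm\sigma^2\zeta/\lambda$. Dividing by $n^2$ gives $\frac{5m\sigma^2\zeta}{n\lambda}$, which is exactly the second term of the target.

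\textbf{Bias-fluctuation term.} We use the resolvent identity
\begin{align*}
\Sigma_{\lambda}^{-1}-\hat\Sigma_{\lambda}^{-1}=\hat\Sigma_{\lambda}^{-1}(\hat\Sigma_{\lambda}-\Sigma_{\lambda})\Sigma_{\lambda}^{-1}=\hat\Sigma_{\lambda}^{-1}\hat\Delta\Sigma_{\lambda}^{-1}.
\end{align*}
Since $\Sigma\succeq0$, we have $\|\Sigma_{\lambda}^{-1}\|_2\le\lambda^{-1}$, and Lemma~\ref{lem:sigmahatinv} gives $\|\hat\Sigma_{\lambda}^{-1}\|_2\le\lambda^{-1}$. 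On $E_\Delta$ this yields
\begin{align*}
\lambda\|\hat\Sigma_{\lambda}^{-1}\hat\Delta\Sigma_{\lambda}^{-1}\beta^*\|_2\le\lambda\cdot\lambda^{-1}\cdot\Delta\cdot\lambda^{-1}\|\beta^*\|_2=\Delta\|\beta^*\|_2/\lambda,
\end{align*}
so its square is $\Delta^2\|\beta^*\|_2^2/\lambda^2$. This is a quarter of the first target term, leaving room for the factor $2$ from the triangle inequality. The hypothesis $\Delta\le\lambda/4$ is not strictly needed with this crude bound. If a sharper route is wanted, one can instead use $\|\hat\Sigma_\lambda^{-1}\|_2\le(\lambda-\Delta)^{-1}\le\frac{4}{3\lambda}$, which is where that condition would enter.

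\textbf{Combining and the main obstacle.} Summing, $\|\hat\beta^{(\lambda)}-\bar\beta^{(\lambda)}\|_2^2\le 2\Delta^2\|\beta^*\|_2^2/\lambda^2+2\cdot\frac{5m\sigma^2\zeta}{n\lambda}$. The first term fits inside $4\Delta^2\|\beta^*\|_2^2/\lambda^2$, but the second is twice the claimed noise constant, so the weighted split $(a+b)^2\le(1+c)a^2+(1+1/c)b^2$ does not close it. The main obstacle is matching the noise constant exactly. The cleanest fix is to expand $\|a+b\|^2=\|a\|^2+\|b\|^2+2a^\top b$ and control the cross term. The alternative is to tighten the bias term using $\Delta\le\lambda/4$ and absorb everything, noting that the definition of $E'_\zeta$ already has slack: Laurent--Massart gives $m+2m\sqrt\zeta+2\zeta\le5m\zeta$ only for $\zeta\ge1$. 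If neither reaches the stated constant exactly, the argument still proves the bound up to an absolute factor of $2$ on the noise term.
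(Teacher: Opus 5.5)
Your decomposition $\hat\beta^{(\lambda)}-\bar\beta^{(\lambda)}=a+b$, with $a=\lambda(\Sigma_\lambda^{-1}-\hat\Sigma_\lambda^{-1})\beta^*$ and $b=\frac1n\hat\Sigma_\lambda^{-1}Z^\top E$, is the one the paper uses. Your resolvent-identity bound $\|a\|_2\le\Delta\|\beta^*\|_2/\lambda$ is cleaner than the paper's Neumann-series bound $\|a\|_2\le2\Delta\|\beta^*\|_2/\lambda$, and a factor $2$ sharper. That series bound is the only place the paper needs $\Delta\le\lambda/4$ (so the ratio is at most $1/2$), and your route removes the hypothesis. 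Your aside has the direction backwards, though: $(\lambda-\Delta)^{-1}$ is weaker, not sharper, than the $\lambda^{-1}$ you already get from $\hat\Sigma\succeq0$.

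The gap is the one you flag: you prove $2\Delta^2\|\beta^*\|_2^2/\lambda^2+10m\sigma^2\zeta/(n\lambda)$, not the statement. Neither proposed repair can close it. On $E_\Delta\cap E'_\zeta$ you only know norms, so the cross term $2a^\top b$ can be as large as $2\|a\|_2\|b\|_2$. The Laurent--Massart slack only affects $\mathbb{P}_W[E'_\zeta]$; on the event itself you may use nothing beyond the constant $5$ in its definition.

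In fact no repair exists, because the inequality as stated is false. Take $m=1$, $\lambda=\beta^*=\sigma=\zeta=1$, $n=5$, a feature $z\sim\mathcal{N}(0,0.01)$ (so $\Sigma=0.01$), and $\Delta=0.01\le\lambda/4$. Consider a sample with $\hat\Sigma=10^{-4}$, so $|\hat\Delta|=0.0099$, and noise with $Z^\top E=-5$. Both conditions of $E'_\zeta$ hold strictly: $25/1.0001^2<25$ and $25/1.0001<25$. Yet $a=1/1.01-1/1.0001\approx-0.0098$ and $b=-1/1.0001\approx-0.9999$, so $\|\hat\beta^{(\lambda)}-\bar\beta^{(\lambda)}\|_2^2\approx1.0195$. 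The claimed bound is $4\cdot10^{-4}+1=1.0004$. Since every constraint holds strictly, an open set of realizations with positive probability violates the claim.

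The paper's proof reaches the stated constants only because its first display bounds $\|a+b\|_2^2$ by $\|a\|_2^2+\|b\|_2^2$. That silently drops exactly the cross term you declined to drop.

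So your factor-$2$ version is a correct statement. The cleanest correct form is the unsquared triangle-inequality bound $\|\hat\beta^{(\lambda)}-\bar\beta^{(\lambda)}\|_2\le\Delta\|\beta^*\|_2/\lambda+\sqrt{5m\sigma^2\zeta/(n\lambda)}$. For the $\Gamma$-generalization, use $\Delta+\gamma$ in place of $\Delta$; the resolvent argument then needs only $\Sigma+\Gamma\succeq0$. This unsquared form is all the proof of Proposition~\ref{prop:regularized} actually uses, since it immediately takes a square root and splits it into a sum. So the downstream constants survive.
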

\begin{proof}{Proof.}
We prove a slightly more general result, where we let $\bar\beta^{(\lambda)}=(I-\lambda(\Sigma+\Gamma+\lambda I)^{-1})\beta^*$ for some positive semi-definite $\Gamma\in\mathbb{R}^{m\times m}$ satisfying $\gamma=\|\Gamma\|_2\le\lambda/4$. Then, letting $\hat\Gamma=\hat\Delta-\Gamma$ (so $\|\hat\Gamma\|_2\le\gamma+\Delta$), we show that
\begin{align*}
\|\hat\beta^{(\lambda)}-\bar\beta^{(\lambda)}\|_2^2\le\frac{4(\Delta+\gamma)^2\|\beta^*\|_2^2}{\lambda^2}
+\frac{5m\sigma^2\zeta}{n\lambda}.
\end{align*}
The original result follows from taking $\Gamma=0$, so $\bar\beta^{(\lambda)}$ is unchanged and $\gamma=0$. Now, note that
\begin{align*}
\|\hat\beta^{(\lambda)}-\bar\beta^{(\lambda)}\|_2^2
&\le\lambda^2\|\beta^*\|_2^2\cdot\|(\Sigma+\hat\Delta+\lambda I)^{-1}-(\Sigma+\Gamma+\lambda I)^{-1}\|_2^2
+\frac{1}{n^2}\|\hat\Sigma_{\lambda}^{-1}Z^\top E\|_2^2.
\end{align*}
For the first term, we have
\begin{align*}
\|(\Sigma+\hat\Delta+\lambda I)^{-1}-(\Sigma+\Gamma+\lambda I)^{-1}\|_2
&=\left\|\left((I+(\Sigma+\Gamma+\lambda I)^{-1}\hat\Gamma)^{-1}-I\right)(\Sigma+\Gamma+\lambda I)^{-1}\right\|_2 \\
&\le\frac{1}{\lambda}\cdot\|(I+(\Sigma+\Gamma+\lambda I)^{-1}\hat\Gamma)^{-1}-I\|_2 \\
&=\frac{1}{\lambda}\cdot\left\|\sum_{i=1}^{\infty}(-(\Sigma+\Gamma+\lambda I)^{-1}\hat\Gamma)^i\right\|_2 \\
&\le\frac{1}{\lambda}\sum_{i=1}^{\infty}\|(\Sigma+\Gamma+\lambda I)^{-1}\hat\Gamma\|_2^i \\
&\le\frac{1}{\lambda}\sum_{i=1}^{\infty}\left(\frac{\Delta+\gamma}{\lambda}\right)^i \\
&=\frac{\Delta+\gamma}{\lambda^2(1-(\Delta+\gamma)/\lambda)} \\
&\le\frac{2(\Delta+\gamma)}{\lambda^2},
\end{align*}
where the Taylor expansion is justified since $\|(\Sigma+\Gamma+\lambda I)^{-1}\hat\Gamma\|_2<1$.
The bound on the second term follows immediately from event $E_{\zeta}'$. \qed
\end{proof}
Our next result is a straightforward consequence of our previous result---it says that since $\hat\beta^{(\lambda)}$ concentrates to $\bar\beta^{(\lambda)}$, then the optimism bias when evaluating under $\hat\beta^{(\lambda)}$ correspondingly concentrates to the optimism bias under $\bar\beta^{(\lambda)}$.
\begin{proposition}
\label{prop:biasstability}
For any $\beta\in\mathbb{R}^m$, letting $\bar\beta^{(\lambda)}=(I-\lambda\Sigma_{\lambda}^{-1})\beta^*$, on event $E_{\Delta}$ for some $\Delta\le\lambda/4$ and on event $E_{\zeta}'$ for some $\zeta\in\mathbb{R}_{>0}$, we have
\begin{align*}
|\mathcal{R}(\beta;\hat\beta^{(\lambda)})-\mathcal{R}(\beta;\bar\beta^{(\lambda)})|
=|\mathcal{P}(\beta;\hat\beta^{(\lambda)})-\mathcal{P}(\beta;\bar\beta^{(\lambda)})|
\le\phi_{\text{max}}\cdot\sqrt{\frac{4\Delta^2\|\beta^*\|_2^2}{\lambda^2}+\frac{5m\sigma^2\zeta}{n\lambda}}.
\end{align*}
\end{proposition}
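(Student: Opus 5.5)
The proof has two parts: first show that the two optimism biases differ by exactly the difference in estimated policy improvements, then bound that difference by combining Lemma~\ref{lem:performancestability} with Proposition~\ref{prop:betahatstability}.

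\textbf{Step 1: the equality.} By definition,
\begin{align*}
\mathcal{R}(\beta;\beta')=\mathcal{P}(\beta;\beta')-\mathcal{P}(\beta;\beta^*).
\end{align*}
The subtracted term $\mathcal{P}(\beta;\beta^*)$ depends only on $\beta$ and the true parameter $\beta^*$. It does not depend on the evaluating parameter $\beta'$. So it is the same whether $\beta'=\hat\beta^{(\lambda)}$ or $\beta'=\bar\beta^{(\lambda)}$, and it cancels in the difference. This gives
\begin{align*}
\mathcal{R}(\beta;\hat\beta^{(\lambda)})-\mathcal{R}(\beta;\bar\beta^{(\lambda)})=\mathcal{P}(\beta;\hat\beta^{(\lambda)})-\mathcal{P}(\beta;\bar\beta^{(\lambda)}),
\end{align*}
and hence equality of the absolute values.

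\textbf{Step 2: the inequality.} The policy $\pi^{\beta}$ is held fixed throughout, because the first argument $\beta$ is the same on both sides. Only the evaluating parameter changes. This matters because Lemma~\ref{lem:performancestability} controls the sensitivity of $\mathcal{P}$ to its second argument only; sensitivity to the first argument could be arbitrarily large. Concretely, I will:
\begin{enumerate}
\item Invoke Proposition~\ref{prop:betahatstability}. Its hypotheses are exactly ours: $E_{\Delta}$ with $\Delta\le\lambda/4$ and $E_{\zeta}'$. It gives $\|\hat\beta^{(\lambda)}-\bar\beta^{(\lambda)}\|_2\le\epsilon$ with
\begin{align*}
\epsilon=\sqrt{\frac{4\Delta^2\|\beta^*\|_2^2}{\lambda^2}+\frac{5m\sigma^2\zeta}{n\lambda}}.
\end{align*}
\item Apply Lemma~\ref{lem:performancestability} with $\beta_1=\hat\beta^{(\lambda)}$ and $\beta_2=\bar\beta^{(\lambda)}$. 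This yields
\begin{align*}
|\mathcal{P}(\beta;\hat\beta^{(\lambda)})-\mathcal{P}(\beta;\bar\beta^{(\lambda)})|\le\epsilon\,\phi_{\text{max}},
\end{align*}
which is the claimed bound.
\end{enumerate}

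The only point needing care is the domain of $\beta_1$ and $\beta_2$. Lemma~\ref{lem:performancestability} is stated for parameters in $\mathcal{B}$, but its proof only uses linearity of $f^{\beta'}$ in $\beta'$ together with $\mathbb{E}_x[\max_t\|\phi(x,t)\|_2]\le\phi_{\text{max}}$. It therefore applies to any vectors in $\mathbb{R}^m$, which covers both $\hat\beta^{(\lambda)}$ and $\bar\beta^{(\lambda)}$. Beyond this, there is no real obstacle: the result is a direct composition of the two earlier results.
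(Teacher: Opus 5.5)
Your proposal is correct and is essentially the paper's own proof: cancelling $\mathcal{P}(\beta;\beta^*)$ gives the equality, and the bound follows by inserting the estimate of Proposition~\ref{prop:betahatstability} into Lemma~\ref{lem:performancestability}. The paper states a slightly more general version with $\bar\beta^{(\lambda)}=(I-\lambda(\Sigma+\Gamma+\lambda I)^{-1})\beta^*$ for a small positive semi-definite perturbation $\Gamma$, which it needs later for Proposition~\ref{prop:regularized}; with $\Gamma=0$ this is exactly your argument, and your remark that the lemma extends from $\mathcal{B}$ to all of $\mathbb{R}^m$ is correct extra care.
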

\begin{proof}{Proof.}
We prove a slightly more general result, where we let $\bar\beta^{(\lambda)}=(I-\lambda(\Sigma+\Gamma+\lambda I)^{-1})\beta^*$ for some positive semi-definite $\Gamma\in\mathbb{R}^{m\times m}$ satisfying $\gamma=\|\Gamma\|_2\le\lambda/4$. Then, letting $\hat\Gamma=\hat\Delta-\Gamma$ (so $\|\hat\Gamma\|_2\le\Delta+\gamma$), we show that
\begin{align*}
|\mathcal{R}(\beta;\hat\beta^{(\lambda)})-\mathcal{R}(\beta;\bar\beta^{(\lambda)})|
=|\mathcal{P}(\beta;\hat\beta^{(\lambda)})-\mathcal{P}(\beta;\bar\beta^{(\lambda)})|
\le\phi_{\text{max}}\cdot\sqrt{\frac{4(\Delta+\gamma)^2\|\beta^*\|_2^2}{\lambda^2}+\frac{5m\sigma^2\zeta}{n\lambda}}.
\end{align*}
The original result follows from taking $\Gamma=0$, so $\bar\beta^{(\lambda)}$ is unchanged and $\gamma=\Delta$. Now, note that
\begin{align*}
|\mathcal{R}(\beta;\hat\beta^{(\lambda)})-\mathcal{R}(\beta;\bar\beta^{(\lambda)})|
=|\mathcal{P}(\beta;\hat\beta^{(\lambda)})-\mathcal{P}(\beta;\bar\beta^{(\lambda)})|
\le\|\hat\beta^{(\lambda)}-\bar\beta^{(\lambda)}\|_2\cdot\phi_{\text{max}},
\end{align*}
where the inequality follows by Lemma~\ref{lem:performancestability}. Then, the result follows from Proposition~\ref{prop:betahatstability}.
\end{proof}

\subsection{Proof of Proposition~\ref{prop:regularized}}
\label{sec:prop:regularized:proof2}

Let $\Delta=\sqrt{18\log(8/\delta)/n}$ (so $\lambda=\alpha\Delta$) and $\zeta=\log(8/\delta)$. Note that $\|\beta^*\|_2=b$, and
\begin{align*}
\phi_{\text{max}}
=\mathbb{E}_x\left[\max_{t\in\mathcal{T}}\sqrt{x^2+t^2+(xt)^2}\right]
=\mathbb{E}_x[\sqrt{2x^2+1}]
\le\sqrt{2}\cdot\mathbb{E}_x[|x|]+1
=\frac{2}{\sqrt{\pi}}+1
\le3.
\end{align*}
Next, by Lemma~\ref{lem:eventprobability1}, we have $\mathbb{P}_Z[E_{\Delta}]\ge1-\delta/8$, and by Lemma~\ref{lem:eventprobability2}, we have $\mathbb{P}_W[E_{\zeta}']\ge1-\delta/8$. By a union bound, these events hold for all $\beta\in\{\hat\beta^{(\lambda)},\hat\beta^{(\lambda)\prime},\hat\beta^{(\lambda)\prime\prime},\hat\beta^{(\lambda)\prime\prime\prime}\}$. Now, we prove the three results on the events $E_{\Delta}$ and $E_{\zeta}'$ for all of these $\beta$.

\paragraph{Accuracy.}

By Proposition~\ref{prop:accuracy}, on event $E_{\Delta}$ and $E_{\zeta}'$, we have
\begin{align*}
\mathcal{E}(\hat\beta^{(\lambda)})
\le2m\lambda\cdot\|\beta^*\|_2^2+\frac{8m^2\sigma^2\zeta}{n}
=6b^2\alpha\sqrt{\frac{18\log(8/\delta)}{n}}+\frac{72\log(8/\delta)}{n}.
\end{align*}

\paragraph{Stability.}

Let
\begin{align*}
\bar{\Sigma}=\begin{bmatrix}
1 & 0 & 0 \\
0 & 1 & 1 \\
0 & 1 & 1
\end{bmatrix}
\qquad\text{and}\qquad
\Gamma=\begin{bmatrix}
1-p & 0 & 0 \\
0 & 0 & 1-p \\
0 & 1-p & 1-p
\end{bmatrix},
\end{align*}
so $\bar\Sigma=\Sigma+\Gamma$. Further define $\bar\Sigma_{\lambda}=\bar\Sigma+\lambda I$. Also, note that $\gamma=\|\Gamma\|_2=2(1-p)=\Delta$.
Furthermore, let $\bar\beta^{(\lambda)}=(I-\lambda\bar\Sigma_{\lambda}^{-1})\beta^*$. Then, on events $E_{\Delta}$ and $E_{\zeta}'$, by Proposition~\ref{prop:betahatstability}, we have
\begin{align*}
\|\hat\beta^{(\lambda)}-\bar\beta^{(\lambda)}\|_2
\le\sqrt{\frac{4(\Delta+\gamma)^2\|\beta^*\|_2^2}{\lambda^2}+\frac{5m\sigma^2\zeta}{n\lambda}}
\le\sqrt{\frac{4b^2(\Delta+\gamma)^2}{\lambda^2}+\frac{15\log(8/\delta)}{n\lambda}}
\le\frac{4b}{\alpha}+2\sqrt{\frac{\log(8/\delta)}{\alpha\sqrt{n}}},
\end{align*}
so the bound on $\|\hat\beta^{(\lambda)}-\hat\beta^{(\lambda)\prime}\|_2$ follows by the triangle inequality. Next, by Proposition~\ref{prop:biasstability}, we have
\begin{align*}
|\mathcal{P}(\hat\beta^{(\lambda)},\hat\beta^{(\lambda)\prime})-\mathcal{P}(\hat\beta^{(\lambda)};\bar\beta^{(\lambda)})|
\le\phi_{\text{max}}\|\hat\beta^{(\lambda)}-\bar\beta^{(\lambda)}\|_2
\le\frac{12b}{\alpha}+6\sqrt{\frac{\log(8/\delta)}{\alpha\sqrt{n}}}.
\end{align*}
Now, we bound $|\mathcal{P}(\hat\beta^{(\lambda)};\bar\beta^{(\lambda)})-\mathcal{P}(\bar\beta^{(\lambda)};\bar\beta^{(\lambda)})|$. To this end, let $\hat\xi=\hat\beta^{(\lambda)}-\bar\beta^{(\lambda)}$, and note that
\begin{align*}
\pi^{\bar\beta^{(\lambda)}}(x)&=\mathbbm{1}(x\ge0) \\
\pi^{\hat\beta^{(\lambda)}}(x)&=\mathbbm{1}\left(x\ge\frac{-\hat\xi_1}{((b/2)+\hat\xi_3)}\right).
\end{align*}
On events $E_{\Delta}$ and $E_{\zeta}'$, we have $\|\hat\xi\|_2\le\xi_{\text{max}}$, where
\begin{align*}
\xi_{\text{max}}=\frac{4b}{\alpha}+2\sqrt{\frac{\log(8/\delta)}{\alpha\sqrt{n}}}.
\end{align*}
By our assumptions on $n$ and $\alpha$, we have $\xi_{\text{max}}\le b/4$. Thus, defining $I=[-4\xi_{\text{max}}/b,4\xi_{\text{max}}/b]$, then $\pi^{\hat\beta^{(\lambda)}}(x)=\pi^{\bar\beta^{(\lambda)}}(x)$ for $x\not\in I$. As a consequence, we have
\begin{align*}
|\mathcal{P}(\hat\beta^{(\lambda)};\bar\beta^{(\lambda)})-\mathcal{P}(\bar\beta^{(\lambda)};\bar\beta^{(\lambda)})|
=\left|\mathbb{E}_x\left[\frac{x(\pi^{\hat\beta^{(\lambda)}}(x)-\pi^{\bar\beta^{(\lambda)}}(x))}{2}\right]\right|
\le\frac{2\xi_{\text{max}}}{b}\cdot\mathbb{P}_x[x\in I]\le\frac{2\xi_{\text{max}}}{b}\cdot\frac{|I|}{2\sqrt{2\pi}}
=\frac{8\xi_{\text{max}}^2}{b^2\sqrt{2\pi}}.
\end{align*}
By the inequality $(a+b)^2\le2a^2+2b^2$, we have
\begin{align*}
\frac{8\xi_{\text{max}}^2}{b^2\sqrt{2\pi}}\le\frac{128}{\alpha^2}+\frac{32\log(8/\delta)}{\alpha b^2\sqrt{n}}.
\end{align*}
Finally, by repeated application of the triangle inequality, we have
\begin{align*}
|\mathcal{P}(\hat\beta^{(\lambda)};\hat\beta^{(\lambda)\prime})-\mathcal{P}(\hat\beta^{(\lambda)\prime\prime};\hat\beta^{(\lambda)\prime\prime\prime})|
&\le\frac{24b}{\alpha}+12\sqrt{\frac{\log(8/\delta)}{\alpha\sqrt{n}}}+\frac{128}{\alpha^2}+\frac{32\log(8/\delta)}{\alpha b^2\sqrt{n}}.
\end{align*}

\paragraph{Winner's curse.}

Note that
\begin{align*}
\bar\Sigma_{\lambda}^{-1}
=\begin{bmatrix}
\frac{1}{1+\lambda} & 0 & 0 \\
0 & \frac{1+\lambda}{\lambda(2+\lambda)} & -\frac{1}{\lambda(2+\lambda)} \\
0 & -\frac{1}{\lambda(2+\lambda)} & \frac{1+\lambda}{\lambda(2+\lambda)}
\end{bmatrix},
\end{align*}
so
\begin{align*}
\bar\beta^{(\lambda)}
=(I-\lambda\bar\Sigma_{\lambda}^{-1})\beta^*
&=\begin{bmatrix}
1-\frac{\lambda}{1+\lambda} & 0 & 0 \\
0 & 1-\frac{1+\lambda}{2+\lambda} & \frac{1}{2+\lambda} \\
0 & \frac{1}{2+\lambda} & 1-\frac{1+\lambda}{2+\lambda}
\end{bmatrix}
\begin{bmatrix}
0 \\
b \\
0
\end{bmatrix}
=\begin{bmatrix}
0 \\
\frac{b}{2+\lambda} \\
\frac{b}{2+\lambda}
\end{bmatrix}
=\begin{bmatrix}
0 \\
\frac{b}{2+(1/n)} \\
\frac{b}{2+(1/n)}
\end{bmatrix}.
\end{align*}
Finally, we have $\pi^{\bar\beta^{(\lambda)}}(x)=\mathbbm{1}(x\ge0)$, so
\begin{align*}
\mathcal{R}(\bar\beta^{(\lambda)};\bar\beta^{(\lambda)})
&=\mathbb{E}_x\left[\frac{bx}{2+(1/n)}+\frac{x\mathbbm{1}(bx\ge0)}{2+(1/n)}\right]-\mathbb{E}\left[\frac{bx}{2+(1/n)}+\frac{btx}{2+(1/n)}\right] \\
&=\frac{b}{2+(1/n)}\cdot\mathbb{E}[x\mathbbm{1}(x\ge0)] \\
&=\frac{b}{(2+(1/n))\sqrt{2\pi}} \\
&\ge\frac{b}{2\sqrt{2\pi}}.
\end{align*}
The claim follows by the stability of $\mathcal{P}(\hat\beta^{(\lambda)};\hat\beta^{(\lambda)\prime})$ established above together with the fact that by our choice of $\beta^*$, for all $\beta\in\mathbb{R}^m$, we have $\mathcal{P}(\beta;\beta^*)=0$, so $\mathcal{R}(\beta;\beta')=\mathcal{P}(\beta;\beta')$ . \qed

\section{Details of Refugee Matching Case Study}

This section provides the implementation details for our refugee matching simulation environment. Wherever possible, we calibrated parameters, sample sizes, and marginal distributions using those reported for the US site in the Supplemental Material of~\cite{bansak2018improving}.

\subsection{Additional Details on~\cite{bansak2018improving} and~\cite{ahani2021placement}}
\label{sec:add_deets}

Recent work proposes to use historical data to design algorithmic assignment rules that improve short-run employment relative to existing practice.

Using U.S. and Swiss data,~\cite{bansak2018improving} train machine learning models to predict employment for each refugee--location pair using features such as country of origin, language, age, gender, education, and placement restrictions. In their main specification, they fit separate prediction models by location and select gradient-boosted trees based on out-of-sample classification accuracy and calibration. They then:

\begin{enumerate}
    \item estimate predicted employment probabilities for each refugee at each feasible location;
    \item aggregate individual predictions to a family-level score (for example, the predicted probability that at least one family member is employed at a location); and
    \item solve a linear assignment problem to maximize the average family-level score, subject to capacity and practical constraints.
\end{enumerate}

To evaluate the algorithm, they run backtests: train on earlier arrivals, treat later arrivals with no placement restrictions as a test set, and compare predicted employment under the algorithmic assignment to the observed employment rate under the historical assignment. Reported gains are large, between 40--75\% improvements in predicted employment relative to existing procedures.

Similarly,~\cite{ahani2021placement} develop Annie MOORE, an integrated machine learning and integer-optimization tool for a U.S.\ resettlement agency. They focus on ``free cases''---families without U.S. ties who can be assigned flexibly across affiliates. Using administrative data on demographics, household structure, origin, language, health indicators, and macroeconomic conditions, they train predictive models of employment for refugee-affiliate pairs.

They compare pooled logistic regression, affiliate-specific logistic regression, LASSO-logit with hand-specified interactions, and gradient-boosted trees. In their test set, LASSO and gradient-boosted trees achieve substantially lower misclassification error and higher AUC than simpler models; LASSO is chosen as the primary model because it combines good discrimination with reasonable calibration.

Given these predictions, they:

\begin{enumerate}
    \item treat the predicted employment probabilities as quality scores for each refugee--affiliate pair;
    \item solve a mixed-integer program that allocates refugees to affiliates, maximizing total predicted employment subject to capacity and operational constraints; and
    \item evaluate gains by comparing the sum of predicted employment under the optimized allocation to the observed employment under historical assignment, using the same LASSO model to impute counterfactual outcomes.
\end{enumerate}

These backtests suggest improvements in predicted employment on the order of 20--40\%. In an e-companion, they also conduct a bootstrap analysis: they draw many bootstrap samples of the training data, re-estimate the LASSO model on each, and evaluate the \emph{same} optimized allocation under each bootstrapped model. The resulting distribution of estimated gains is tight and positive, which they interpret as evidence of robustness.

\subsection{Covariate Marginals and Location Assignment Probabilities}
\label{apx:marginals}

\begin{table}
\centering \small
\begin{tabular}{c|c}
Covariate & Probability \\
\hline
\hline
$\text{Age}\sim \mathrm{Unif}(18,30)$& 0.44\\
$\text{Age}\sim \mathrm{Unif}(30,40)$& 0.28\\
$\text{Age}\sim \mathrm{Unif}(40,50)$& 0.16\\
$\text{Age}\sim \mathrm{Unif}(50,60)$& 0.12\\
\hline
Gender = Male & 0.53 \\
\hline
Education = None& 0.18\\
Education = Little& 0.39\\
Education = Secondary& 0.21\\
Education = Advanced& 0.10\\
Education = University& 0.12\\
\hline
Speaks English = True& 0.43 \\
\hline
Case Restriction = Free&  0.28\\
\hline
Origin = Burma& 0.23\\
Origin = Iraq& 0.2\\
Origin = Bhutan& 0.13\\
Origin = Somalia& 0.11\\
Origin = Afghanistan& 0.07\\
Origin = Dem. Rep. of Congo& 26/900\\
Origin = Iran& 26/900\\
Origin = Eritrea& 26/900\\
Origin = Ukraine& 26/900\\
Origin = Syria& 26/900\\
Origin = Sudan& 26/900 \\
Origin = Ethiopia& 26/900 \\
Origin = Moldova& 26/900 \\
Origin = Other& 26/900 \\
\hline
$\text{Arrival Year}\sim \mathrm{Unif} \{2011,\ldots,2015\}$& 20/23\\
Arrival Year = 2016& 3/23\\
\hline
$\text{Arrival Month}\sim \mathrm{Unif} \{\text{Jan},\ldots,\text{Sep}\}$& 6/69\\
$\text{Arrival Month}\sim \mathrm{Unif} \{\text{Oct},\text{Nov},\text{Dec}\}$& 5/69\\
\end{tabular}
\caption{Probability of covariates being assigned (independently) to each refugee, following the marginals reported in the Supplemental Material of~\cite{bansak2018improving}. For binary covariates only one option is reported.}
\label{tab:covar_margin}
\end{table}

\newpage

\begin{figure}[H]
\centering
\includegraphics[width=0.67\linewidth]{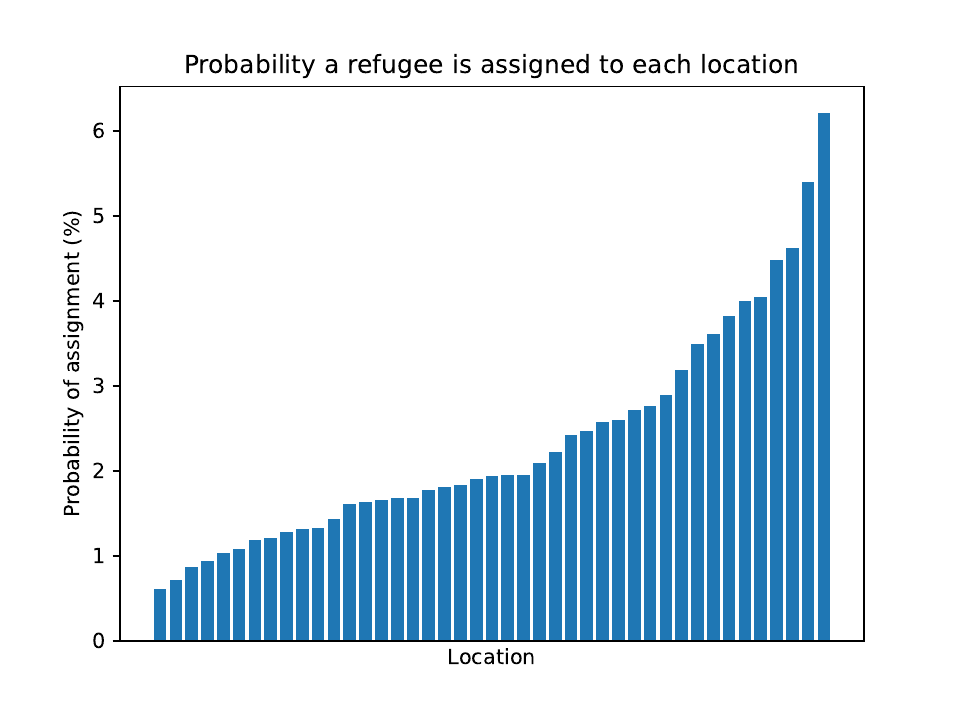}
\caption{Probability of each location being assigned to each refugee sample, sorted in ascending order. This distribution follows the empirical distribution reported in~\cite{bansak2018improving}.}
\label{fig:p_location}
\end{figure}

\subsection{Employment Rates by Location}
\label{apx:emp_loc}

\begin{figure}[H]
\centering
\includegraphics[width=0.67\linewidth]{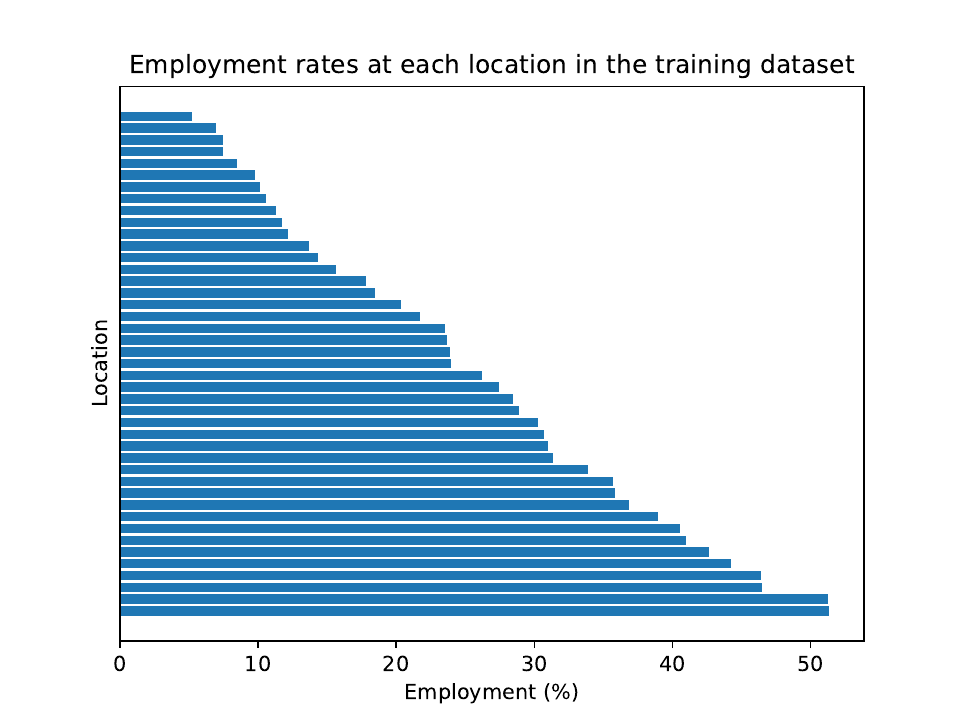}
\caption{The distribution of employment rates across location have been tuned to resemble those in~\cite{bansak2018improving} to maximize similarities between the simulation environment and real world data.}
\label{fig:emp_location}
\end{figure}

\subsection{Prediction Model Calibration and ROC Curves}
\label{apx:cal_ROC}

The calibration and ROC curves reflect that each prediction model is well-calibrated and makes fairly accurate predictions.

\begin{figure}[H]
\centering
\begin{subfigure}[b]{0.31\textwidth}
\centering
\includegraphics[width=\textwidth]{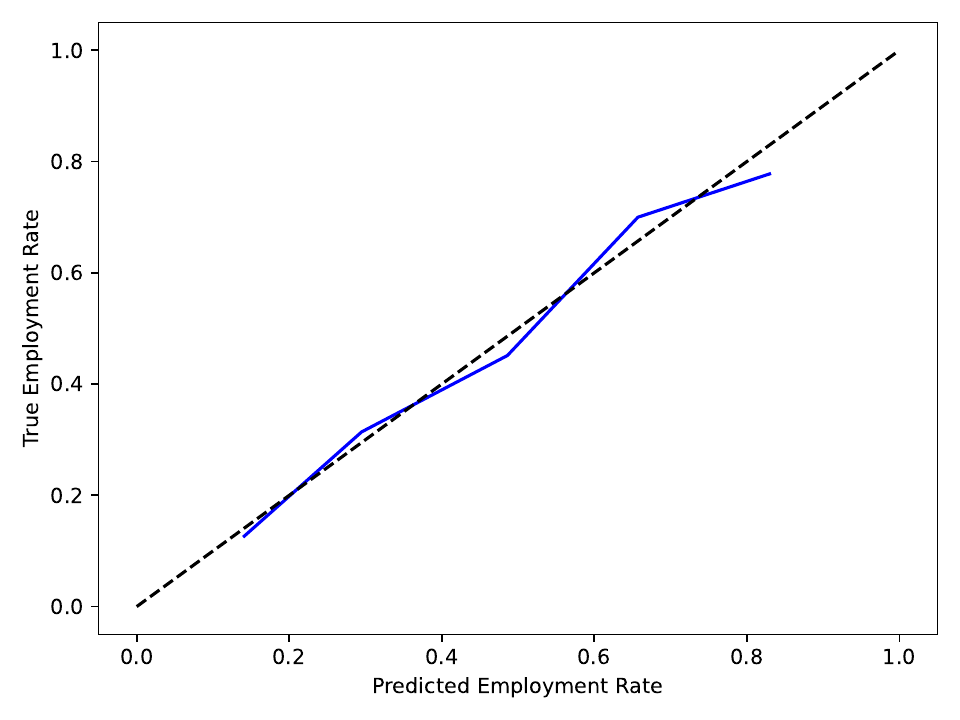}
\caption{LASSO}
\label{fig:cal_lasso}
\end{subfigure}
\hfill
\begin{subfigure}[b]{0.31\textwidth}
\centering
\includegraphics[width=\textwidth]{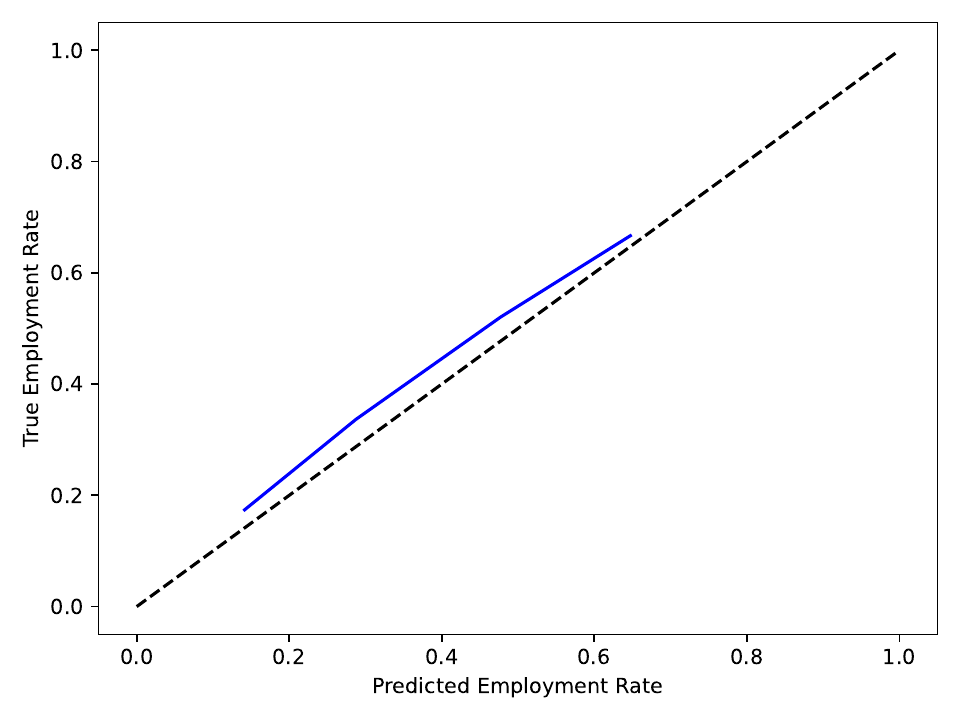}
\caption{Honest RF}
\label{fig:cal_hf}
\end{subfigure}
\hfill
\begin{subfigure}[b]{0.31\textwidth}
\centering
\includegraphics[width=\textwidth]{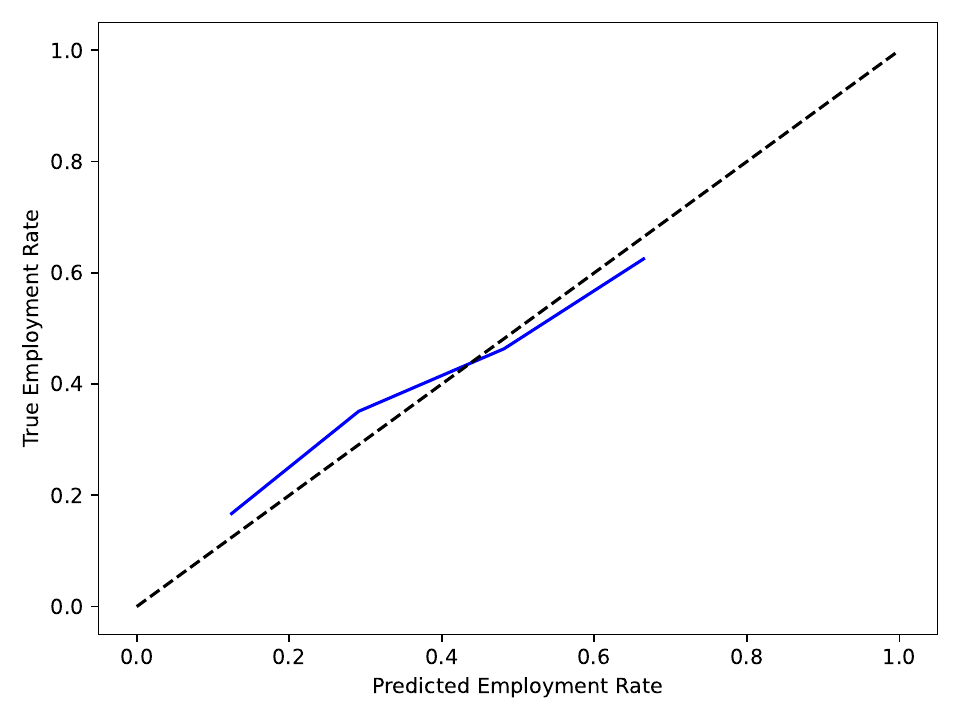}
\caption{GBM}
\label{fig:cal_gbm}
\end{subfigure}
\caption{Calibration curves for each prediction model.}
\label{fig:cal}
\end{figure}

\begin{figure}[H]
\centering
\begin{subfigure}[b]{0.31\textwidth}
\centering
\includegraphics[width=\textwidth]{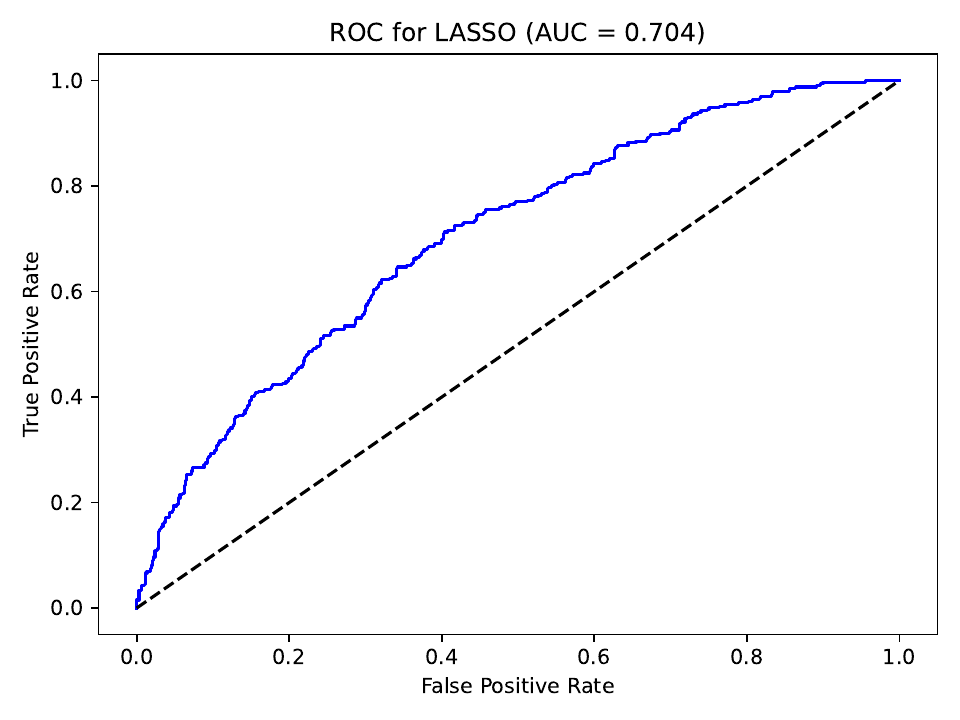}
\caption{LASSO (AUC = 0.704)}
\label{fig:roc_lasso}
\end{subfigure}
\hfill
\begin{subfigure}[b]{0.31\textwidth}
\centering
\includegraphics[width=\textwidth]{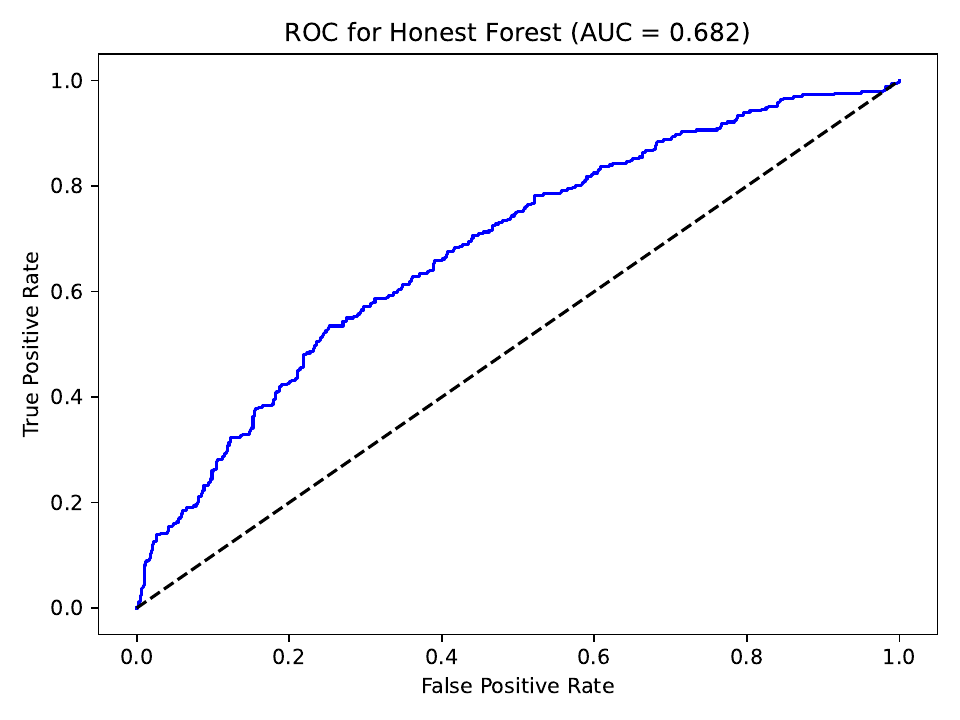}
\caption{Honest RF (AUC = 0.682)}
\label{fig:roc_hf}
\end{subfigure}
\hfill
\begin{subfigure}[b]{0.31\textwidth}
\centering
\includegraphics[width=\textwidth]{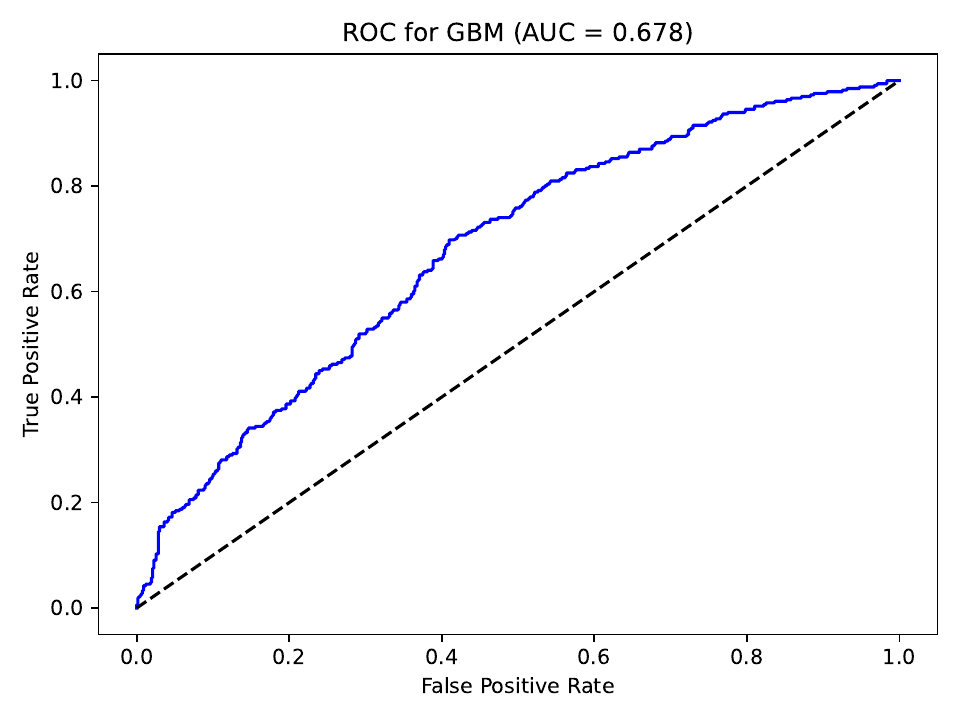}
\caption{GBM (AUC = 0.678)}
\label{fig:roc_gbm}
\end{subfigure}
\caption{ROC curves detailing the performance of each prediction model on the test dataset.}
\label{fig:ROC_Curves}
\end{figure}

\end{document}